\newtheorem{theorem}{Theorem}
\newtheorem{corollary}{Corollary}
\newtheorem{lemma}{Lemma}
\newtheorem{remark}{Remark}
\newtheorem{fact}{Fact}
\title{On the Fundamental Limits of Exact Inference in Structured Prediction}
\author{
    Hanbyul Lee\\
    Department of Statistics\\
    Purdue University\\
    West Lafayette, IN 47906, USA\\
    {\tt lee3078@purdue.edu}
  \and
    Kevin Bello\\
    Department of Computer Science\\
    Purdue University\\
    West Lafayette, IN 47906, USA\\
    {\tt kbellome@purdue.edu}
  \and
    Jean Honorio\\
    Department of Computer Science\\
    Purdue University\\
    West Lafayette, IN 47906, USA\\
    {\tt jhonorio@purdue.edu}
}
\date{}
\begin{document}

\maketitle

\begin{abstract}
Inference is a main task in structured prediction and it is naturally modeled with a graph.
In the context of Markov random fields, noisy observations corresponding to nodes and edges are usually involved, and the goal of exact inference is to recover the unknown true label for each node precisely.
The focus of this paper is on the fundamental limits of exact recovery irrespective of computational efficiency, assuming the generative process proposed by \citet{globerson2015hard}.
We derive the necessary condition for any algorithm and the sufficient condition for maximum likelihood estimation to achieve exact recovery with high probability,
and reveal that the sufficient and necessary conditions are tight up to a logarithmic factor for a wide range of graphs.
Finally, we show that there exists a gap between the fundamental limits and the performance of the computationally tractable method of \citet{bello2019exact}, which implies the need for further development of algorithms for exact inference.
\end{abstract}

\section{Introduction}
\label{sec:introduction}

Structured prediction, which is a supervised machine learning that involves structured objects such as sequences and trees, has been utilized in a wide range of domains including sociology, computer vision, natural language processing, and bioinformatics.
Examples of the structured prediction problem are community detection, part-of-speech tagging, protein folding, and image segmentation.
In various scenarios, one wishes to learn a model representing the interrelation of predicted variables implied by the structure, or is interested on inference after a model has been already learned.
Classical methods for learning include conditional random fields \citep{lafferty2001conditional} and structured support vector machines \citep{taskar2004max, tsochantaridis2005large}.
In the present paper, we focus on the inference problem.

A great deal of work has addressed the statistical inference over graphs in recent years \citep{chandrasekaran2008complexity, fortunato2010community, abbe2014decoding}.
Typical inference problems in the context of Markov random fields (MRFs) aim to infer the unknown true label corresponding to each node of a given graph, where noisy observations are provided for the labels assigned to edges, or to both edges and nodes. 
One concrete example is to recover individual opinions in a social network, where one receives noisy measurements of whether two connected individuals have the same opinion or not, and obtains noisy estimates of individual opinions \citep{foster2018inference}.
Assuming a simple generative model suggested in \citet{globerson2015hard},
we consider two regimes, one in which only noisy edge observations are given, and one regime in which noisy edge and node observations are collectively provided.

To solve the above inference problems, computationally efficient algorithms have been studied over the past few years.
\citet{globerson2015hard} presented tight upper and lower bounds of minimum-achievable Hamming error which can be attained by a polynomial-time algorithm for 2D grid graphs.
\citet{foster2018inference} also developed a polynomial-time solvable algorithm which is based on tree decompositions and can be applied to more general graphs.
While the aforementioned works focused on approximate inference, \citet{bello2019exact} studied the sufficient conditions for exact inference in polynomial time and provided high probability results for general families of graphs.
However, the current research works are mainly motivated by computational considerations,
and little analysis has been devoted to the statistical complexity of the inference problem irrespective of computational efficiency.

Analyzing the information-theoretic limits associated with the performance of any algorithm is crucial to understand the statistical complexity of the inference problem, as discussed in different contexts \citep{chen2014statistical, banks2016information, abbe2017community}.
In particular, establishing the information-theoretic lower and upper bounds is instrumental in the development of algorithms.
If an existent method is computationally tractable and achieves the fundamental limits, then there is little point in suggesting a new algorithm,
and if there currently exists a gap between the performance of computationally tractable methods and the fundamental limits, then this situation encourages further elaboration of algorithms.

With this motivation, we develop the fundamental limits of the aforementioned exact inference problems,
and compare the limit bounds to those of the currently existent polynomial-time algorithm for exact inference \citep{bello2019exact}.
The information-theoretic limits of a similar problem were studied in \citet{chen2016information}, where only a pairwise difference measurement corresponding to each edge was assumed to be given.
In contrast to \citet{chen2016information}, we consider the node estimate as well as the edge measurement.

The main contribution of this paper consists of providing the necessary condition for any algorithm and the sufficient condition for the optimal strategy (the MLE algorithm) to exactly recover the true unknown labels with probability $1-o(n)$, where $n$ is the number of nodes of a given graph.
We find that the conditions involve multiple graphical factors such as the number of edges, the maximum degree of the graph, and the Cheeger constant.
Our results apply to general graphs, and especially to complete graphs, regular expanders and star graphs, we show that the sufficient and necessary conditions are tight up to a logarithmic factor.
Furthermore, we reveal that the error bound of the MLE algorithm decays much faster than that of the polynomial-time algorithm in \citet{bello2019exact},
eliciting a gap between the optimal method and the currently existent tractable algorithm.

The remainder of this paper is organized as follows. 
In Section \ref{sec:preliminaries}, we describe the formal problem setup and introduce terminology and notation.
We develop sufficient and necessary conditions for exact recovery where only edge observations are given in Section \ref{sec:case_that_only_edge}, and where edge and node observations are both given in Section \ref{sec:case_that_edge_and_node}. 
In both sections, we provide an illustration of the fundamental limit bounds for a few examples of graphs.
Section \ref{sec:concluding_remarks} concludes the paper with a summary of our findings and a discussion of future works. The proofs of the theorems are deferred to the appendices.

\section{Preliminaries}
\label{sec:preliminaries}

We first introduce the inference problem, graph terminology and notation used throughout the paper.

\subsection{Exact Inference Problem}
\label{subsec:exact_inference_problem}

We assume that there is a known undirected connected graph $\mathcal{G}=(\mathcal{V}, \mathcal{E})$ with $n$ nodes, where $\mathcal{V}=\{1,\dots,n\}$. 
Each node has an unknown true label $y_i^* \in \{-1, 1\}$, $i\in\mathcal{V}$, and we denote the true label vector by $\pmb{y}^* = (y_1^*, \cdots, y_n^*)^\top$. 
We suppose that nature picks $\pmb{y}^*$ from a uniform distribution with support $\mathcal{Y} = \{-1, 1\}^n$, $|\mathcal{Y}| = 2^n$.
A set of noisy observations $X\in \mathcal{X}$ and $\pmb{c} \in \mathcal{C} = \{-1, 1\}^n$, where $X$ and $\pmb{c}$ correspond to edges and nodes respectively, is assumed to be generated from $\pmb{y}^*$ by the following process:
\begin{gather*}
\mathbb{P}\big(X_{ij}=y_i^*y_j^*\big|\pmb{y}^*\big) 
= 1-\mathbb{P}\big(X_{ij}=-y_i^*y_j^*\big|\pmb{y}^*\big) = 1-p 
\\
\mathbb{P}\big(c_{k}=y_k^*\big|\pmb{y}^*\big) 
= 1-\mathbb{P}\big(c_{k}=-y_k^*\big|\pmb{y}^*\big) = 1-q
\end{gather*}
for $(i, j)\in \mathcal{E}$ and $k\in\mathcal{V}$.
The parameters $p$ and $q$ are fixed and between $0$ and $\frac{1}{2}$.
Note that we set $X$ as an $n\times n$ upper triangular matrix with $X_{ij}=0$ for $(i,j) \notin \mathcal{E}$, so that the cardinality of $\mathcal{X}$ is $|\mathcal{X}| = 2^{|\mathcal{E}|}$.
We consider two regimes where (1) only noisy edge observation $X$ is given and (2) noisy edge and node observations $X$ and $\pmb{c}$ are both given.
The goal is to exactly recover the true label vector $\pmb{y}^*$ (i.e., with zero Hamming error) in each regime.

\subsection{Graph Terminology}
\label{subsec:graph_terminology}

We denote the {\it degree of $i$th node} by $\Delta_i$, and the {\it maximum degree of nodes} by $\Delta_{\max}:=\underset{i\in\mathcal{V}}{\max}~\Delta_i$.
For any subset $S\subseteq\mathcal{V}$, 
$S^c$ indicates its complement and
we let $\mathcal{E}(S, S^c)$ denote the collection of all edges going from a node in $S$ to a node outside of $S$, i.e., $\mathcal{E}(S, S^c) := \{(i,j)\in\mathcal{E} : i\in S, j\in S^c ~\text{or}~ i\in S^c, j\in S\}$. 
$|\mathcal{E}(S, S^c)|$ represents the number of edges between $S$ and $S^c$.
The {\it Cheeger constant} of a graph $\mathcal{G}$ is defined as
$
\phi_{\mathcal{G}} := \underset{S\subseteq \mathcal{V}, 1\leq|S|\leq {_\lfloor}\frac{n}{2}{_\rfloor}}{\min} \frac{|\mathcal{E}(S, S^c)|}{|S|}
$, which is also called {\it edge expansion}.

In what follows, we introduce several classes of graphs which are taken as examples in this paper.
\begin{enumerate}
    \item {\it Complete graph}: A graph is said to be complete if every pair of distinct nodes is connected by an edge.
    \item {\it Chain graph}: A chain graph is a sequence of nodes connected by edges. The length of a chain is the number of edges, which is the number of nodes minus one.
    \item {\it Star graph}: A star graph has one internal node which is connected to all the others. The other nodes are not connected to each other.
    \item {\it Regular expander}: A $d$-regular expander with constant $C>0$ is a graph whose nodes have the same degree $d$, and which satisfies that for every subset $S\subseteq\mathcal{V}$ with $|S|\leq \frac{n}{2}$, $|\mathcal{E}(S, S^c)| \geq C d |S|$.
\end{enumerate}
A chain graph and a star graph are examples of \emph{tree graphs}, in which any two nodes are connected by exactly one path of edges. 
Note that a graph is a tree graph if and only if a graph has exactly $n-1$ edges. 
Key graphical metrics of the above example graphs are summarized in Table \ref{tab:graph_examples}.

\begin{table}[t]
\caption{Graphical Metrics of Example Graphs.}
\label{tab:graph_examples}
\vskip 0.15in
\begin{center}
\begin{small}
\begin{sc}
\begin{tabular}{lccc}
\toprule
Type of Graph & $|\mathcal{E}|$ & $\Delta_{\max}$ & $\phi_{\mathcal{G}}$\\
\midrule
Complete graph& $\binom{n}{2}$ & $n-1$ & $n/2$ \\
Chain graph& $n-1$ & $2$ & $2/n$ \\
Star graph& $n-1$ & $n-1$ & $1$ \\
$d$-regular expander & $nd/2$ & $d$ & $\geq Cd$ \\ 
\bottomrule
\end{tabular}
\end{sc}
\end{small}
\end{center}
\vskip -0.1in
\end{table}

\subsection{Other Notation}

The number $e$ and $\log(\cdot)$ indicate Euler's number and the natural logarithm.
The function $H^*(p):= -p\log p - (1-p)\log(1-p)$ stands for the binary entropy function.
$\mathbb{E}_W[\cdot]$ represents the expectation with respect to a random variable $W$.
The indicator function is denoted by $\mathbb{I}[\cdot]$.
We denote by $a\wedge b$ and $a \vee b$ the minimum and maximum between $a$ and $b$, respectively, where $a$ and $b$ are two scalars.
We use $C$, $C'$, $C'', \dots$ to denote universal constants independent of ($n, p, q, |\mathcal{E}|, \phi_{\mathcal{G}}$).
The notation $f(n) = o(g(n))$ means $\lim_{n\rightarrow\infty}f(n)/g(n) = 0$;
$f(n) = \Omega(g(n))$ means that there exists a constant $C$ such that $f(n)\geq C g(n)$ asymptotically;
$f(n) = O(g(n))$ means that there exists a constant $C$ such that $f(n)\leq C g(n)$ asymptotically;
$f(n) = \Theta(g(n))$ means that there exists constants $C$ and $C'$ such that $C g(n) \leq f(n)\leq C' g(n)$ asymptotically.

\section{Regime I: Edge Observations Only}
\label{sec:case_that_only_edge}

Now we derive the information-theoretic limits of exact recovery where only noisy edge observations are given.
We first introduce the lower bound of the minimax error probability and the lower bound of the probability of success of the maximum likelihood estimator which is a minimax optimal estimator.
From the two results, we derive the necessary and the sufficient conditions of exact recovery regardless of its computational complexity.
Then we illustrate the limit bounds for several examples of graphs.

\subsection{Minimax Lower Bound}
\label{subsec:minimax1}

We first develop the lower bound of the probability for any algorithm to fail to exactly recover the true label vector $\pmb{y}^*$.

\begin{theorem}
\label{theorem1}
Let $\mathcal{G} = (\mathcal{V}, \mathcal{E})$ be an undirected connected graph with $n$ nodes, Cheeger constant $\phi_{\mathcal{G}}$, and maximum node degree $\Delta_{\max}$.
Consider a family of distributions $\mathcal{P}$ over $\mathcal{Y} \times \mathcal{X}$.
Then, for the inference problem in Section \ref{subsec:exact_inference_problem} in which only the noisy edge observation $X\in\mathcal{X}$ is given,
the minimax probability of error is bounded below as follows:
$$
\inf_{ \hat{\pmb{y}} : \mathcal{X} \to \mathcal{Y} }
\sup_{ P \in\mathcal{P} }
\mathbb{P}_{ (\pmb{y}^*, X) \sim P } \big[ \hat{\pmb{y}} (X) \neq \pmb{y}^* \big]
\geq
\max\big\{ f_1, g_1, g^*_1 \big\}
$$
where
\begin{align*}
f_1
&:=
\frac{1}{2} 
\sum_{m=0}^{\Delta_{\max}}
\binom{\Delta_{\max}}{m}
\bigg[\Big\{ (1-p)^m p^{\Delta_{\max}-m} \Big\}
\\&~~~~~~~~~~~~~~~~~~~~~~~~~~~\wedge
\Big\{ p^m (1-p)^{\Delta_{\max}-m} \Big\}\bigg],
\\[0.5em]
g_1
&:=\frac{n-1}{n} - \frac{|\mathcal{E}|}{n}\cdot 
\bigg(1 - \frac{H^*(p)}{\log 2} \bigg),
\\[0.5em]
g^*_1
&:= g_1 +\frac{\big(|\mathcal{E}|\log 2-\kappa_1\big)\vee 0 }{n\log 2}\cdot \mathbb{I}\big[(1-p)^{|\mathcal{E}|} \leq e^{-1}\big].
\end{align*}
Here,
$\kappa_1 := -|\mathcal{E}|\log(1-p) \{2(1-p)\}^{|\mathcal{E}|}  \mathbb{E}_B \big[\frac{\tau(B)}{2^n}\big]
+ \{2(1-p)\}^{|\mathcal{E}|}
\mathbb{E}_B \big[-\frac{\tau(B)}{2^n} \log\big(\frac{\tau(B)}{2^n}\big)\big]$.
In addition, $\tau(B) := 2\big(\frac{p}{1-p}\big)^B
+
\sum_{k=1}^{n-1} 
\binom{n}{k}
\big(\frac{p}{1-p}\big)^{( \phi_{\mathcal{G}} \cdot\{k\wedge (n-k)\}
- B )\vee 0}$
and $B\sim \text{Bin}(|\mathcal{E}|-n+1,~ \frac{1}{2})$.
\end{theorem}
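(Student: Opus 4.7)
The bound to prove is the maximum of three quantities, each proved by a different technique, so I will prove each in turn.

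For $f_1$, the plan is Le Cam's two-point method. Fix a node $i^*$ achieving the maximum degree $\Delta_{\max}$ and let $\pmb{y}^{(+)}, \pmb{y}^{(-)}$ be the two labelings that agree off $i^*$ and take opposite signs at $i^*$. The edge products $y_iy_j$ for the two hypotheses differ on exactly the $\Delta_{\max}$ edges incident to $i^*$, so the minimax testing error between them equals $\tfrac12 \sum_x \min\{\mathbb{P}(x\mid\pmb{y}^{(+)}), \mathbb{P}(x\mid\pmb{y}^{(-)})\}$. Stratifying this sum by the number $m$ of those $\Delta_{\max}$ incident edges whose observation matches $\pmb{y}^{(+)}$ (after the non-incident edge factors cancel, as they are common to both likelihoods) produces precisely the closed form of $f_1$.

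For $g_1$, I apply Fano's inequality under the uniform prior on $\mathcal{Y}$. Conditional on $\pmb{y}^*$ the observations on distinct edges are independent Bernoulli flips, so $H(X \mid \pmb{y}^*) = |\mathcal{E}| H^*(p)$, and combined with the trivial bound $H(X) \leq |\mathcal{E}|\log 2$ this yields $I(X;\pmb{y}^*) \leq |\mathcal{E}|(\log 2 - H^*(p))$. Plugging into $P_e \geq 1 - (I(X;\pmb{y}^*)+\log 2)/(n\log 2)$ gives $g_1$.

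For $g_1^*$, the plan is to refine the entropy bound to $H(X) \leq \kappa_1$ using the Cheeger constant. The key combinatorial step is that for any reference $y_0 \in \mathcal{Y}$ and any $y$ at Hamming distance $k\in\{1,\ldots,n-1\}$ from $y_0$, the edge-disagreement set $A(y,y_0) = \{(i,j)\in\mathcal{E}: y_iy_j \neq y_{0,i}y_{0,j}\}$ coincides with the cut $\mathcal{E}(S, S^c)$ for the disagreement set $S$, so $|A(y,y_0)| \geq \phi_{\mathcal{G}}(k \wedge (n-k))$. Writing $C(X,y_0) = \{(i,j)\in\mathcal{E}: X_{ij}\neq y_{0,i}y_{0,j}\}$ and using $d(y,X) = |A(y,y_0)\triangle C(X,y_0)|$ gives $d(y,X) \geq (\phi_{\mathcal{G}}(k\wedge(n-k)) - d(y_0,X))_+$. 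Summing $(p/(1-p))^{d(y,X)}$ over $y$ stratified by $k$, with the factor $2$ at $k\in\{0,n\}$ coming from the pair $\pm y_0$ (both inducing the same edge products), yields the pointwise bound $\mathbb{P}(X) \leq (1-p)^{|\mathcal{E}|}\tau(d(y_0,X))/2^n$. To convert this into an upper bound on $H(X)$ of the specified form $\kappa_1$, I plan to use the cross-entropy inequality $H(X) \leq -\mathbb{E}_\mathbb{P}[\log Q(X)]$ with an auxiliary distribution $Q$ built from $\tau(B(\cdot))$ through a spanning-tree decomposition $X = (X_T, X_{\bar T})$: the labeling is determined (up to global sign) by $X_T$, while the $|\mathcal{E}|-n+1$ non-tree observations carry the residual randomness, and replacing the true (non-uniform) distribution of the cycle-product pattern on $\bar T$ by its worst case (uniform) produces the $\text{Bin}(|\mathcal{E}|-n+1,\tfrac12)$ weights in $\kappa_1$. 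The factor $\{2(1-p)\}^{|\mathcal{E}|}$ enters from the normalization of $Q$, while the indicator $\mathbb{I}[(1-p)^{|\mathcal{E}|}\leq e^{-1}]$ identifies the regime in which $|\mathcal{E}|\log 2 - \kappa_1 \geq 0$, so that the refined bound is a strict improvement over $g_1$. Plugging $H(X) \leq \kappa_1$ into Fano produces $g_1^*$.

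The main obstacle is Part 3: converting the pointwise bound $\mathbb{P}(X) \leq (1-p)^{|\mathcal{E}|}\tau(d(y_0,X))/2^n$ into a clean upper bound on $H(X)$ of the prescribed form requires a careful choice of the auxiliary distribution and delicate bookkeeping to isolate the $\text{Bin}(|\mathcal{E}|-n+1,\tfrac12)$ weights through the spanning-tree/cycle-space structure of $\mathcal{G}$. Parts 1 and 2 are relatively routine applications of Le Cam and Fano, respectively, while keeping the refined bound tight enough to match (up to logarithmic factors) the later sufficient conditions for the complete graph, regular expanders, and star graph will be the secondary technical challenge.
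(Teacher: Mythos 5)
Your treatments of $f_1$ and $g_1$ are correct and essentially coincide with the paper's: the paper invokes Assouad's lemma with the zero--one loss split into $n$ identical pseudo-distances, which degenerates to exactly your two-point bound $\tfrac12\sum_X \min\{p(X\mid\pmb{y}^{(+)}),p(X\mid\pmb{y}^{(-)})\}$ at a single flipped coordinate (the paper takes the minimum of this affinity over all nodes and shows, via a monotonicity lemma, that it is attained at the maximum-degree node, whereas you select that node directly; either way the resulting bound is $f_1$). The $g_1$ step via $\mathbb{I}(\pmb{y}^*,X)=H(X)-|\mathcal{E}|H^*(p)\leq|\mathcal{E}|(\log 2-H^*(p))$ and Fano is identical to the paper's.

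The gap is in the last step of your $g_1^*$ argument. Your combinatorial skeleton is right and matches the paper: the cut identity, the Cheeger lower bound $|A(\pmb{y},\pmb{y}_0)|\geq\phi_{\mathcal{G}}(k\wedge(n-k))$, the reverse triangle inequality, and the pointwise bound $p(X)\leq u_X:=\tfrac{1}{2^n}(1-p)^{|\mathcal{E}|}\tau(m_X)$ with $m_X=d(X_{\mathcal{E}_2},Y'_{\mathcal{E}_2})$. But your proposed conversion of this into $H(X)\leq\kappa_1$ via the cross-entropy inequality $H(X)\leq-\mathbb{E}_P[\log Q(X)]$ with an auxiliary distribution $Q$ does not go through: the quantities $u_X$ are pointwise \emph{upper} bounds on $p(X)$, so $\sum_X u_X\geq 1$ (typically strictly), and they cannot be normalized into a $Q$ without destroying the form of $\kappa_1$; moreover $\kappa_1$ equals $\sum_X(-u_X\log u_X)$, not a cross-entropy $-\sum_X p(X)\log u_X$. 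The mechanism the paper uses (and the one you need) is elementary monotonicity: $t\mapsto -t\log t$ is increasing on $(0,e^{-1}]$, so whenever $p(X)\leq u_X\leq e^{-1}$ one has $-p(X)\log p(X)\leq -u_X\log u_X$, and summing over $X$ --- with the count that exactly $2^{n-1}\binom{|\mathcal{E}|-n+1}{m}$ configurations $X$ share each value $m_X=m$ --- yields $H(X)\leq\kappa_1$ with the $\mathrm{Bin}(|\mathcal{E}|-n+1,\tfrac12)$ weights arising purely from this counting, not from any worst-case replacement of a cycle-space distribution. This also explains the indicator: $\mathbb{I}[(1-p)^{|\mathcal{E}|}\leq e^{-1}]$ is precisely the condition guaranteeing $u_X\leq(1-p)^{|\mathcal{E}|}\leq e^{-1}$ so that the monotonicity step is valid; it is not, as you state, the regime where $|\mathcal{E}|\log 2-\kappa_1\geq 0$ (that case distinction is handled separately by the $\vee\,0$ in the statement). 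With the cross-entropy route replaced by this monotonicity argument, and with the realizable patterns $X\in\mathcal{Y}_{\mathcal{E}}$ folded in as the $m=0$ term, your Part 3 would close.
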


The bound of $f_1$ is derived from Assouad's lemma, and $g_1$ and $g_1^*$ are obtained from Fano's inequality.
When applying Fano's inequality, we use two different strategies to bound the mutual information and then derive $g_1$ and $g_1^*$ from them respectively.
See Appendix \ref{proof_theorem1} for the details.

\begin{figure}[t]
\vskip 0.2in
\begin{center}
\centerline{\includegraphics[width=2.9in]{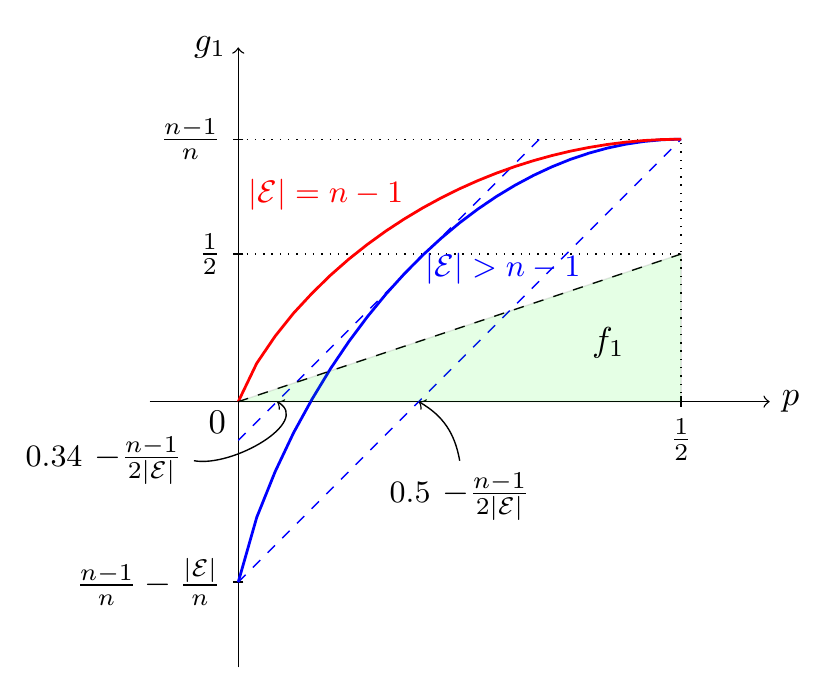}}
\caption{
Graphical illustration of the functions $f_1$ and $g_1$ with respect to the parameter $p\in[0,\frac{1}{2}]$ in Theorem \ref{theorem1}.
The green triangle indicates the area where the function $f_1$ has its function value.
The red and blue lines represent the function $g_1$ when $|\mathcal{E}| = n-1$ and $|\mathcal{E}| > n-1$, respectively.
The function $g_1$ always has a positive value when $|\mathcal{E}| = n-1$,
and when $|\mathcal{E}| > n-1$, $g_1$ has its $y$-intercept at $\frac{n-1}{n}-\frac{|\mathcal{E}|}{n}$ and its $x$-intercept between $0.34 - \frac{n-1}{2|\mathcal{E}|}$ and $0.5 - \frac{n-1}{2|\mathcal{E}|}$.
}
\label{fig:minimax1}
\end{center}
\vskip -0.2in
\end{figure}

From Figure \ref{fig:minimax1}, we can verify which of the functions $f_1$ or $g_1$ achieves the maximum in different situations.
For instance, when $|\mathcal{E}| = n-1$, i.e., the graph $\mathcal{G}$ has a tree structure, the function $g_1$ is always greater than $f_1$.
If $|\mathcal{E}|$ gets larger than $n-1$, then the function $g_1$ has a negative value when $p$ is small, therefore $f_1$ achieves the maximum for small $p$.
As $\frac{n-1}{|\mathcal{E}|}$ decreases, the range of $p$ where $f_1 > g_1$ holds gets larger.

The function $g_1^*$ is always greater than or equal to $g_1$ since the added value to $g_1$ in $g_1^*$ is non-negative.
In most cases, $g_1$ and $g_1^*$ are the same.
Exceptionally, when $|\mathcal{E}| > n-1$ (i.e., $\mathcal{G}$ is not a tree graph), $n$ is small and $p$ is large, $g_1^*$ becomes greater than $g_1$, which can be also verified from the illustration in Section \ref{subsec:illustration1}.

We add a remark about the possibility to improve the limit bound in Theorem \ref{theorem1}.

\begin{remark}
We can observe that $\kappa_1$ decreases as $\phi_{\mathcal{G}}$ increases when $(1-p)^{|\mathcal{E}|} \leq e^{-1}$ holds.
Hence, $g_1^*$ increases as $\phi_{\mathcal{G}}$ increases, which does not match with the intuition that 
exact recovery is easier for a graph $\mathcal{G}$ with larger Cheeger constant.
This implies that there is room for improvement on the bound in Theorem \ref{theorem1}.
\end{remark}

Now, we derive a corollary from Theorem \ref{theorem1}, which induces a necessary condition for any algorithm to exactly recover the true label $\pmb{y}^*$.

\begin{corollary}[Necessary Condition]
\label{corollary1}
Under the same assumptions as in Theorem \ref{theorem1}, if the following condition holds:
\begin{equation}
\label{eq:corollary1}
|\mathcal{E}|
\cdot \bigg(1 - \frac{H^*(p)}{\log 2} \bigg) \leq \frac{n}{2}-1,    
\end{equation}
then 
$\underset{\hat{\pmb{y}}:\mathcal{X} \to \mathcal{Y}}{\inf}
\underset{ P \in\mathcal{P} }{\sup}
\mathbb{P}_{ (\pmb{y}^*, X) \sim P } \big[ \hat{\pmb{y}} (X) \neq \pmb{y}^* \big] \geq \frac{1}{2}$,
that is, any algorithm fails to exactly recover the true node labels with probability greater than half.
\end{corollary}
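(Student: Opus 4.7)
The plan is to invoke Theorem~\ref{theorem1} directly and observe that the stated condition is exactly the algebraic rearrangement of the inequality $g_1 \geq \tfrac{1}{2}$. Since Theorem~\ref{theorem1} already gives
$$
\inf_{\hat{\pmb{y}}}\sup_{P \in \mathcal{P}} \mathbb{P}[\hat{\pmb{y}}(X) \neq \pmb{y}^*] \geq \max\{f_1, g_1, g_1^*\} \geq g_1,
$$
it suffices to exhibit the equivalence between the hypothesis of the corollary and the inequality $g_1 \geq \tfrac{1}{2}$.

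The key computation is: starting from $g_1 = \tfrac{n-1}{n} - \tfrac{|\mathcal{E}|}{n}\bigl(1 - \tfrac{H^*(p)}{\log 2}\bigr) \geq \tfrac{1}{2}$, multiply through by $n$ and rearrange to obtain
$$
|\mathcal{E}|\cdot\bigg(1 - \frac{H^*(p)}{\log 2}\bigg) \leq n - 1 - \frac{n}{2} = \frac{n}{2} - 1,
$$
which is precisely condition (\ref{eq:corollary1}). Chaining this with the inequality from Theorem~\ref{theorem1} yields the conclusion.

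There is essentially no obstacle here; this is a one-line algebraic corollary whose sole purpose is to expose the necessary-condition interpretation of the Fano-type bound $g_1$. The only mildly non-routine choice is that we drop the $f_1$ and $g_1^*$ terms from the maximum and keep only $g_1$: this is justified because $g_1$ is the cleanest term to invert into a condition on $(|\mathcal{E}|, p, n)$, and also because $g_1 \leq g_1^*$ always, so any region in which $g_1 \geq \tfrac{1}{2}$ is automatically captured by $g_1^*$ as well. Consequently the statement (and its proof) do not lose anything by using $g_1$ alone.
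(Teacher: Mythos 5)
Your proposal is correct and is essentially identical to the paper's own proof: both simply lower-bound the minimax error by the $g_1$ term from Theorem~\ref{theorem1} and observe that condition (\ref{eq:corollary1}) is the exact algebraic rearrangement of $g_1 \geq \tfrac{1}{2}$. The remark that keeping only $g_1$ loses nothing mirrors the paper's parenthetical note that $g_1^* \geq g_1$ and $f_1 \leq \tfrac{1}{2}$.
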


\begin{proof}
Since $g_1^*$ is always greater than or equal to $g_1$ and $f_1$ is smaller than half,
it is sufficient to derive the condition in which $g_1$ becomes larger than half.
\end{proof}

\begin{figure}[t]
\vskip 0.2in
\begin{center}
\centerline{\includegraphics[width=2.6in]{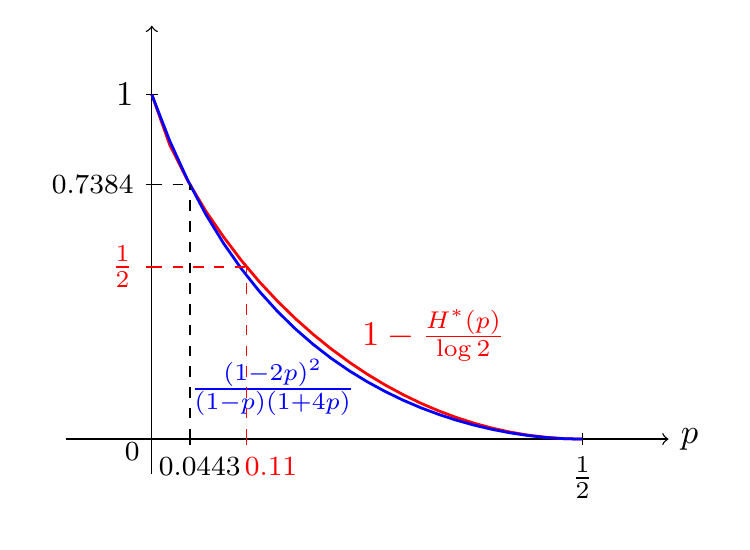}}
\caption{
Graphical illustration of the functions $1 - \frac{H^*(p)}{\log 2}$ (red line) and $\frac{(1-2p)^2}{(1-p) (1 + 4p)}$ (blue line) in Corollary \ref{corollary1} and \ref{corollary2}, respectively. 
Two functions are close to each other on the domain $p\in[0,\frac{1}{2}]$. 
When $p \leq 0.0443$, $\frac{(1-2p)^2}{(1-p) (1 + 4p)}$ is slightly greater than $1 - \frac{H^*(p)}{\log 2}$, and the reverse holds when $p > 0.0443$.
}
\label{fig:match}
\end{center}
\vskip -0.2in
\end{figure}

Note that the opposite of the condition (\ref{eq:corollary1}) provides the necessary condition for exact recovery.
The red line in Figure \ref{fig:match} graphically illustrates the function $1 - \frac{H^*(p)}{\log 2}$, which decreases from $1$ to $0$ as $p$ increases from $0$ to $1/2$.
This implies that it becomes harder to satisfy the necessary condition as $p$ gets close to $1/2$.
On the other hand, we can observe that the necessary condition holds for any graph if $1 - \frac{H^*(p)}{\log 2} > \frac{1}{2}$ (i.e., $p$ is smaller than $0.11$), 
because $|\mathcal{E}|\cdot\big(1 - \frac{H^*(p)}{\log 2} \big)
> \frac{n-1}{2} > \frac{n}{2}-1$.

\subsection{Maximum Likelihood Estimator}
\label{subsec:mle1}

Given the noisy edge observations only, the Maximum Likelihood Estimator (MLE) of the model presented in Section \ref{subsec:exact_inference_problem} is derived as follows:
\begin{equation}
\hat{\pmb{y}}_{mle}(X) := \underset{\pmb{y}\in \mathcal{Y}}{\arg\max}~~~ \pmb{y}^T X\pmb{y}. 
\label{eq:mle1}
\end{equation}
The optimization problem (\ref{eq:mle1}) is known to be NP-hard, so it cannot be used in practice.
However, the MLE method is a minimax-optimal strategy in our problem
\footnote{The Bayes estimator for the zero-one loss is the Maximum A-Posteriori (MAP) estimator. Since we assume that $\pmb{y}^*$ is uniformly distributed, the Bayes estimator is minimax optimal and the MLE is equivalent to the MAP estimator. Hence, the MLE is minimax optimal. See \citet{bickel2015mathematical} for details.}, 
so it serves as a benchmark for tractable algorithms.

We first derive the lower bound of the probability for the MLE algorithm to exactly recover the true label $\pmb{y}^*$ up to a global flip of the labels.

\begin{theorem}
\label{theorem2}
Let $\mathcal{G} = (\mathcal{V}, \mathcal{E})$ be an undirected connected graph with $n$ nodes and Cheeger constant $\phi_{\mathcal{G}}$.
Let $h_1(p, z) := 
\exp \Big[ - \frac{ (1-2p)^2z }{ \frac{4}{3}(1-p) (1 + 4p)} \Big]$.
Then, for the inference problem in Section \ref{subsec:exact_inference_problem} in which only the noisy edge observation $X$ is given,
the MLE algorithm in (\ref{eq:mle1})
returns a solution $\hat{\pmb{y}}_{mle}(X) \in \{\pmb{y}^*, -\pmb{y}^*\}$
with probability at least
$$
1 - \sum_{k=1}^{\lfloor \frac{n}{2}\rfloor} \binom{n}{k} h_1(p, \phi_{\mathcal{G}}k).
$$
\end{theorem}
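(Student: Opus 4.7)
The plan is to bound the MLE failure event by a union bound over candidate label vectors, parameterized by their flip set $S = \{i : y_i \neq y_i^*\}$. The failure $\hat{\pmb{y}}_{mle}(X) \notin \{\pmb{y}^*, -\pmb{y}^*\}$ is equivalent to the existence of some $\pmb{y}$ with $1 \leq |S| \leq n-1$ satisfying $\pmb{y}^\top X \pmb{y} \geq (\pmb{y}^*)^\top X \pmb{y}^*$. A direct edge-by-edge computation yields
\[
(\pmb{y}^*)^\top X \pmb{y}^* - \pmb{y}^\top X \pmb{y} \;=\; 2 \sum_{(i,j) \in \mathcal{E}(S,S^c)} y_i^* y_j^* X_{ij},
\]
since edges with both endpoints inside $S$ or both outside $S$ contribute identically to the two quadratic forms, and only the cut edges change sign. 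Moreover, because the objective is invariant under $\pmb{y} \mapsto -\pmb{y}$, the failure events indexed by $S$ and $S^c$ coincide, so the union bound can be restricted to $1 \leq |S| \leq \lfloor n/2 \rfloor$.

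Under the generative model, conditionally on $\pmb{y}^*$ the random variables $Z_{ij} := y_i^* y_j^* X_{ij}$ for $(i,j) \in \mathcal{E}(S, S^c)$ are i.i.d., taking value $+1$ with probability $1-p$ and $-1$ with probability $p$. Setting $m := |\mathcal{E}(S,S^c)|$ and $W_{ij} := (1-Z_{ij})/2 \sim \text{Bernoulli}(p)$, the event that $\pmb{y}_S$ beats $\pmb{y}^*$ becomes $\{\sum W_{ij} \geq m/2\}$. I would then invoke Bernstein's inequality with total variance $mp(1-p)$, one-sided almost-sure bound $|W_{ij} - p| \leq 1-p$, and deviation $t = m(1-2p)/2$; the resulting exponent $t^2 / [2mp(1-p) + 2(1-p)t/3]$ simplifies algebraically to $m(1-2p)^2 / [\tfrac{4}{3}(1-p)(1+4p)]$, yielding the per-configuration tail bound $h_1(p, m)$.

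Finally, the Cheeger inequality $|\mathcal{E}(S,S^c)| \geq \phi_{\mathcal{G}} |S|$ for $|S| \leq n/2$, combined with monotonicity of $h_1(p, \cdot)$ in its second argument, gives $\mathbb{P}[\pmb{y}_S \text{ beats } \pmb{y}^*] \leq h_1(p, \phi_{\mathcal{G}} k)$ whenever $|S| = k \leq \lfloor n/2 \rfloor$. Union-bounding over the $\binom{n}{k}$ sets of each size $k$ and summing over $k = 1, \dots, \lfloor n/2 \rfloor$ then gives the failure bound $\sum_{k=1}^{\lfloor n/2 \rfloor} \binom{n}{k} h_1(p, \phi_{\mathcal{G}} k)$, whose complement is the claimed success probability. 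The main obstacle is matching the precise constants inside $h_1$: a naive Hoeffding bound would yield only the weaker exponent $(1-2p)^2/2$ and miss the variance-sensitive factor $1/[(1-p)(1+4p)]$, so Bernstein's inequality with the sharp variance $p(1-p)$ and one-sided range $1-p$ is essential, rather than the cruder values $1/4$ and $1$.
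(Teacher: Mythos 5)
Your proposal is correct and follows essentially the same route as the paper's proof: a union bound over flip sets restricted to $|S|\leq\lfloor n/2\rfloor$ by sign symmetry, a reduction of the score gap to a sum over the cut edges $\mathcal{E}(S,S^c)$, a one-sided Bernstein bound whose exponent simplifies to exactly $h_1(p,|\mathcal{E}(S,S^c)|)$, and the Cheeger bound $|\mathcal{E}(S,S^c)|\geq\phi_{\mathcal{G}}|S|$ with monotonicity of $h_1(p,\cdot)$. The only cosmetic difference is your rescaling to Bernoulli variables $W_{ij}$ with variance $p(1-p)$ and range $1-p$, versus the paper's centered $\pm$-valued summands with variance $16p(1-p)$ and bound $4(1-p)$; both yield the identical exponent.
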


The Bernstein inequality and the union bound are mainly used in the proof (See Appendix \ref{proof_theorem2}.)
Note that $h_1(p, \phi_{\mathcal{G}}k)$ decreases as $p$ decreases or $\phi_{\mathcal{G}}$ increases,
that is, the lower bound of the probability of success becomes large for small $p$ and large $\phi_{\mathcal{G}}$ (e.g., large $d$ in regular expanders.)
We will illustrate the bounds for several examples in Section \ref{subsec:illustration1}.

\textbf{Comparison to Tractable Algorithm.}
We compare the above lower bound for the MLE algorithm to that of a polynomial-time solvable algorithm introduced in \citet{bello2019exact}.
The upper bound of the error probability of the tractable algorithm is as follows:
\begin{equation}
\label{eq:epsilon1}
\epsilon_1 := 2n\cdot e^{\frac{-3(1-2p)^2 \phi_{\mathcal{G}}^4}{1536\Delta_{\max}^3 p(1-p)+32(1-2p)(1-p)\phi_{\mathcal{G}}^2\Delta_{\max}}}.    
\end{equation}
If $\phi_{\mathcal{G}}=\Omega(n)$ (e.g., complete graph, regular expander with $d=\Omega(n)$), we can derive $\frac{\sum_{k=1}^{ _\lfloor \frac{n}{2} _\rfloor} \binom{n}{k} h_1(p, \phi_{\mathcal{G}}k)}{\epsilon_1} = O\big(\exp\big(-\frac{C(1-2p)^2n}{(1-p)(1+4p)}\big)\big)$ 
for some positive constant $C$ (see Appendix \ref{proof_comparison1}.)
Hence, $\sum_{k=1}^{ _\lfloor \frac{n}{2} _\rfloor} \binom{n}{k} h_1(p, \phi_{\mathcal{G}}k)$ decays much faster than $\epsilon_1$,
and this implies that the currently existent and tractable method does not achieve the fundamental limits.

Next, we derive from Theorem \ref{theorem2} the sufficient condition for the MLE algorithm to exactly recover the true label with high probability.

\begin{corollary}[Sufficient Condition]
\label{corollary2}
Under the same assumptions as in Theorem \ref{theorem2}, if the following condition holds:
\begin{equation}
\label{eq:corollary2}
\phi_{\mathcal{G}}\cdot \frac{(1-2p)^2}{(1-p) (1 + 4p)} \geq \frac{8}{3} \log n,    
\end{equation}
then the optimal solution to the MLE algorithm in (\ref{eq:mle1}) fulfills $\hat{\pmb{y}}_{mle}(X) \in \{\pmb{y}^*, -\pmb{y}^*\}$ with probability $=1-2n^{-1}$.
\end{corollary}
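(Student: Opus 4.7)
The plan is to start from the lower bound in Theorem \ref{theorem2} and show that the failure probability
$\sum_{k=1}^{\lfloor n/2 \rfloor} \binom{n}{k} h_1(p,\phi_{\mathcal{G}} k)$ is at most $2 n^{-1}$ whenever the hypothesis (\ref{eq:corollary2}) holds. The key idea is that $h_1(p, \phi_{\mathcal{G}} k)$ decays exponentially in $k$ with a rate controlled by $\phi_{\mathcal{G}} (1-2p)^2/[(1-p)(1+4p)]$, so it suffices to make this rate large enough to defeat the binomial coefficient $\binom{n}{k} \le n^k$.

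Concretely, first I would rewrite
\[
h_1(p, \phi_{\mathcal{G}} k)
= \exp\!\Bigl[-\tfrac{3\,(1-2p)^2\,\phi_{\mathcal{G}}\,k}{4(1-p)(1+4p)}\Bigr].
\]
Then, substituting the hypothesis (\ref{eq:corollary2}) of the corollary, which gives $\phi_{\mathcal{G}}(1-2p)^2/[(1-p)(1+4p)] \ge \tfrac{8}{3}\log n$, I obtain
\[
h_1(p, \phi_{\mathcal{G}} k) \le \exp\!\bigl[-2k\log n\bigr] = n^{-2k}.
\]
Combining this with the elementary bound $\binom{n}{k} \le n^k$ yields the per-term estimate $\binom{n}{k}\, h_1(p,\phi_{\mathcal{G}} k) \le n^{-k}$.

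Finally, I would sum over $k$:
\[
\sum_{k=1}^{\lfloor n/2\rfloor} \binom{n}{k}\, h_1(p, \phi_{\mathcal{G}} k)
\le \sum_{k=1}^{\infty} n^{-k}
= \frac{1}{n-1} \le \frac{2}{n},
\]
the last inequality holding for all $n\ge 2$. Plugging this back into Theorem \ref{theorem2} gives the claimed success probability of at least $1-2n^{-1}$.

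There is no real obstacle here; the argument is a direct union-bound calculation once Theorem \ref{theorem2} is in hand. The only point that requires a small amount of care is matching constants: the factor $\tfrac{8}{3}$ in the hypothesis is chosen precisely so that, after absorbing the $\tfrac{3}{4}$ coming from the denominator of $h_1$, the exponent becomes $-2k\log n$, leaving one factor of $n^{-k}$ to cancel $\binom{n}{k}$ and one factor of $n^{-k}$ to make the geometric series summable.
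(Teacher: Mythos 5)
Your argument is correct, and it reaches the same conclusion as the paper's proof by a somewhat more elementary route. The paper first bounds the failure probability by $(r+1)^n-1$ with $r=\exp\bigl[-\tfrac{(1-2p)^2\phi_{\mathcal{G}}}{\frac{4}{3}(1-p)(1+4p)}\bigr]$ (completing the partial binomial sum to the full one), and then shows that the hypothesis forces $r\le (2n^{-1}+1)^{1/n}-1$ via a chain of inequalities using $3\ge (1+n^{-1})^n$ and the concavity of $x\mapsto (x+1)^{1/n}$ on $[0,2]$; this yields $(r+1)^n-1\le 2n^{-1}$. You instead observe directly that the hypothesis gives $r\le n^{-2}$, bound each term as $\binom{n}{k}r^k\le n^k\cdot n^{-2k}=n^{-k}$, and sum the geometric series to get $\tfrac{1}{n-1}\le \tfrac{2}{n}$ for $n\ge 2$. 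Your version avoids the binomial-theorem completion and the concavity argument entirely, and makes transparent why the constant $\tfrac{8}{3}$ appears (one factor of $n^{-k}$ to absorb $\binom{n}{k}$, one to make the series summable); the paper's version gives the slightly sharper intermediate statement that the whole quantity $(r+1)^n-1$ is at most $2n^{-1}$, and the $(r+1)^n-1$ bound is reused in its comparison with the tractable algorithm of \citet{bello2019exact}, which is presumably why the authors set the proof up that way. Both proofs establish the success probability of at least $1-2n^{-1}$ (the ``$=$'' in the corollary statement should be read as ``$\geq$'').
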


\textbf{Tightness of Sufficient and Necessary Conditions.}
One important question is how tight the necessary and the sufficient conditions derived in Corollary \ref{corollary1} and \ref{corollary2} are. 
In Figure \ref{fig:match}, we observe that the functions $1 - \frac{H^*(p)}{\log 2}$ and $\frac{(1-2p)^2}{(1-p) (1 + 4p)}$ (red and blue lines, respectively) are almost the same.
Hence, the condition (\ref{eq:corollary2}) can be written as
$\phi_{\mathcal{G}}\cdot \big(1 - \frac{H^*(p)}{\log 2}\big) \geq C \log n$
for some constant $C$.
Then, if $|\mathcal{E}|/\phi_{\mathcal{G}} = \Theta(n)$ holds (e.g., complete graph, star graph, regular expander),
the condition (\ref{eq:corollary2}) reduces to
$|\mathcal{E}|\cdot \big(1 - \frac{H^*(p)}{\log 2}\big) \geq C' n\log n$,
and the sufficient and necessary conditions are thus tight up to at most a logarithmic factor.

\subsection{Illustration}
\label{subsec:illustration1}

\begin{figure}[t]
\vskip 0.2in
\begin{center}
\centerline{\includegraphics[width=\columnwidth]{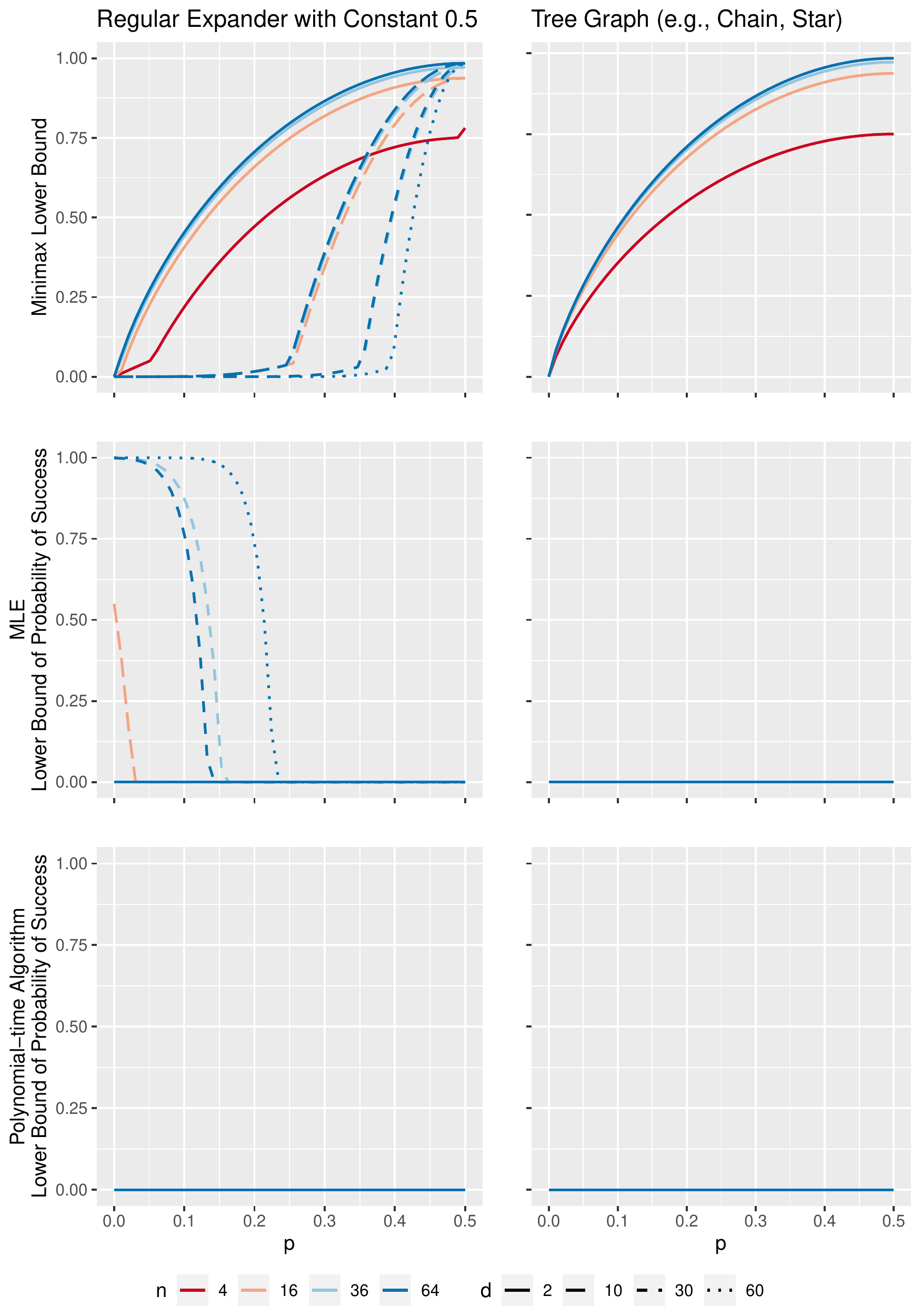}}
\caption{
Minimax lower bound in Theorem \ref{theorem1} (first row), lower bound of probability of success of the MLE algorithm in Theorem \ref{theorem2} (second row), and lower bound of probability of success of the tractable algorithm in \citet{bello2019exact} (third row).
The first and second columns correspond to regular expanders and tree graphs, respectively.
The x-axis indicates the parameter $p\in[0,\frac{1}{2}]$.
Different colors and line-types represent different numbers of nodes ($n$) and the degrees of regular expanders ($d$).
}
\label{fig:illustration1}
\end{center}
\vskip -0.2in
\end{figure}

Figure \ref{fig:illustration1} illustrates the bounds of the probabilities in Theorem \ref{theorem1} and \ref{theorem2}, and the lower bound of the probability of success of the polynomial-time algorithm (which is $1-\epsilon_1$) for regular expanders and tree graphs.
Results for more types of graphs are provided in Appendix \ref{appendix_illustration1}.

The first thing we can observe is the gap between the fundamental limits and the performance of the polynomial-time algorithm.
In regular expanders, we observe that the success of the MLE algorithm is guaranteed with higher probability as the degree $d$ increases,
while the lower bound of the probability of success of the polynomial-time algorithm is zero even with $d=60$.
In fact, $d$ must be about $2000$ so that when $p$ is close to $0$, the lower bound gets close to $1$ in the case of polynomial-time algorithm (See Appendix \ref{appendix_illustration1}.)

Next, we can observe that as $n$ increases, the minimax lower bound increases and the lower bound of the probability of success of the MLE algorithm decreases for tree graphs and regular expanders.
That is, the exact recovery problem becomes difficult as the number of nodes $n$ gets larger for those graph classes.
On the other hand, when $n$ is fixed, the minimax lower bound decreases and the lower bound of the probability of success of the MLE algorithm increases as $d$ increases in regular expanders.
Since the degree $d$ in regular expanders has the same effect on the bounds as the number of nodes $n$ in the complete graph in our results,
the exact recovery problem becomes easier as $n$ increases for complete graphs.
Thus, the effect of $n$ on the fundamental limits is different between complete graphs and the aforementioned graphs.

Lastly, looking in detail at the minimax lower bounds,
we can first observe that 
there exist two non-smooth points when $n=4$ and $d=2$ in the regular expander.
They are the points in which the function achieving the maximum changes.
The maximum lower bound is achieved by $f_1$, $g_1$ and $g_1^*$ in sequence as $p$ increases from $0$ to $1/2$. 
When $n$ and $d$ become greater, $g_1$ and $g_1^*$ become the same, so either $f_1$ or $g_1$ achieves the maximum.
In the case of tree graphs, the function $g_1$ always achieves the maximum, so that there is no non-smooth point.

\section{Regime II: Edge and Node Observations}
\label{sec:case_that_edge_and_node}

Now we consider the extended regime in which noisy edge and node observations are collectively provided.
Recall that the noisy node observation is denoted by the vector $\pmb{c}$.
The $i$th element of $\pmb{c}$ has the same value as the true label $y_i^*$, $i=1,\cdots,n$, with probability $1-q$, $q\in[0,\frac{1}{2}]$.
In addition to $(n,p,|\mathcal{E}|,\Delta_{\max}, \phi_{\mathcal{G}})$, the parameter $q$ will be involved in the results of this section.

\subsection{Minimax Lower Bound}

We first derive the minimax lower bound of the probability of failure as follows.

\begin{theorem}
\label{theorem3}
Let $\mathcal{G} = (\mathcal{V}, \mathcal{E})$ be an undirected connected graph with $n$ nodes, Cheeger constant $\phi_{\mathcal{G}}$, and maximum node degree $\Delta_{\max}$.
Consider a family of distributions $\mathcal{P}$ over $\mathcal{Y} \times (\mathcal{X} \times \mathcal{C})$.
Then, for the inference problem in Section \ref{subsec:exact_inference_problem} in which the noisy edge and node observations $X\in\mathcal{X}$ and $\pmb{c}\in\mathcal{C}$ are given,
the minimax probability of error is bounded below as follows:
\begin{multline*}
\underset{\hat{\pmb{y}}:\mathcal{X}\times\mathcal{C} \to \mathcal{Y}}{\inf}
~\underset{ P \in\mathcal{P} }{\sup}
~\mathbb{P}_{ (\pmb{y}^*, (X,\pmb{c})) \sim P } \big[ \hat{\pmb{y}} (X,\pmb{c}) \neq \pmb{y}^* \big] 
\\ \geq
\max\big\{ f_2, g_2, g^*_2 \big\}
\end{multline*}
where
\begin{align*}
&f_2
:=
\frac{1}{2} \sum_{m=0}^{\Delta_{\max}} \binom{\Delta_{\max}}{m} 
\\&\times\bigg(  \big\{ p^{m} (1-p)^{\Delta_{\max} - m} q \big\}
\wedge 
\big\{ (1-p)^{m}  p^{\Delta_{\max} - m}  (1-q) \big\}
\\&+
\big\{ p^{m} (1-p)^{\Delta_{\max} -m} (1-q) \big\}
\wedge 
\big\{ (1-p)^{m}  p^{\Delta_{\max} - m}  q \big\}  \bigg),
\\[1em]
&g_2
:= \frac{n-1}{n}-\frac{|\mathcal{E}|}{n}\bigg(1 - \frac{H^*(p)}{\log 2}\bigg) -\bigg(1 - \frac{H^*(q)}{\log 2}\bigg),
\\[1em]
&g^*_2
:= 
g_2 +\frac{\big\{(|\mathcal{E}|+n)\log 2-\kappa_2\big\}\vee 0 }{n\log 2}
\\&\hspace{11em}\times\mathbb{I}\big[(1-p)^{|\mathcal{E}|} (1-q)^n \leq e^{-1}\big].
\end{align*}
Here, $
\kappa_2
=
\{-|\mathcal{E}|\log(1-p) -n\log(1-q)\} \cdot \frac{1}{2}\{2(1-q)\}^{n}  \{2(1-p)\}^{|\mathcal{E}|} 
\mathbb{E}_B\big[\frac{\tau(B)}{2^n}\big]
+ \{-|\mathcal{E}|\log(1-p) +n\log 2\} \cdot\frac{1}{2}\{2(1-p)\}^{|\mathcal{E}|}\cdot \frac{1}{2^n}
+ \frac{1}{2}\{2(1-q)\}^{n}\{2(1-p)\}^{|\mathcal{E}|}
\mathbb{E}_B\big[-\frac{\tau(B)}{2^n}\log\big(\frac{\tau(B)}{2^n}\big)\big]$.
In addition,
$\tau(B) := 2\big(\frac{p}{1-p}\big)^B
+
\sum_{k=1}^{n-1} 
\binom{n}{k}
\big(\frac{p}{1-p}\big)^{( \phi_{\mathcal{G}} \cdot\{k\wedge (n-k)\}
- B )\vee 0}$
and $B\sim \text{Bin}(|\mathcal{E}|-n+1,~ \frac{1}{2})$.
\end{theorem}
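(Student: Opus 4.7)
My plan is to derive each of the three lower bounds $f_2$, $g_2$, and $g_2^*$ separately, mirroring the proof of Theorem 1 while carrying the node-observation parameter $q$ through the argument.

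For $f_2$, I would apply a two-point (Le Cam / Assouad-type) argument at each node. Fix node $i$ and consider the hypotheses $\pmb{y}^*$ versus the vector obtained by flipping $y_i^*$. Conditional on the other labels, the only observations whose laws change are the $\Delta_i$ edge measurements incident to $i$ (which swap $p$ and $1-p$) and the node measurement $c_i$ (which swaps $q$ and $1-q$). The Bayes error for testing these two product distributions equals $\tfrac{1}{2}\sum_{x,\gamma}\min(P_0(x,\gamma), P_1(x,\gamma))$. Grouping edge configurations by the count $m$ of edges that match one hypothesis produces the binomial weights $\binom{\Delta_i}{m}$, and summing over the two possible values of $c_i$ yields the two $\wedge$-terms in $f_2$. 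Taking the worst case $\Delta_i \leq \Delta_{\max}$ gives the stated uniform bound.

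For $g_2$ and $g_2^*$, I would invoke Fano's inequality, which under the uniform prior on $\pmb{y}^*\in\{-1,1\}^n$ yields $\mathbb{P}[\hat{\pmb{y}}\neq \pmb{y}^*] \geq 1 - (I(\pmb{y}^*; X, \pmb{c}) + \log 2)/(n\log 2)$. Conditional on $\pmb{y}^*$ all observations are independent, so $H(X,\pmb{c}\mid\pmb{y}^*)=|\mathcal{E}|H^*(p)+nH^*(q)$. The crude bound $H(X,\pmb{c})\leq (|\mathcal{E}|+n)\log 2$ then delivers $g_2$ after simplification. For $g_2^*$, I would compute $H(X,\pmb{c})$ more tightly by expressing the marginal as a mixture $2^{-n}\sum_{\pmb{y}}\mathbb{P}(X=x,\pmb{c}=\gamma\mid\pmb{y})$ and lower-bounding, for any two candidate labels differing on a subset $S\subseteq\mathcal{V}$, the number of flipped edges by the cut size $|\mathcal{E}(S,S^c)|\geq \phi_{\mathcal{G}}\cdot(|S|\wedge|S^c|)$. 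Parameterizing the non-spanning-tree edges by the binomial $B\sim\mathrm{Bin}(|\mathcal{E}|-n+1,\tfrac{1}{2})$ and factoring out the node-observation contribution $\{2(1-q)\}^n$, the entropy collapses into the expectation involving $\tau(B)$; combining with Fano produces $g_2^*$. The indicator $\mathbb{I}[(1-p)^{|\mathcal{E}|}(1-q)^n\leq e^{-1}]$ appears because the $-t\log t$ inequality used in the entropy estimate is only informative on the appropriate range.

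The principal obstacle is the refined entropy computation behind $g_2^*$. Compared to $\kappa_1$ in Theorem 1, the expression $\kappa_2$ contains an additional middle term $\{-|\mathcal{E}|\log(1-p)+n\log 2\}\cdot\tfrac{1}{2}\{2(1-p)\}^{|\mathcal{E}|}\cdot 2^{-n}$, which I expect arises from a case split in the entropy decomposition isolating configurations where the node-observation vector maximally agrees with a candidate label. Carefully tracking this middle term while ensuring the $\{2(1-q)\}^n$ factor threads through the mixture entropy without spoiling the Cheeger-constant dependence encoded in $\tau(B)$ is where the main bookkeeping effort lies; the remaining algebra is an extension of the Theorem 1 derivation.
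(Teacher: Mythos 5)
Your plan for $f_2$ and $g_2$ matches the paper's proof in all essentials: the paper derives $f_2$ by applying Assouad's lemma with the zero--one distance split into $n$ pseudo-distances, computing the affinity $\|P_{Z|\pmb{y}}\wedge P_{Z|\pmb{y}'}\|$ for a single-coordinate flip (only the $\Delta_k$ incident edges and $c_k$ change law, giving the binomial sum with the two $\wedge$-terms), and it derives $g_2$ from Fano with $I(\pmb{y}^*;X,\pmb{c})=H(X,\pmb{c})-|\mathcal{E}|H^*(p)-nH^*(q)\leq(|\mathcal{E}|+n)\log 2-|\mathcal{E}|H^*(p)-nH^*(q)$. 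One small point you gloss over: passing from the node-$k$ affinity to the uniform bound at $\Delta_{\max}$ requires showing the affinity is non-increasing in the degree (the paper's Lemma 3, proved by Pascal's identity and $a\wedge b+c\wedge d\leq(a+c)\wedge(b+d)$); ``taking the worst case'' is the right instinct but needs this monotonicity argument.

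The genuine gap is in $g_2^*$. Your conjecture that the middle term of $\kappa_2$ comes from ``a case split isolating configurations where the node-observation vector maximally agrees with a candidate label'' is not how the bound is obtained, and without the actual device the stated form of $\kappa_2$ will not emerge. The paper writes $p(Z)=\frac{1}{2^n}(1-p)^{|\mathcal{E}|}(1-q)^n\sum_{\pmb{y}}\big(\tfrac{p}{1-p}\big)^{d(X,Y_{\mathcal{E}})}\big(\tfrac{q}{1-q}\big)^{d(\pmb{c},\pmb{y})}$ and produces \emph{two} upper bounds on $p(Z)$ by dropping either the $q$-factor or the $p$-factor (each is $\leq 1$ termwise). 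Under $(1-p)^{|\mathcal{E}|}(1-q)^n\leq e^{-1}$, monotonicity of $-t\log t$ turns these into two bounds $-p(Z)\log p(Z)\leq f(X)$ and $-p(Z)\log p(Z)\leq g(\pmb{c})$, which are then \emph{averaged}: $-p(Z)\log p(Z)\leq\frac{1}{2}f(X)+\frac{1}{2}g(\pmb{c})$, so $H(Z)\leq\frac{1}{2}\,2^n\sum_X f(X)+\frac{1}{2}\,2^{|\mathcal{E}|}\sum_{\pmb{c}}g(\pmb{c})$. The $f$-part is handled exactly as in Theorem 1 (spanning tree, partition of $\mathcal{X}$ into $\mathcal{Y}_{\mathcal{E}}$ and its complement, Cheeger bound, yielding the $\tau(B)$ expectations with the extra $\{2(1-q)\}^n$ factor). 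The $g$-part collapses because the node mixture telescopes exactly: $(1-q)^n\sum_{\pmb{y}}\big(\tfrac{q}{1-q}\big)^{d(\pmb{c},\pmb{y})}=\sum_{k=0}^n\binom{n}{k}q^k(1-q)^{n-k}=1$, so $g(\pmb{c})=-\frac{1}{2^n}(1-p)^{|\mathcal{E}|}\log\big[\frac{1}{2^n}(1-p)^{|\mathcal{E}|}\big]$ is a constant, and $\frac{1}{2}\,2^{|\mathcal{E}|}\cdot 2^n\cdot g(\pmb{c})$ is precisely the middle term $\{-|\mathcal{E}|\log(1-p)+n\log 2\}\cdot\frac{1}{2}\{2(1-p)\}^{|\mathcal{E}|}\cdot\frac{1}{2^n}$ of $\kappa_2$. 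You would need to supply this averaging-of-two-bounds step and the exact telescoping identity to complete the argument.
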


As in Theorem \ref{theorem1}, $f_2$ is derived from Assouad's lemma and $g_2$ and $g_2^*$ are obtained from Fano's inequality.
See Appendix \ref{proof_theorem3} for the detailed proof.

\begin{figure}[t]
\vskip 0.2in
\begin{center}
\centerline{\includegraphics[width=\columnwidth]{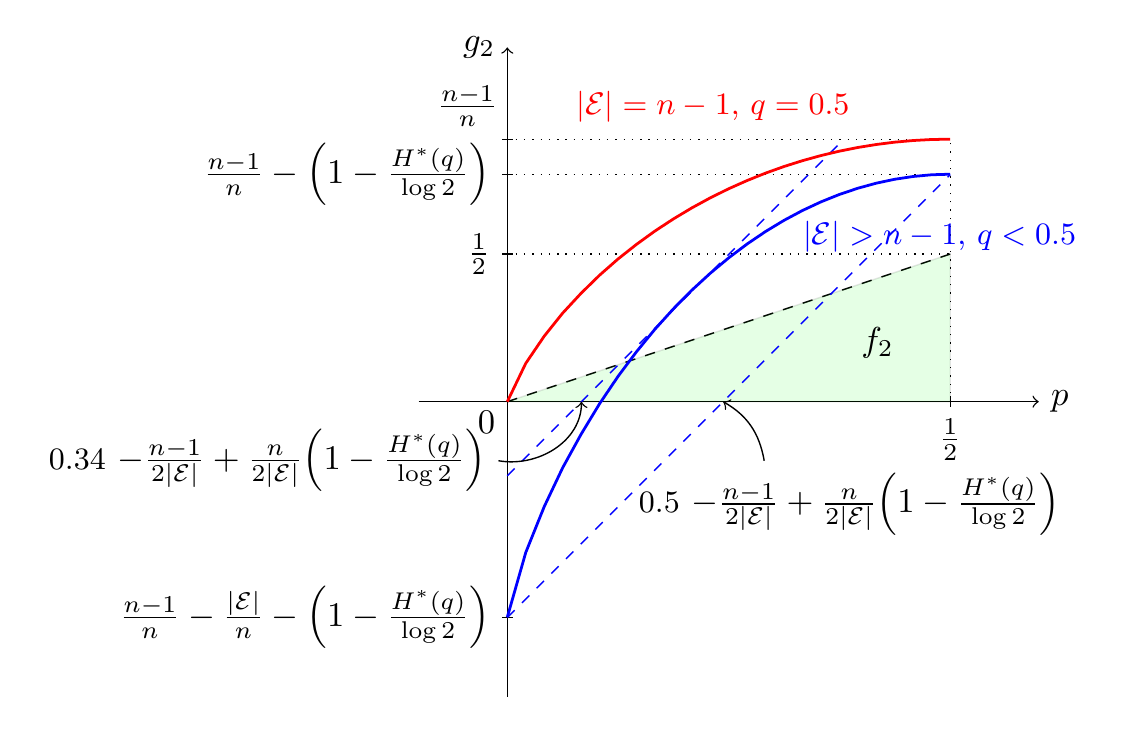}}
\caption{
Graphical illustration of the functions $f_2$ and $g_2$ with respect to the parameter $p\in[0,\frac{1}{2}]$ in Theorem \ref{theorem3}.
The green triangle indicates the area where the function $f_2$ has its function value.
The red and blue lines represent the function $g_2$ when $|\mathcal{E}| = n-1, q=0.5$ and $|\mathcal{E}| > n-1, q<0.5$, respectively.
When $|\mathcal{E}| = n-1$ and $q=0.5$, the function $g_2$ always has a positive value,
and the plot goes downward as $|\mathcal{E}|$ increases or $q$ decreases.
When $|\mathcal{E}| > n-1$ and $q<0.5$, $g_2$ has its $y$-intercept at $\frac{n-1}{n}-\frac{|\mathcal{E}|}{n}-\Big(1 - \frac{H^*(q)}{\log 2}\Big)$ and its $x$-intercept between $0.34 - \frac{n-1}{2|\mathcal{E}|}+ \frac{n}{2|\mathcal{E}|}\Big(1 - \frac{H^*(q)}{\log 2}\Big)$ and $0.5 - \frac{n-1}{2|\mathcal{E}|}+ \frac{n}{2|\mathcal{E}|}\Big(1 - \frac{H^*(q)}{\log 2}\Big)$.
}
\label{fig:minimax2}
\end{center}
\vskip -0.2in
\end{figure}

We can verify from Figure \ref{fig:minimax2} which of the functions $f_2$ or $g_2$ achieves the maximum in different situations.
The figure is similar to Figure \ref{fig:minimax1}, but the difference is that the parameter $q \in [0, \frac{1}{2}]$ is involved in this case.
First, the function $g_2$ is greater than the function $f_2$ for any $p$ only when $|\mathcal{E}| = n-1$ and $q=0.5$.
As $|\mathcal{E}|$ increases or $q$ decreases, the $y$-intercept of function $g_2$ becomes negative and smaller, thus $f_2$ has a greater value than $g_2$ for small $p$.
Especially, when $q$ is close to zero, the maximal function value of $g_2$ becomes negative, therefore $f_2$ is greater than $g_2$ for any $p$ in this case.

The function $g_2^*$ is always greater than or equal to $g_2$, and in most cases, they are equivalent.
An exceptional case is when $p$ and $q$ are both close to $\frac{1}{2}$.
If $p, q \approx \frac{1}{2}$, we can derive that $\kappa_2 \approx (|\mathcal{E}|+n)\log 2 \cdot \frac{1}{2} \big( 1 + \frac{1}{2^{n}} \big)$, that is, $(|\mathcal{E}|+n)\log 2 - \kappa_2 \approx (|\mathcal{E}|+n)\log 2 \cdot \big( \frac{1}{2} - \frac{1}{2^{n+1}} \big)$.
Since this is always positive, $g_2^*$ becomes strictly greater than $g_2$.

Now we derive from Theorem \ref{theorem3} the necessary condition for any algorithm to exactly recover the true label $\pmb{y}^*$.

\begin{corollary}[Necessary Condition]
\label{corollary3}
Under the same assumptions as in Theorem \ref{theorem3}, if the following condition holds:
\begin{equation}
\label{eq:corollary3}
|\mathcal{E}|\bigg(1 - \frac{H^*(p)}{\log 2}\bigg)
+n\bigg(1 - \frac{H^*(q)}{\log 2}\bigg)
\leq \frac{n}{2} - 1,
\end{equation}
then $\underset{\hat{\pmb{y}}:\mathcal{X}\times\mathcal{C} \to \mathcal{Y}}{\inf}
~\underset{ P \in\mathcal{P} }{\sup}
~\mathbb{P}_{ (\pmb{y}^*, (X,\pmb{c})) \sim P } \big[ \hat{\pmb{y}} (X,\pmb{c})\neq \pmb{y}^* \big] \geq \frac{1}{2}$, that is, any algorithm fails to exactly recover the true node labels with probability greater than half.
\end{corollary}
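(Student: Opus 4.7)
The plan is to follow the same template used in the proof of Corollary \ref{corollary1}. Since the minimax lower bound in Theorem \ref{theorem3} is $\max\{f_2, g_2, g_2^*\}$, it suffices to establish two things: (i) $f_2 \leq \tfrac{1}{2}$ always, and $g_2^* \geq g_2$ always, so the lower bound is always at least $g_2$; and (ii) under the hypothesis \eqref{eq:corollary3}, $g_2 \geq \tfrac{1}{2}$.

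First I would dispatch $g_2^* \geq g_2$: the extra term added to $g_2$ in the definition of $g_2^*$ is a product of $\{(|\mathcal{E}|+n)\log 2 - \kappa_2\} \vee 0 \geq 0$ and an indicator, divided by the positive quantity $n\log 2$, so it is nonnegative. Next I would show $f_2 \leq \tfrac{1}{2}$. Using the trivial bound $a \wedge b \leq a$ or $a \wedge b \leq b$, and then applying the binomial theorem $\sum_{m=0}^{\Delta_{\max}} \binom{\Delta_{\max}}{m} p^m (1-p)^{\Delta_{\max}-m} = 1$, the first $\wedge$-term inside the sum in $f_2$ contributes at most $\min(q, 1-q) \leq \tfrac{1}{2}$, and the second $\wedge$-term contributes at most $\min(1-q, q) \leq \tfrac{1}{2}$. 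The factor $\tfrac{1}{2}$ in front of the sum then yields $f_2 \leq \tfrac{1}{2} \cdot \tfrac{1}{2} \cdot 2 = \tfrac{1}{2}$.

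With those two facts in hand, the remaining step is purely algebraic: rearrange the inequality $g_2 \geq \tfrac{1}{2}$ to obtain \eqref{eq:corollary3}. Multiplying the definition of $g_2$ by $n$ and rearranging:
\begin{equation*}
g_2 \geq \tfrac{1}{2} \iff |\mathcal{E}|\Big(1 - \tfrac{H^*(p)}{\log 2}\Big) + n\Big(1 - \tfrac{H^*(q)}{\log 2}\Big) \leq \tfrac{n}{2} - 1,
\end{equation*}
which is exactly condition \eqref{eq:corollary3}. Combining everything, \eqref{eq:corollary3} forces $g_2 \geq \tfrac{1}{2}$, hence the minimax lower bound in Theorem \ref{theorem3} is at least $\tfrac{1}{2}$.

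There is no real obstacle here, since the argument reuses the minimax lower bound of Theorem \ref{theorem3} as a black box and only requires the elementary bound on $f_2$. The only point that needs care is verifying that the bound $f_2 \leq \tfrac{1}{2}$ still holds after the $q$-weighting in Regime II (it does, precisely because $q \wedge (1-q) \leq \tfrac{1}{2}$), which is the only place where the node-observation parameter $q$ materially changes the argument compared to Corollary \ref{corollary1}.
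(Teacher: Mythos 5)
Your proposal is correct and follows essentially the same route as the paper: the paper's proof likewise reduces the claim to showing that condition (\ref{eq:corollary3}) is exactly the rearrangement of $g_2 \geq \tfrac{1}{2}$, noting (as in Corollary \ref{corollary1}) that $g_2^* \geq g_2$ and $f_2 \leq \tfrac{1}{2}$ so that only $g_2$ needs to be considered. Your explicit verification that $f_2 \leq \tfrac{1}{2}$ via the binomial theorem and $a \wedge b \leq a$ is a detail the paper leaves implicit, and it is sound.
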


\begin{proof}
Similar to Corollary \ref{corollary1}, we only need to derive the condition in which $g_2$ becomes greater than half.
\end{proof}

The opposite of the condition (\ref{eq:corollary3}) provides the necessary condition for exact recovery.
When $p$ and $q$ are close to $\frac{1}{2}$, $1 - \frac{H^*(p)}{\log 2}$ and $1 - \frac{H^*(q)}{\log 2}$ are close to $0$ as shown in Figure \ref{fig:match}, therefore it becomes impossible for any algorithm to satisfy the necessary condition.
On the other hand, if $1 - \frac{H^*(p)}{\log 2} > \frac{1}{2}$ or $1 - \frac{H^*(q)}{\log 2} > \frac{1}{2}$ holds, i.e., $p$ or $q$ is smaller than $0.11$, then the necessary condition always holds
because $|\mathcal{E}|\big(1 - \frac{H^*(p)}{\log 2}\big)
+n\big(1 - \frac{H^*(q)}{\log 2}\big)
> \frac{n-1}{2} > \frac{n}{2}-1$.

\subsection{Maximum Likelihood Estimator}
\label{subsec:mle2}

Given the noisy edge and node observations, the Maximum Likelihood Estimator (MLE) of the model presented in Section \ref{subsec:exact_inference_problem} is derived as follows:
\begin{equation}
\hat{\pmb{y}}_{mle}(X,\pmb{c}) := \underset{\pmb{y}\in \mathcal{Y}}{\arg\max}~~~ \pmb{y}^T X\pmb{y} + \alpha\pmb{c}^T\pmb{y}. 
\label{eq:mle2}
\end{equation}
Similar to (\ref{eq:mle1}), the optimization problem (\ref{eq:mle2}) is NP-hard but minimax optimal, so it can serve as a benchmark for other algorithms.
Now we derive the lower bound of the probability for the MLE algorithm to exactly recover the true label $\pmb{y}^*$.

\begin{theorem}
\label{theorem4}
Let $\mathcal{G} = (\mathcal{V}, \mathcal{E})$ be an undirected connected graph with $n$ nodes and Cheeger constant $\phi_{\mathcal{G}}$.
Let $h_1(p, z) := e^{- \frac{ (1-2p)^2z }{ \frac{4}{3}(1-p) (1 + 4p)}}$
and
$h_2(z, w) := e^{-\frac{\{(1-2p)z + \alpha(1-2q) w) \}^2}
{ 8p(1-p)z + 8q(1-q)\alpha^2w + \frac{4}{3}[(1-p)\vee (1-q)\alpha]\cdot\{(1-2p)z+\alpha(1-2q)w\}}}$.
Then, for the inference problem in Section \ref{subsec:exact_inference_problem} in which the noisy edge and node observations $X$ and $\pmb{c}$ are given,
the MLE algorithm in (\ref{eq:mle2})
returns a solution $\hat{\pmb{y}}_{mle}(X,\pmb{c}) = \pmb{y}^*$
with probability at least
\begin{align*}
1 - 
&\bigg\{
\sum_{k=1}^{\lfloor \frac{n}{2}\rfloor} \binom{n}{k} 
h_1(p, \phi_{\mathcal{G}}k) + \sum_{k=1}^{n} \binom{n}{k} h_1(q, k)\bigg\}
\\ &\wedge\bigg\{ 
\sum_{k=1}^{\lfloor \frac{n}{2}\rfloor} 
\binom{n}{k} 
h_2(\phi_{\mathcal{G}}k, k)
+
\sum_{k=0}^{\lfloor \frac{n}{2}\rfloor} 
\binom{n}{k} 
h_2(\phi_{\mathcal{G}}k, n-k)
\bigg\},    
\end{align*}
where $\alpha = \frac{\log\frac{1-q}{q}}{\log\frac{1-p}{p}}$.
\end{theorem}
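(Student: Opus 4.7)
The plan is to cast MLE failure as a tail event for a signed sum of independent noises, and then apply Bernstein's inequality twice (once for each branch of the final $\wedge$) together with a union bound over the disagreement set. First I would parameterize every candidate $\pmb{y}\neq\pmb{y}^*$ by the flip set $S=\{i:y_i\neq y_i^*\}$ of size $k\geq 1$. A short computation shows that the objective gap is
\[
\pmb{y}^\top X\pmb{y}+\alpha\pmb{c}^\top\pmb{y}-\pmb{y}^{*\top}X\pmb{y}^*-\alpha\pmb{c}^\top\pmb{y}^*\;=\;-2\big(Z_S+\alpha W_S\big),
\]
where $Z_S:=\sum_{(i,j)\in\mathcal{E}(S,S^c)} X_{ij}y_i^*y_j^*$ and $W_S:=\sum_{i\in S} c_i y_i^*$. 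Under the generative model the summands of $Z_S$ are i.i.d.\ $\pm 1$ with mean $1-2p$, the summands of $W_S$ are i.i.d.\ $\pm 1$ with mean $1-2q$, and the two collections are independent. Hence the MLE fails iff $Z_S+\alpha W_S\leq 0$ for some non-empty $S$.

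For the second branch of the $\wedge$ I would apply Bernstein's inequality directly to the centered sum $Z_S+\alpha W_S$: its mean is $(1-2p)|\mathcal{E}(S,S^c)|+\alpha(1-2q)|S|>0$, its variance is $4p(1-p)|\mathcal{E}(S,S^c)|+4q(1-q)\alpha^2|S|$, and each centered summand is bounded in absolute value by $2[(1-p)\vee\alpha(1-q)]$; plugging these into Bernstein produces exactly $h_2(|\mathcal{E}(S,S^c)|,|S|)$. Since $h_2(\cdot,k)$ is decreasing in its first argument, the Cheeger bound $|\mathcal{E}(S,S^c)|\geq\phi_{\mathcal{G}}\min(|S|,n-|S|)$ lets me replace the cut size. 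Splitting the union over $S$ into $|S|\leq\lfloor n/2\rfloor$ and $|S|>\lfloor n/2\rfloor$, and reindexing the latter by $k':=n-|S|$ to pick up the cut through $S^c$, yields the two sums in the second branch (the boundary term $k'=0$ correctly accounts for the global-flip case $S=\mathcal{V}$, where $|\mathcal{E}(S,S^c)|=0$).

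For the first branch I would use the elementary inclusion $\{Z_S+\alpha W_S\leq 0\}\subseteq\{Z_S\leq 0\}\cup\{W_S\leq 0\}$, valid because $\alpha>0$ and if both $Z_S,\alpha W_S$ were strictly positive their sum would be too. The symmetry $Z_S=Z_{S^c}$, coming from $\mathcal{E}(S,S^c)=\mathcal{E}(S^c,S)$, lets the $Z$-union be restricted to $1\leq|S|\leq\lfloor n/2\rfloor$, where Bernstein together with $|\mathcal{E}(S,S^c)|\geq\phi_{\mathcal{G}}|S|$ delivers $h_1(p,\phi_{\mathcal{G}}|S|)$; Bernstein on $W_S$ delivers $h_1(q,|S|)$, and the $W$-union runs over all $|S|\in\{1,\ldots,n\}$. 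The global-flip case $S=\mathcal{V}$ (for which $Z_S\equiv 0$ and the edge-based inclusion is vacuous) is absorbed because joint failure there coincides with $\{W_\mathcal{V}\leq 0\}$, already covered by the $W$-sum. Summing gives the first branch, and the theorem follows since both branches are valid upper bounds on the same probability. The main obstacle I expect is the Bernstein bookkeeping for the mixed sum $Z_S+\alpha W_S$: producing the $\tfrac{4}{3}$ factor and the $[(1-p)\vee\alpha(1-q)]$ term in the denominator of $h_2$ requires using the worst range of the two populations in the cubic term while correctly summing the two variances; a secondary nuisance is the combinatorial accounting at $|S|=\lfloor n/2\rfloor$ and $|S|=n$ needed to align the indices of the two branches with the sums in the statement.
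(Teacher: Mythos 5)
Your overall strategy is the paper's own: two applications of Bernstein's inequality --- one to the edge and node sums separately, one to the combined sum $Z_S+\alpha W_S$ --- followed by a union bound over flip sets, the Cheeger-constant replacement of the cut size, and a minimum over the two resulting branches. Your decomposition into $Z_S$ and $W_S$ is (up to a harmless factor of $2$) the paper's $T_X(Y)$ and $T_{\pmb{c}}(\pmb{y})$, your Bernstein bookkeeping reproduces $h_2$ exactly, and your handling of the global flip $S=\mathcal{V}$ and of the $Z$-symmetry $Z_S=Z_{S^c}$ matches the paper's pairing of $\pmb{y}$ with $-\pmb{y}$.

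The one genuine gap is the sentence ``Since $h_2(\cdot,k)$ is decreasing in its first argument.'' This is not automatic and it is load-bearing: without it you cannot replace $|\mathcal{E}(S,S^c)|$ by the smaller quantity $\phi_{\mathcal{G}}\min(|S|,n-|S|)$ inside $h_2$ in the second branch. The exponent of $h_2$ has the form $(az+b)^2/(cz+d)$ with $b,d>0$ coming from the node contribution, and such a ratio is increasing in $z\geq 0$ if and only if $2ad\geq bc$; for $h_1$ the analogous exponent is linear in $z$ so monotonicity is free, but for $h_2$ it is a real condition on $p$, $q$, $\alpha$ and $w$. The paper isolates exactly this claim as Lemma~\ref{theorem4_lemma} and establishes the required inequality only by a numerical verification over $(p,q)\in(0,\tfrac12)^2$ (Figure~\ref{fig:r_function}). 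Your proof needs either that lemma or an analytic substitute for it; as written, the monotonicity assertion is an unproven step rather than an observation. Everything else in the proposal goes through.
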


As in Theorem \ref{theorem2}, the Bernstein inequality and the union bound are the main strategies for the proof (See Appendix \ref{proof_theorem4}.)
Note that $h_1(p, \phi_{\mathcal{G}}k)$ and $h_2( \phi_{\mathcal{G}}k, \cdot)$ decrease as $\phi_{\mathcal{G}}$ increases,
that is, the lower bound of the probability of success becomes large when $\phi_{\mathcal{G}}$ increases (e.g., $d$ increases in regular expanders.)
The function $h_2$ is not monotone with respect to $p$ or $q$, so it does not necessarily hold that the bound increases as $p$ or $q$ decreases. 

\textbf{Comparison to Tractable Algorithm.}
We compare the above lower bound to that of a polynomial-time solvable algorithm introduced in \citet{bello2019exact}.
The upper bound of the error probability of the polynomial-time algorithm is $\epsilon_1 + \epsilon_2$ where
$\epsilon_1$ is as in (\ref{eq:epsilon1}) and 
$\epsilon_2 := e^{-\frac{n}{2}(1-2q)^2}$.
We can show that 
if $\phi_{\mathcal{G}}=\Omega(n)$ (e.g., complete graph, regular expander with $d=\Omega(n)$) and $p\approx q \leq 0.25$, then
$\sum_{k=1}^{\lfloor \frac{n}{2}\rfloor} 
\binom{n}{k} 
h_2(\phi_{\mathcal{G}}k, k)
+
\sum_{k=0}^{\lfloor \frac{n}{2}\rfloor} 
\binom{n}{k} 
h_2(\phi_{\mathcal{G}}k, n-k)$ decays much faster than $\epsilon_1+ \epsilon_2$
(see Appendix \ref{proof_comparison2} for the details.)
This implies that the currently existent and tractable method does not achieve the fundamental limits.

Lastly, from Theorem \ref{theorem4}, we derive the sufficient condition for the MLE algorithm to exactly recover the true label with high probability.

\begin{corollary}[Sufficient Condition]
\label{corollary4}
Under the same assumptions as in Theorem \ref{theorem3}, if the following conditions hold:\\
$
\frac{\big\{(1-2p)\phi_{\mathcal{G}} + \alpha(1-2q) \big\}^2}
{ 6p(1-p)\phi_{\mathcal{G}} + 6q(1-q)\alpha^2 + \big[(1-p)\vee (1-q)\alpha\big]\cdot\big\{(1-2p)\phi_{\mathcal{G}}+\alpha(1-2q)\big\}}
$
$\geq \frac{8}{3}\log n$ and
$\frac{\alpha\{(1-2q)\}^2n}
{ 6q(1-q)\alpha + \big[(1-p)\vee (1-q)\alpha\big]\cdot(1-2q)}
$
$\geq \frac{4}{3}\log n$,
then the optimal solution to the MLE algorithm in (\ref{eq:mle2}) fulfills $\hat{\pmb{y}}_{mle}(X,\pmb{c}) = \pmb{y}^*$ with probability $1-5n^{-1}$.
\end{corollary}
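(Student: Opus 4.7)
The plan is to invoke Theorem~\ref{theorem4} and bound the term inside the $\wedge$ that involves $h_2$ by $5/n$. Writing
\begin{align*}
\Sigma_1 &:= \sum_{k=1}^{\lfloor n/2 \rfloor} \binom{n}{k}\, h_2(\phi_{\mathcal{G}} k, k),\\
\Sigma_2 &:= \sum_{k=0}^{\lfloor n/2 \rfloor} \binom{n}{k}\, h_2(\phi_{\mathcal{G}} k, n-k),
\end{align*}
it suffices to show $\Sigma_1 \le 2/n$, to bound the $k{=}0$ summand of $\Sigma_2$ by $1/n$, and to bound the remaining $k \ge 1$ summands of $\Sigma_2$ by $2/n$.

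The workhorse is a homogeneity observation. After absorbing the $\tfrac{4}{3}[(1-p)\vee(1-q)\alpha]\{(1-2p)z + \alpha(1-2q)w\}$ term of the Bernstein denominator into its $z$- and $w$-coefficients, one can write $-\log h_2(z,w) = (Az+Bw)^2/(Pz+Qw)$ with $A = 1-2p$, $B = \alpha(1-2q)$, and explicit constants $P, Q$. This ratio is homogeneous of degree one, giving $-\log h_2(\phi_{\mathcal{G}} k, k) = k\,[-\log h_2(\phi_{\mathcal{G}}, 1)]$ and $-\log h_2(0, n) = n\,[-\log h_2(0, 1)]$. A short expansion shows the denominator in $-\log h_2(\phi_{\mathcal{G}}, 1)$ is exactly $4/3$ times the denominator in the first displayed inequality of the statement; hence that inequality implies $-\log h_2(\phi_{\mathcal{G}}, 1) \ge \tfrac{3}{4}\cdot\tfrac{8}{3}\log n = 2\log n$, so $h_2(\phi_{\mathcal{G}} k, k) \le n^{-2k}$. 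Together with $\binom{n}{k} \le n^k$ and a geometric series, $\Sigma_1 \le \sum_{k\ge 1} n^{-k} \le 2/n$ for $n \ge 2$. The same rescaling matches $-\log h_2(0, 1)$ against the second displayed inequality divided by $n$, yielding $-\log h_2(0, n) \ge \log n$ and $h_2(0, n) \le 1/n$, which handles the $k=0$ contribution to $\Sigma_2$.

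The main obstacle is the tail $\sum_{k=1}^{\lfloor n/2\rfloor}\binom{n}{k}\, h_2(\phi_{\mathcal{G}} k, n-k)$, whose exponent is no longer homogeneous in a single parameter. My plan is to reduce it to $\Sigma_1$ by showing that for each fixed $k \in \{1, \ldots, \lfloor n/2\rfloor\}$, the map $w \mapsto -\log h_2(\phi_{\mathcal{G}} k, w)$ is non-decreasing on $[k, n-k]$; differentiating the quadratic-over-linear form, the sign of the derivative coincides with that of $(2BP - QA)\phi_{\mathcal{G}} k + QBw$, and a direct calculation shows $2BP - QA \ge 0$ in the regime imposed by the statement (in particular throughout the symmetric case $p = q$, $\alpha = 1$, and more generally whenever the edge signal controlled by the first condition dominates). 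Granting this monotonicity, $h_2(\phi_{\mathcal{G}} k, n-k) \le h_2(\phi_{\mathcal{G}} k, k) \le n^{-2k}$ for $k \le n/2$, and the same geometric-sum estimate contributes $2/n$. Altogether $\Sigma_1 + \Sigma_2 \le 2/n + 1/n + 2/n = 5/n$, and Theorem~\ref{theorem4} yields $\hat{\pmb{y}}_{mle}(X,\pmb{c}) = \pmb{y}^*$ with probability at least $1 - 5/n$.

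If monotonicity fails in an extreme corner of $(p, q, \alpha)$-space, a backup is to lower bound $-\log h_2(\phi_{\mathcal{G}} k, n-k)$ via $(a+b)^2 \ge a^2 + b^2$ combined with the fraction inequality $(x_1+x_2)/(y_1+y_2) \ge \min(x_1/y_1, x_2/y_2)$, and to handle the resulting ``edge branch'' with the first condition and the ``node branch'' with the second condition together with $n - k \ge n/2$. The bookkeeping is heavier but sidesteps the derivative-sign analysis.
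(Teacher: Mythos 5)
Your decomposition is exactly the paper's: bound $\Sigma_1$ by $2/n$ using the first condition, bound the $k=0$ term $h_2(0,n)$ by $1/n$ using the second condition, and fold the $k\ge 1$ part of $\Sigma_2$ into $\Sigma_1$ by monotonicity of $h_2$ in its second argument; the error probabilities then add to $5/n$ via Theorem \ref{theorem4}. Your treatment of $\Sigma_1$ and of $h_2(0,n)$ is correct: the degree-one homogeneity of $-\log h_2$ is precisely why $h_2(\phi_{\mathcal{G}}k,k)=r^k$ with $r=h_2(\phi_{\mathcal{G}},1)\le n^{-2}$, and your $\binom{n}{k}\le n^k$ plus geometric-series bound $\frac{1}{n-1}\le\frac{2}{n}$ is a slightly more elementary substitute for the paper's route, which bounds $\sum_k\binom{n}{k}r^k\le(1+r)^n-1\le 2n^{-1}$ by recycling the concavity argument from the proof of Corollary \ref{corollary2}. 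The $\tfrac34$ rescaling between the Bernstein denominator and the denominators in the stated conditions is also checked correctly.

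The gap is the monotonicity step, which is the only genuinely delicate point. Your derivative computation is right --- the sign is that of $(2BP-QA)\phi_{\mathcal{G}}k+QBw$ --- but you do not verify $2BP-QA\ge 0$; you assert that a ``direct calculation'' gives it ``in the regime imposed by the statement.'' That framing is off: the inequality involves only $(p,q)$ through $\alpha=\log\frac{1-q}{q}\big/\log\frac{1-p}{p}$ and $M=(1-p)\vee(1-q)\alpha$, while the corollary's hypotheses can be met for any fixed $(p,q)$ by taking $\phi_{\mathcal{G}}$ and $n$ large, so they cannot confine $(p,q)$ to a favourable region --- you need the sign condition for all $p,q\in(0,\tfrac12)$. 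And it is not a routine calculation: the paper isolates exactly this claim as Lemma \ref{theorem4_lemma}, reduces it to showing $r(p,q)=\big(\tfrac{6q(1-q)\alpha}{1-2q}+M\big)\big/\big(\tfrac{6p(1-p)}{1-2p}+M\big)\ge\tfrac12$, and then establishes it only numerically (Figure \ref{fig:r_function}). Your backup plan does not rescue this: after splitting with $(a+b)^2\ge a^2+b^2$ and the mediant inequality, the minimum may be attained by the edge branch, which requires a lower bound of order $k\log n$ on $(1-2p)^2\phi_{\mathcal{G}}k\big/\big(8p(1-p)+\tfrac43M(1-2p)\big)$; this does not follow from the stated first condition, whose numerator and denominator both carry the node terms and which can be satisfied chiefly by the node signal when $\alpha(1-2q)\gg(1-2p)\phi_{\mathcal{G}}$. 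So the architecture is right and two of the three bounds are complete, but the one inequality the paper itself could only check numerically is left unproven.
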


\textbf{Tightness of Sufficient and Necessary Conditions.}
When $p$ and $q$ have similar values, the sufficient condition in Corollary \ref{corollary4} can be approximately written as
$(\phi_{\mathcal{G}}+1)\cdot\frac{(1-2p)^2}{(1-p)(1 + 4p)} \geq \frac{8}{3}\log n$ and
$\frac{(1-2q)^2n}
{ (1-q)(1+4q)}
\geq \frac{4}{3}\log n$,
and the necessary condition (\ref{eq:corollary3}) can be written as
$(|\mathcal{E}|+n)\big(1 - \frac{H^*(p)}{\log 2}\big)
> \frac{n}{2} - 1$.
If $|\mathcal{E}|/\phi_{\mathcal{G}} = \Theta(n)$ holds (e.g., complete graph, star graph, regular expander), then
$n(\phi_{\mathcal{G}}+1)\frac{(1-2p)^2}{(1-p)(1 + 4p)} \geq Cn\log n$
and
$n(\phi_{\mathcal{G}}+1)\big(1 - \frac{H^*(p)}{\log 2}\big)
> C'(\frac{n}{2} - 1)$
can be considered the sufficient and necessary conditions
for some constants $C$ and $C'$.
Since $1 - \frac{H^*(p)}{\log 2}$ and $\frac{(1-2p)^2}{(1-p) (1 + 4p)}$ (red and blue lines in Figure \ref{fig:match}, respectively) are almost the same,
the sufficient and necessary conditions are tight up to at most a logarithmic factor.

\subsection{Illustration}
\label{subsec:illustration2}

\begin{figure}[t]
\vskip 0.2in
\begin{center}
\centerline{\includegraphics[width=\columnwidth]{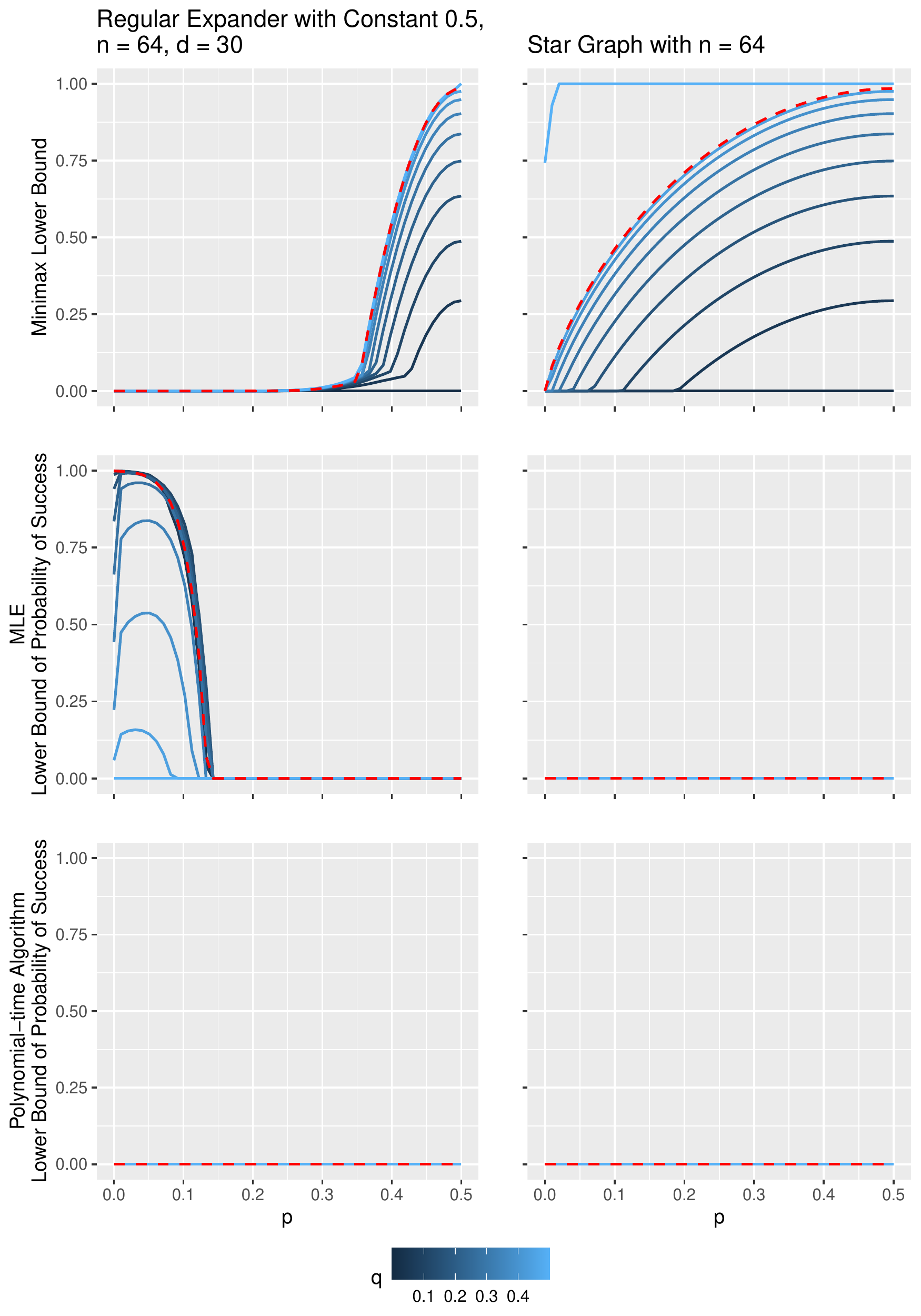}}
\caption{
Minimax lower bound in Theorem \ref{theorem3} (first row), lower bound of probability of success of the MLE algorithm in Theorem \ref{theorem4} (second row), and lower bound of probability of success of the polynomial-time algorithm (third row),
for regular expander with $n=64$, $d=30$ (first column)
and star graph with $n=64$ (second column.)
The x-axis indicates $p\in[0,\frac{1}{2}]$.
Different blue colors represent different $q\in[0,\frac{1}{2}]$.
Red dashed lines denote the bounds provided in Section \ref{sec:case_that_only_edge} for corresponding graphs.
}
\label{fig:illustration2}
\end{center}
\vskip -0.2in
\end{figure}

Figure \ref{fig:illustration2} graphically illustrates the bounds of the probabilities in Theorem \ref{theorem3} and \ref{theorem4}, and the lower bound of the probability of success of the polynomial-time algorithm (which is $1-\epsilon_1 -\epsilon_2$) for the regular expanders and star graphs.
Results for more classes of graphs are given in Appendix \ref{appendix_illustration2}.

As in Section \ref{subsec:illustration1}, we can observe the gap between the fundamental limits and the performance of the polynomial-time algorithm in regular expanders.
We can observe that the lower bound of the probability of success of the MLE algorithm increases as $q$ decreases,
while the lower bound of the probability of success of the polynomial-time algorithm is always zero even with $q=0$.

Next, we can observe that the value of $q$ has a substantial influence on the probability of exact recovery.
When $q$ is close to $\frac{1}{2}$, exact recovery is impossible for any algorithm in star graphs for most $p$ values, as shown in the plot of minimax lower bounds.
On the other hand, when $q$ is close to zero,
the minimax lower bounds become zero in both of regular expanders and star graphs, that is, the necessary condition for exact recovery holds for any algorithm.
Also, when $q$ is close to zero in regular expanders, the lower bound of the probability of success of the MLE algorithm becomes slightly higher than that in the regime where only the noisy edge observation $X$ is given.

Lastly, we remark that there is room for improvement in the lower bound of the probability of success of the MLE algorithm,
because the bound decreases when $p$ decreases around zero in regular expanders.
This does not match with the intuition that the probability of success must increase as $p$ decreases.

\section{Concluding Remarks}
\label{sec:concluding_remarks}

We discussed the fundamental limits of the exact inference problem under a simple generative model assumption \citep{globerson2015hard}, and considered two regimes, one in which only noisy edge observations are given, and one regime in which noisy edge and node observations are both provided.
From the limit bounds, we derived the sufficient and necessary conditions for exact recovery irrespective of computational efficiency,
and we further revealed that those conditions are tight up to at most a logarithmic factor.
Moreover, by comparing the limit bounds of the MLE algorithm and a polynomial-time algorithm \citep{bello2019exact}, we showed that there is a gap between the fundamental limits and the performance of a currently existent and tractable algorithm for exact inference, and thus there is still a need to develop better algorithms.

We assumed a plain generative model in this paper.
Our strategies of the proofs will be able to provide a guide to future works dealing with more complex models.
For instance, as in \citet{heidari2020approximate}, considering non-binary categorical labels would be interesting.
Finally, as mentioned in the main text, there is still room for improvement in the bounds of probabilities we derived.

\bibliography{bibliofile}
\bibliographystyle{plainnat}

\clearpage
\appendix
\onecolumn

\section{Supplementary Material}

\subsection{Preliminaries}

We first introduce the notations, lemmas and facts which will be used throughout the proofs.

For any $\pmb{y} \in \mathcal{Y}=\{-1, 1\}^n$, we write $Y := \pmb{y}\pmb{y}^T = -\pmb{y}(-\pmb{y})^T$.
Also, for any $n\times n$ matrix $A$ and any subset $\mathcal{B}\subseteq\mathcal{E}$, 
we denote by $A_\mathcal{B}$ an upper triangular matrix such that $[A_\mathcal{B}]_{ij} = A_{ij}$ if $(i,j)\in \mathcal{B}$ and $[A_\mathcal{B}]_{ij} = 0$ otherwise. 
We define $\mathcal{Y}_\mathcal{B}:=\{Y_\mathcal{B} ~;~ Y= \pmb{y}\pmb{y}^T, \pmb{y}\in \mathcal{Y}\}$ for any $\mathcal{B}\subseteq\mathcal{E}$.

We note that every connected graph has a spanning tree subgraph, which is a tree containing every vertex of the graph \citep{bollobas2013modern}.
From this fact, we can prove the following lemma.

\begin{lemma}
\label{lemma1}
For any spanning tree subgraph $\mathcal{G}_1 = (\mathcal{V}, \mathcal{E}_1)$ of a connected graph $\mathcal{G}=(\mathcal{V}, \mathcal{E})$ with $n$ nodes,
the map $(Y_{\mathcal{E}} \in \mathcal{Y}_{\mathcal{E}}) \mapsto
([Y_{\mathcal{E}}]_{\mathcal{E}_1} = Y_{\mathcal{E}_1} \in \mathcal{Y}_{\mathcal{E}_1})$
is bijective.
This implies that the elements of $Y_{\mathcal{E}_2}$ can be determined by the elements of $Y_{\mathcal{E}_1}$ where $\mathcal{E}_2=\mathcal{E}\backslash \mathcal{E}_1$.
\end{lemma}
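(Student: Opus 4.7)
The plan is to argue that each direction of bijectivity follows from the definition of $\mathcal{Y}_\mathcal{B}$ together with one elementary identity about $\pm 1$ products along tree paths, namely that $y_u y_v$ equals the product of $y_i y_j$ over the edges on the unique $u$-to-$v$ path in $\mathcal{G}_1$. Since a spanning tree $\mathcal{G}_1$ is connected and contains every vertex, such a path exists for every pair $(u,v)$, and in particular for every edge $(u,v) \in \mathcal{E}$.

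Surjectivity is immediate from the definitions: if $Y_{\mathcal{E}_1} \in \mathcal{Y}_{\mathcal{E}_1}$ then by definition $Y_{\mathcal{E}_1} = [\pmb{y}\pmb{y}^\top]_{\mathcal{E}_1}$ for some $\pmb{y} \in \mathcal{Y}$, so the element $Y_\mathcal{E} = [\pmb{y}\pmb{y}^\top]_\mathcal{E} \in \mathcal{Y}_\mathcal{E}$ restricts to $Y_{\mathcal{E}_1}$. For injectivity, suppose $\pmb{y}, \pmb{y}' \in \mathcal{Y}$ satisfy $y_i y_j = y'_i y'_j$ for every $(i,j) \in \mathcal{E}_1$; I want to conclude that $y_i y_j = y'_i y'_j$ for every $(i,j) \in \mathcal{E}$, which forces $Y_\mathcal{E} = Y'_\mathcal{E}$. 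Fix an edge $(u,v) \in \mathcal{E}$ and let $u = v_0, v_1, \ldots, v_k = v$ be the unique path between them in $\mathcal{G}_1$. Using $y_{v_\ell}^2 = 1$ for the interior nodes, the telescoping identity
\[
y_u y_v \;=\; \prod_{\ell=0}^{k-1} y_{v_\ell} y_{v_{\ell+1}}
\]
holds, and the same identity holds with $\pmb{y}$ replaced by $\pmb{y}'$. Each factor on the right is an edge of $\mathcal{E}_1$, hence equal for $\pmb{y}$ and $\pmb{y}'$ by assumption, so $y_u y_v = y'_u y'_v$ as desired.

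This establishes that the map is a bijection; as an immediate consequence, $Y_{\mathcal{E}_2}$ is determined by $Y_{\mathcal{E}_1}$ via the same path-product formula applied to each $(u,v) \in \mathcal{E}_2 = \mathcal{E} \setminus \mathcal{E}_1$. The only point that needs any care is verifying the telescoping identity for $\pm 1$ variables, but this is routine and amounts to cancelling squares of $\pm 1$ interior factors; there is no substantive obstacle in the argument. I would also remark in passing that, as a sanity check, both $\mathcal{Y}_\mathcal{E}$ and $\mathcal{Y}_{\mathcal{E}_1}$ have cardinality $2^{n-1}$ (the global sign flip $\pmb{y} \mapsto -\pmb{y}$ leaves $\pmb{y}\pmb{y}^\top$ invariant), so surjectivity plus equal cardinality would in fact already yield bijectivity once surjectivity is known.
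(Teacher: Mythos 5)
Your proof is correct, but it proceeds differently from the paper's. The paper establishes surjectivity (immediate from the definitions, as you do) and then invokes a counting argument: both $\mathcal{G}$ and $\mathcal{G}_1$ are connected, so $|\mathcal{Y}_{\mathcal{E}}| = |\mathcal{Y}_{\mathcal{E}_1}| = 2^{n-1}$ (each $\pmb{y}\pmb{y}^\top$ is shared exactly by $\pm\pmb{y}$), and a surjection between finite sets of equal cardinality is a bijection --- this is precisely the alternative you sketch in your closing remark. You instead prove injectivity directly via the telescoping identity $y_u y_v = \prod_{\ell} y_{v_\ell} y_{v_{\ell+1}}$ along the unique tree path, showing that agreement of edge products on $\mathcal{E}_1$ forces agreement on all of $\mathcal{E}$. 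Your route has two advantages: it gives an explicit formula reconstructing $Y_{\mathcal{E}_2}$ from $Y_{\mathcal{E}_1}$, which directly substantiates the lemma's final sentence (the paper only gets this as an abstract consequence of bijectivity), and it makes transparent the role of connectivity, which the paper's cardinality claim $(\pmb{y}\pmb{y}^\top)_{\mathcal{E}} \neq (\tilde{\pmb{y}}\tilde{\pmb{y}}^\top)_{\mathcal{E}}$ for $\tilde{\pmb{y}} \neq \pm\pmb{y}$ uses implicitly but does not justify. The paper's argument is shorter once the cardinality facts are granted. Both are valid.
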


\begin{proof}
For any $Y_{\mathcal{E}_1} \in \mathcal{Y}_{\mathcal{E}_1}$, there exists $Y_{\mathcal{E}} \in \mathcal{Y}_{\mathcal{E}}$ such that $[Y_{\mathcal{E}}]_{\mathcal{E}_1} = Y_{\mathcal{E}_1}$, i.e., the map is onto.
Also, for any connected graph $\mathcal{G}=(\mathcal{V}, \mathcal{E})$, $\pmb{y}\pmb{y}^T =  -\pmb{y}(-\pmb{y})^T$ for all $\pmb{y}\in\mathcal{Y}$ and $\big(\pmb{y}\pmb{y}^T\big)_\mathcal{E} \neq  \big(\Tilde{\pmb{y}}\Tilde{\pmb{y}}^T\big)_\mathcal{E}$ if $\Tilde{\pmb{y}}\neq \pmb{y}$ or $-\pmb{y}$,
and thus the cardinality of $\mathcal{Y}_{\mathcal{E}}$ is $|\mathcal{Y}_{\mathcal{E}}| = \frac{1}{2}\cdot |\mathcal{Y}| = 2^{n-1}$.
This holds for both $\mathcal{Y}_{\mathcal{E}}$ and $\mathcal{Y}_{\mathcal{E}_1}$, that is, they have the same cardinality. Therefore, the map is bijective. 
\end{proof}

The above lemma results in the following fact:

\begin{fact}
\label{fact1}
Note that for any spanning tree subgraph $\mathcal{G}_1 = (\mathcal{V}, \mathcal{E}_1)$ of a connected graph $\mathcal{G}=(\mathcal{V}, \mathcal{E})$ with $n$ nodes, the cardinaltiy of $\mathcal{E}_1$ is $|\mathcal{E}_1| = n-1$ and $|\mathcal{Y}_{\mathcal{E}_1}| = 2^{n-1}$.
Thus, we derive that 
$\mathcal{Y}_{\mathcal{E}_1} = \{-1, 1\}^{n-1}$.
\end{fact}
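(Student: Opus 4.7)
The plan is to verify the three assertions of the fact in sequence, each of which is either a standard graph-theoretic fact or a direct consequence of Lemma 1. First I would establish that $|\mathcal{E}_1| = n-1$: this is the well-known characterization of trees on $n$ vertices, which follows by induction on $n$ by removing a leaf (a vertex of degree one, which always exists in a finite tree), reducing both the vertex count and edge count by one and applying the inductive hypothesis. The paper itself already invokes this characterization in Section~\ref{subsec:graph_terminology} when defining tree graphs.

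Second, I would derive $|\mathcal{Y}_{\mathcal{E}_1}| = 2^{n-1}$. The cleanest route is to apply Lemma~\ref{lemma1} with the spanning tree $\mathcal{G}_1$ playing the role of both the ambient connected graph and its own spanning tree, and to reuse the cardinality computation appearing inside the proof of Lemma~\ref{lemma1}. Explicitly, since $\mathcal{G}_1$ is connected, the argument that $\pmb{y}\pmb{y}^\top = (-\pmb{y})(-\pmb{y})^\top$ but $(\pmb{y}\pmb{y}^\top)_{\mathcal{E}_1} \neq (\tilde{\pmb{y}}\tilde{\pmb{y}}^\top)_{\mathcal{E}_1}$ whenever $\tilde{\pmb{y}} \notin \{\pmb{y}, -\pmb{y}\}$ yields $|\mathcal{Y}_{\mathcal{E}_1}| = |\mathcal{Y}|/2 = 2^{n-1}$.

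Third, to conclude $\mathcal{Y}_{\mathcal{E}_1} = \{-1,1\}^{n-1}$, I would use a pigeonhole/cardinality argument after identifying the upper-triangular matrix $Y_{\mathcal{E}_1}$ with the vector of its $|\mathcal{E}_1| = n-1$ nonzero entries (each of which lies in $\{-1,1\}$ since $Y = \pmb{y}\pmb{y}^\top$ with $\pmb{y}\in\{-1,1\}^n$). Under this identification $\mathcal{Y}_{\mathcal{E}_1} \subseteq \{-1,1\}^{n-1}$, and since both sides have cardinality $2^{n-1}$ by the previous steps, equality follows.

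The only subtle point, and what I would be most careful about, is the identification in the last step: the set $\mathcal{Y}_{\mathcal{E}_1}$ is defined as a collection of $n\times n$ upper triangular matrices rather than as tuples, so the equality $\mathcal{Y}_{\mathcal{E}_1} = \{-1,1\}^{n-1}$ is to be read up to the natural bijection sending a matrix to the ordered list of its nonzero entries (say in lexicographic order on $\mathcal{E}_1$). Once this convention is made explicit, the remainder of the proof is a short cardinality comparison and no further obstacle remains.
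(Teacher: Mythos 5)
Your proposal is correct and follows essentially the same route as the paper, which presents Fact~\ref{fact1} as an immediate consequence of the tree edge-count characterization, the cardinality computation $|\mathcal{Y}_{\mathcal{E}_1}| = |\mathcal{Y}|/2 = 2^{n-1}$ from the proof of Lemma~\ref{lemma1}, and a cardinality comparison with $\{-1,1\}^{n-1}$. Your explicit remark about identifying the upper-triangular matrices with tuples of their nonzero entries is a reasonable clarification of a convention the paper leaves implicit, but it does not change the substance of the argument.
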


Next, for any $\pmb{y}$ and $\pmb{y}' \in \mathcal{Y}$, we define $S_{\pmb{y}'}(\pmb{y}) := \{i\in \mathcal{V} ~;~ y'_i \neq y_i\}$.
We denote the Hamming distance by $d(\cdot, \cdot)$.
Now, we can easily check the following facts:

\begin{fact}
\label{fact2}
$d\Big((\pmb{y}'{\pmb{y}'}^T)_\mathcal{E}, (\pmb{y}{\pmb{y}}^T)_\mathcal{E}\Big) 
= \sum_{(i,j)\in\mathcal{E}} \mathbb{I}[y'_iy'_j\neq y_iy_j] = \big| \mathcal{E} \big(S_{\pmb{y}'}(\pmb{y}), S_{\pmb{y}'}(\pmb{y})^c \big) \big|$.
\end{fact}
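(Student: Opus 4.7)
The proof reduces to two short bookkeeping steps, and my plan is to dispatch each in turn.

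For the first equality, I would unwind the definitions. By construction, $(\pmb{y}\pmb{y}^T)_{\mathcal{E}}$ and $(\pmb{y}'{\pmb{y}'}^T)_{\mathcal{E}}$ are both $n\times n$ upper-triangular matrices whose off-edge entries are identically zero, and whose $(i,j)$-entry for $(i,j)\in\mathcal{E}$ equals $y_iy_j$ and $y'_iy'_j$, respectively. Thus these two matrices automatically agree on every coordinate outside $\mathcal{E}$, and the Hamming distance between them counts only disagreements at edge positions, which is exactly $\sum_{(i,j)\in\mathcal{E}}\mathbb{I}[y'_iy'_j\neq y_iy_j]$. This step is purely definitional and should take one line.

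For the second equality, I would introduce the sign ratios $\epsilon_k := y'_k/y_k \in \{-1,+1\}$, so that $k \in S_{\pmb{y}'}(\pmb{y})$ if and only if $\epsilon_k = -1$. Then $y'_iy'_j = \epsilon_i\epsilon_j\, y_iy_j$, so $y'_iy'_j \neq y_iy_j$ is equivalent to $\epsilon_i\epsilon_j = -1$, which in turn is equivalent to exactly one of $i,j$ belonging to $S_{\pmb{y}'}(\pmb{y})$. By the definition of $\mathcal{E}(S, S^c)$ given in Section~\ref{subsec:graph_terminology} (edges with one endpoint in $S$ and the other in $S^c$), this condition is precisely $(i,j)\in \mathcal{E}\bigl(S_{\pmb{y}'}(\pmb{y}), S_{\pmb{y}'}(\pmb{y})^c\bigr)$. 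Summing the indicators therefore yields $\bigl|\mathcal{E}\bigl(S_{\pmb{y}'}(\pmb{y}), S_{\pmb{y}'}(\pmb{y})^c\bigr)\bigr|$.

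There is no genuine obstacle here: the only thing one must be careful about is that the edge set $\mathcal{E}(S,S^c)$ is defined as an unordered cut (counting each crossing edge once), which matches the upper-triangular convention used to build $(\pmb{y}\pmb{y}^T)_\mathcal{E}$, so no double counting occurs. With that sanity check, both equalities follow immediately from the definitions of Hamming distance and of the cut set.
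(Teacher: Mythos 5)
Your proof is correct and follows the only natural route: the paper itself states Fact \ref{fact2} without proof as an easy consequence of the definitions, and your two definitional steps (Hamming distance restricted to edge positions, and $y'_iy'_j\neq y_iy_j$ iff exactly one endpoint lies in $S_{\pmb{y}'}(\pmb{y})$) supply exactly the verification the authors leave implicit. Your remark that the unordered cut set matches the upper-triangular convention, so no double counting occurs, is a sensible sanity check and nothing more is needed.
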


\begin{fact}
\label{fact3}
$d(\pmb{y}', \pmb{y}) = \sum_{i\in\mathcal{V}}\mathbb{I}[y'_i\neq y_i] = |S_{\pmb{y}'}(\pmb{y})|$.
\end{fact}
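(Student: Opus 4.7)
The plan is to verify the two equalities in sequence, both of which follow by directly unpacking definitions. First I would appeal to the standard definition of Hamming distance on $\{-1,1\}^n$: for two vectors $\pmb{y}', \pmb{y} \in \mathcal{Y}$, the quantity $d(\pmb{y}', \pmb{y})$ is the number of coordinates at which the two vectors disagree. Rewriting this count as a sum of indicators yields $d(\pmb{y}', \pmb{y}) = \sum_{i \in \mathcal{V}} \mathbb{I}[y'_i \neq y_i]$, which establishes the first equality without further argument. This step is entirely parallel to the edge-indexed unfolding used in Fact 2, only applied to the node index set rather than the edge set.

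For the second equality, I would unpack the definition $S_{\pmb{y}'}(\pmb{y}) := \{i \in \mathcal{V} : y'_i \neq y_i\}$ and observe that its membership indicator is precisely the coordinate-disagreement indicator, i.e., $\mathbb{I}[i \in S_{\pmb{y}'}(\pmb{y})] = \mathbb{I}[y'_i \neq y_i]$ for every $i \in \mathcal{V}$. Summing over $\mathcal{V}$ and using the standard identity $|S| = \sum_{i \in \mathcal{V}} \mathbb{I}[i \in S]$ for any $S \subseteq \mathcal{V}$ gives $|S_{\pmb{y}'}(\pmb{y})| = \sum_{i \in \mathcal{V}} \mathbb{I}[y'_i \neq y_i]$, which matches the middle expression and closes the chain.

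There is no genuine obstacle here: the statement is a definitional identity rather than a substantive claim, and both equalities are obtained by one-line rewrites. The only care worth taking is to state explicitly that $d(\cdot,\cdot)$ denotes the coordinate-wise Hamming distance on $\{-1,1\}^n$, matching the usage elsewhere in the paper, so that the first equality is immediate rather than tacit. With that convention fixed, the two unfoldings compose to give the advertised identity.
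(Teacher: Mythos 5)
Your proof is correct and matches the paper's intent: the paper presents Fact \ref{fact3} with no explicit argument ("we can easily check the following facts"), precisely because it is the definitional unfolding you give. Both equalities follow by rewriting the Hamming distance as a sum of coordinate-disagreement indicators and identifying that sum with $|S_{\pmb{y}'}(\pmb{y})|$, exactly as you do.
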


\begin{fact}
\label{fact4}
For a fixed $\pmb{y}'$,
the map $S_{\pmb{y}'}(\cdot): \pmb{y}\in\mathcal{Y} \mapsto S_{\pmb{y}'}(\pmb{y}) \in \mathcal{P}(\mathcal{V})$ is bijective where $\mathcal{P}(\mathcal{V})$ is the power set of $\mathcal{V}$.
\end{fact}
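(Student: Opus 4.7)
The plan is to exploit the binary nature of the labels to exhibit an explicit inverse of $S_{\pmb{y}'}(\cdot)$. First I would note that $|\mathcal{Y}| = |\{-1,1\}^n| = 2^n = |\mathcal{P}(\mathcal{V})|$, so the map goes between finite sets of equal cardinality; consequently it suffices to establish either injectivity or surjectivity, from which the other follows automatically. I would pick surjectivity, since it is very easy to write down the candidate preimage directly.

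The key step is the construction of the inverse. Given any $S \subseteq \mathcal{V}$, define $\pmb{y}^{(S)} \in \mathcal{Y}$ coordinate-wise by
\begin{equation*}
y^{(S)}_i = \begin{cases} -y'_i & \text{if } i \in S, \\ \phantom{-}y'_i & \text{if } i \notin S. \end{cases}
\end{equation*}
A one-line check then shows $S_{\pmb{y}'}(\pmb{y}^{(S)}) = \{i \in \mathcal{V} : y'_i \neq y^{(S)}_i\} = S$, proving surjectivity. Combined with the cardinality argument above, this yields bijectivity. Equivalently, to argue injectivity directly, one can suppose $S_{\pmb{y}'}(\pmb{y}_1) = S_{\pmb{y}'}(\pmb{y}_2)$; then for every $i$ one has $(\pmb{y}_1)_i \neq y'_i$ iff $(\pmb{y}_2)_i \neq y'_i$, and since the labels take only the two values $\pm 1$ with $y'_i$ fixed, this forces $(\pmb{y}_1)_i = (\pmb{y}_2)_i$ for all $i$.

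There is no real obstacle in this proof: the content of Fact \ref{fact4} is just the elementary observation that, for a fixed reference vector $\pmb{y}' \in \{-1,1\}^n$, selecting a subset of coordinates to flip is the same datum as specifying the resulting $\pm 1$ vector. The only care needed is to invoke the finiteness/cardinality step so that one does not have to verify both injectivity and surjectivity separately.
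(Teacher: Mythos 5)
Your proof is correct; the paper states Fact \ref{fact4} without proof (as one of several facts that "we can easily check"), and your argument---exhibiting the explicit inverse $S \mapsto \pmb{y}^{(S)}$ obtained by flipping the coordinates of $\pmb{y}'$ indexed by $S$, together with the equal-cardinality observation---is exactly the intended elementary verification.
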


Lastly, we present Assouad's lemma (Lemma 2 in \citep{yu1997assouad}) and Lemma \ref{lemma3} which will be used to prove Lemma \ref{theorem1_lemma1} and Lemma \ref{theorem2_lemma1}.

\begin{lemma}[Assouad's lemma]
\label{lemma2}
Consider a distance function $\delta(\cdot, \cdot)$ on $\mathcal{Y}\times \mathcal{Y}$. Suppose that
there are $n$ pseudo-distances $\{\delta_j(\cdot, \cdot)\}_{j=1,\cdots,n}$ such that for any $\pmb{y}, \pmb{y}' \in \mathcal{Y}$,
$$
\delta(\pmb{y}, \pmb{y}') = \sum_{j=1}^n \delta_j(\pmb{y}, \pmb{y}')
$$
and there exists $\alpha_n >0$ such that
for all $j=1,\cdots,n$,
$$
\delta_j(\pmb{y}, \pmb{y}') \geq \alpha_n
$$
when $y_j\neq y'_j$ and $y_i= y'_i$ for $\forall i \neq j$.
Let $P_{X|\pmb{y}}$ be the conditional probability distribution of $X \in \mathcal{X}$ given $\pmb{y}$, and let $p(X|\pmb{y})$ be its related probability mass function. Then for any estimator $\hat{\pmb{y}}(\cdot)$, we have
$$
\underset{\pmb{y}^*\in\mathcal{Y}}{\max}~\mathbb{E}_{X|\pmb{y}^*}\big[\delta(\hat{\pmb{y}}(X), \pmb{y}^*)\big]
\geq
n\cdot \frac{\alpha_n}{2} 
\min\bigg\{\|P_{X|\pmb{y}} \wedge P_{X|\pmb{y}'} \| : \sum_{i=1}^n \mathbb{I}[y_i \neq y'_i] = 1,~ \pmb{y}, \pmb{y}' \in \mathcal{Y} \bigg\}
$$
where
$\|P_{X|\pmb{y}} \wedge P_{X|\pmb{y}'} \| 
= \sum_{X\in\mathcal{X}} p(X|\pmb{y}) \wedge p(X|\pmb{y}')$.
\end{lemma}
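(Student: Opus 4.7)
The plan is to follow the classical testing-based reduction underlying Assouad's lemma: replace the max-risk by a uniform average, decompose the loss coordinate by coordinate, and reduce each coordinate to a binary hypothesis test that can be bounded via Le Cam's affinity inequality.

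First, I would write
$$
\max_{\pmb{y}^* \in \mathcal{Y}} \mathbb{E}_{X|\pmb{y}^*}\big[\delta(\hat{\pmb{y}}(X), \pmb{y}^*)\big]
\;\geq\; \frac{1}{|\mathcal{Y}|} \sum_{\pmb{y}^* \in \mathcal{Y}} \sum_{j=1}^n \mathbb{E}_{X|\pmb{y}^*}\big[\delta_j(\hat{\pmb{y}}(X), \pmb{y}^*)\big],
$$
using the hypothesized decomposition $\delta = \sum_j \delta_j$. For each fixed coordinate $j$, I would then partition $\mathcal{Y}$ into $|\mathcal{Y}|/2$ disjoint pairs $(\pmb{y}, \pmb{y}')$ whose entries coincide in every position except the $j$th, so that the inner sum becomes an aggregate of $\mathbb{E}_{X|\pmb{y}}[\delta_j(\hat{\pmb{y}}, \pmb{y})] + \mathbb{E}_{X|\pmb{y}'}[\delta_j(\hat{\pmb{y}}, \pmb{y}')]$ over these one-flip pairs, which is exactly the regime in which the lower bound $\delta_j(\pmb{y}, \pmb{y}') \geq \alpha_n$ is available.

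Next, I would convert each pair into a binary test. Define $\psi_j(X) \in \{\pmb{y}, \pmb{y}'\}$ as the closer of the two vectors to $\hat{\pmb{y}}(X)$ in the $\delta_j$ pseudo-metric. By the triangle inequality together with $\delta_j(\pmb{y}, \pmb{y}') \geq \alpha_n$, we obtain $\delta_j(\hat{\pmb{y}}, \pmb{y}) \geq \alpha_n/2$ on the event $\{\psi_j = \pmb{y}'\}$ and symmetrically on $\{\psi_j = \pmb{y}\}$, hence
$$
\mathbb{E}_{X|\pmb{y}}[\delta_j(\hat{\pmb{y}}, \pmb{y})] + \mathbb{E}_{X|\pmb{y}'}[\delta_j(\hat{\pmb{y}}, \pmb{y}')] \;\geq\; \tfrac{\alpha_n}{2}\bigl(\mathbb{P}_{X|\pmb{y}}[\psi_j \neq \pmb{y}] + \mathbb{P}_{X|\pmb{y}'}[\psi_j \neq \pmb{y}']\bigr).
$$
The bracketed sum of testing errors is in turn at least $\|P_{X|\pmb{y}} \wedge P_{X|\pmb{y}'}\|$ by the elementary Le Cam/Neyman--Pearson inequality $\|P \wedge Q\| = \sum_X \min(p(X), q(X))$. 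Summing over the $|\mathcal{Y}|/2$ pairs and then over the $n$ coordinates, lower bounding the per-pair affinity by the worst case, and dividing by $|\mathcal{Y}|$ then collects a factor $n$ from the coordinates and $1/2$ from the pairing, yielding the claimed form, with the minimum taken over exactly the one-coordinate-flip pairs described in the statement.

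The main obstacle I anticipate is pinning down the numerical constant. The crude triangle-inequality step above produces the factor $\alpha_n/4$ rather than $\alpha_n/2$; recovering the stated constant requires the sharper coordinate-wise property $\delta_j(\hat{\pmb{y}}, \pmb{y}) \geq \alpha_n\,\mathbb{I}[\hat{y}_j \neq y_j]$, which is implicit for the Hamming-type losses to which the lemma is applied in Theorem \ref{theorem1}. In that case $\psi_j$ collapses to the coordinate-wise decision $\hat{y}_j(X)$ and no halving is incurred. I would therefore verify this coordinate-wise structure for the specific $\delta_j$ used downstream (or equivalently invoke the version of Assouad's lemma from \citet{yu1997assouad} directly) to close the remaining factor-of-two gap.
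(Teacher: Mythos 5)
This lemma is not proved in the paper at all: it is imported verbatim as Lemma~2 of \citet{yu1997assouad}, so there is no in-paper argument to compare against. Judged on its own terms, your outline has the right skeleton (replace the max by the uniform average over the hypercube, split $\delta=\sum_j\delta_j$, pair up the $|\mathcal{Y}|/2$ one-flip neighbours for each coordinate, and reduce each pair to the affinity $\|P_{X|\pmb{y}}\wedge P_{X|\pmb{y}'}\|$), and your bookkeeping of the factors $n$ and $\tfrac12$ from the coordinates and the pairing is correct. The genuine gap is exactly the one you flag: routing the per-pair bound through the intermediate test $\psi_j$ costs a factor of $2$, and your proposed repair --- assuming $\delta_j(\hat{\pmb{y}},\pmb{y})\geq\alpha_n\,\mathbb{I}[\hat y_j\neq y_j]$ --- is not licensed by the lemma's hypotheses, which only give $\delta_j(\pmb{y},\pmb{y}')\geq\alpha_n$ for one-flip pairs. (It does happen to hold for the paper's downstream choice $\delta_j(\pmb{y},\pmb{y}')=\tfrac1n\mathbb{I}[\pmb{y}\neq\pmb{y}']$, but then you have proved a special case, not the lemma.)

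The standard fix needs no extra structure: do not introduce $\psi_j$ at all. Since the $\delta_j$ are pseudo-distances, the triangle inequality gives, for every realization $X$,
\begin{equation*}
\delta_j\big(\hat{\pmb{y}}(X),\pmb{y}\big)+\delta_j\big(\hat{\pmb{y}}(X),\pmb{y}'\big)\;\geq\;\delta_j(\pmb{y},\pmb{y}')\;\geq\;\alpha_n ,
\end{equation*}
and therefore
\begin{equation*}
\mathbb{E}_{X|\pmb{y}}\big[\delta_j(\hat{\pmb{y}},\pmb{y})\big]+\mathbb{E}_{X|\pmb{y}'}\big[\delta_j(\hat{\pmb{y}},\pmb{y}')\big]
\;\geq\;\sum_{X\in\mathcal{X}}\big(p(X|\pmb{y})\wedge p(X|\pmb{y}')\big)\Big(\delta_j(\hat{\pmb{y}},\pmb{y})+\delta_j(\hat{\pmb{y}},\pmb{y}')\Big)
\;\geq\;\alpha_n\,\|P_{X|\pmb{y}}\wedge P_{X|\pmb{y}'}\| ,
\end{equation*}
i.e.\ the full $\alpha_n$ per pair rather than $\alpha_n/2$. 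Feeding this into your pairing and averaging yields $n\cdot\tfrac{\alpha_n}{2}\cdot\min\|P\wedge P'\|$ exactly as stated, with the minimum over all Hamming-distance-one pairs. With that one substitution your argument is a complete and correct proof; as written, it only establishes the bound with constant $n\alpha_n/4$.
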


\begin{lemma}
\label{lemma3}
$l(n) := \sum_{m=0}^{n}
\binom{n}{m}
\Big[
\big\{
p^{m} \cdot (1-p)^{n - m}
\cdot q \big\}
\wedge 
\big\{
(1-p)^{m} \cdot p^{n - m}
\cdot (1-q) \big\}\Big]
$,
$n\in\mathbb{N}$, is a non-increasing function for any $p, q\in[0, 1]$.
\end{lemma}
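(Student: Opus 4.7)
The plan is to prove $l(n+1) \le l(n)$ directly, from which monotonicity in $n$ follows by induction. To streamline notation I would write $A_m := p^m(1-p)^{n-m}q$ and $B_m := (1-p)^m p^{n-m}(1-q)$, so that $l(n) = \sum_{m=0}^n \binom{n}{m} (A_m \wedge B_m)$, and record the four multiplicative relations between the $l(n+1)$-summands and the $l(n)$-summands: $A_m^{(n+1)} = (1-p)A_m$, $B_m^{(n+1)} = p B_m$, $A_{m+1}^{(n+1)} = p A_m$, and $B_{m+1}^{(n+1)} = (1-p)B_m$. These identities are the only algebraic facts required.

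The first main step is to apply Pascal's rule $\binom{n+1}{m} = \binom{n}{m} + \binom{n}{m-1}$ inside $l(n+1)$, split the resulting sum in two, and reindex the second sum by $m \mapsto m+1$ (using the usual conventions $\binom{n}{-1} = \binom{n}{n+1} = 0$ so that boundary terms vanish). Substituting the multiplicative relations gives
\begin{equation*}
l(n+1) = \sum_{m=0}^{n} \binom{n}{m}\Bigl[
\bigl((1-p)A_m\bigr)\wedge\bigl(p B_m\bigr)
\;+\; \bigl(p A_m\bigr)\wedge\bigl((1-p) B_m\bigr)
\Bigr].
\end{equation*}

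The second main step is to apply the elementary inequality
\begin{equation*}
(\alpha a)\wedge(\beta b) + (\beta a)\wedge(\alpha b) \;\le\; (\alpha+\beta)(a \wedge b),
\qquad \alpha, \beta, a, b \ge 0,
\end{equation*}
which I would verify in one line by assuming $a \le b$ without loss of generality and observing that $(\alpha a)\wedge(\beta b) \le \alpha a$ and $(\beta a)\wedge(\alpha b) \le \beta a$, so that the left-hand side is at most $(\alpha+\beta)a = (\alpha+\beta)(a\wedge b)$. Applying this termwise with $\alpha = 1-p$ and $\beta = p$ (so $\alpha+\beta = 1$) immediately yields $l(n+1) \le \sum_{m=0}^n \binom{n}{m}(A_m \wedge B_m) = l(n)$.

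I do not expect a genuine obstacle: the argument is purely algebraic. The one place to be careful is handling boundary indices when invoking Pascal's rule, so that reindexing cleanly produces two sums over the common range $m \in \{0, \dots, n\}$. It is also worth remarking that the elementary min-inequality is independent of $q$ and symmetric in the roles of $p$ and $1-p$, which is exactly why the claim holds for every $p, q \in [0,1]$ with no further restriction.
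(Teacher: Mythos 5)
Your proposal is correct and follows essentially the same route as the paper's proof: split via Pascal's rule, reindex, factor out $p$ and $1-p$, and finish with an elementary minimum inequality (the paper uses $a\wedge b + c\wedge d \le (a+c)\wedge(b+d)$, which applied to your four terms gives exactly your bound $(\alpha+\beta)(a\wedge b)$). No gaps.
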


\begin{proof}
For any $n\geq 1$,
\begin{align*}
l(n) 
&= 
\sum_{m=0}^{n} \binom{n}{m} \Big[ \big\{ p^{m} \cdot (1-p)^{n - m} \cdot q \big\} \wedge \big\{ (1-p)^{m} \cdot p^{n - m} \cdot (1-q) \big\}\Big]
\\&= 
\sum_{m=0}^{n-1} \binom{n-1}{m} \Big[ \big\{ p^{m} \cdot (1-p)^{n - m} \cdot q \big\} \wedge \big\{ (1-p)^{m} \cdot p^{n - m} \cdot (1-q) \big\}\Big]
\\&~~~~+
\sum_{m=1}^{n} \binom{n-1}{m-1} \Big[ \big\{ p^{m} \cdot (1-p)^{n - m} \cdot q \big\} \wedge \big\{ (1-p)^{m} \cdot p^{n - m} \cdot (1-q) \big\}\Big]
\\&= 
\sum_{m=0}^{n-1} \binom{n-1}{m} \Big[ \big\{ p^{m} \cdot (1-p)^{n-1 - m} \cdot q (1-p)\big\} \wedge \big\{ (1-p)^{m} \cdot p^{n-1 - m} \cdot (1-q) p \big\}\Big]
\\&~~~~+
\sum_{m=0}^{n-1} \binom{n-1}{m} \Big[ \big\{ p^{m} \cdot (1-p)^{n -1 - m} \cdot q p \big\} \wedge \big\{ (1-p)^{m} \cdot p^{n -1 - m} \cdot (1-q) (1-p) \big\}\Big]
\\&\leq
\sum_{m=0}^{n-1} \binom{n-1}{m} \Big[ \big\{ p^{m} \cdot (1-p)^{n-1 - m} \cdot q (1-p) + p^{m} \cdot (1-p)^{n -1 - m} \cdot q p \big\}
\\&~~~~~~~~~~~~~~~~~~~~~~~~\wedge 
\big\{ (1-p)^{m} \cdot p^{n-1 - m} \cdot (1-q) p + (1-p)^{m} \cdot p^{n -1 - m} \cdot (1-q) (1-p) \big\}\Big]
\\&=
\sum_{m=0}^{n-1} \binom{n-1}{m} \Big[ \big\{ p^{m} \cdot (1-p)^{n-1 - m} \cdot q \big\} \wedge \big\{ (1-p)^{m} \cdot p^{n-1 - m} \cdot (1-q) \big\}\Big]
\\&= l(n-1)
\end{align*}
where the inequality holds because $a\wedge b + c\wedge d \leq (a+c)\wedge (b+d)$.
\end{proof}

\subsection{Proof of Theorem \protect\ref{theorem1}}
\label{proof_theorem1}

We can prove Theorem \ref{theorem1} by showing the following lemmas:

\begin{lemma}
\label{theorem1_lemma1}
Under the same conditions as in Theorem \ref{theorem1}, we have
$
\underset{\hat{\pmb{y}} : \mathcal{X} \to \mathcal{Y}}{\inf}
\underset{P \in\mathcal{P}}{\sup}
\mathbb{P}_{ (\pmb{y}^*, X) \sim P } \big[ \hat{\pmb{y}} (X) \neq \pmb{y}^* \big] 
\geq f_1.
$
\end{lemma}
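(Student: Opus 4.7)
The approach is to apply Assouad's lemma (Lemma \ref{lemma2}) with the Hamming distance. Since the family $\mathcal{P}$ contains point-mass priors on $\pmb{y}^*$, and since the elementary bound $\mathbb{I}[\hat{\pmb{y}} \neq \pmb{y}^*] \geq \frac{1}{n} d(\hat{\pmb{y}}, \pmb{y}^*)$ always holds (the right-hand side never exceeds $1$), I first reduce to
\[
\sup_{P \in \mathcal{P}} \mathbb{P}\bigl[\hat{\pmb{y}}(X) \neq \pmb{y}^*\bigr]
\;\geq\;
\frac{1}{n}\, \max_{\pmb{y}^* \in \mathcal{Y}}\, \mathbb{E}_{X|\pmb{y}^*}\bigl[ d(\hat{\pmb{y}}(X), \pmb{y}^*) \bigr].
\]
Writing $d(\pmb{y}, \pmb{y}') = \sum_{j=1}^n \mathbb{I}[y_j \neq y'_j]$ expresses $d$ as a sum of $n$ pseudo-distances, each of which equals $1$ (so $\alpha_n = 1$) when $\pmb{y}, \pmb{y}'$ differ only at coordinate $j$. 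Lemma \ref{lemma2} then gives
\[
\sup_{P \in \mathcal{P}} \mathbb{P}\bigl[\hat{\pmb{y}}(X) \neq \pmb{y}^*\bigr]
\;\geq\;
\frac{1}{2}\,\min\bigl\{ \|P_{X|\pmb{y}} \wedge P_{X|\pmb{y}'}\| \,:\, \pmb{y}, \pmb{y}' \in \mathcal{Y},\; d(\pmb{y}, \pmb{y}') = 1 \bigr\}.
\]

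The core computation is now the affinity $\|P_{X|\pmb{y}} \wedge P_{X|\pmb{y}'}\|$ for a pair that differs only at coordinate $j$. Since the edge noise is independent across edges, $p(X \mid \pmb{y}) = \prod_{(k,l) \in \mathcal{E}} P(X_{kl} \mid y_k y_l)$. For edges not incident to $j$, the factor $P(X_{kl} \mid y_k y_l)$ is identical under $\pmb{y}$ and $\pmb{y}'$; I pull this common factor out of the minimum and sum it to $1$, leaving only the $2^{\Delta_j}$ configurations of observations on the edges incident to $j$. Classifying these by the number $m \in \{0, \ldots, \Delta_j\}$ of incident edges with $X_{kj} = y_k y_j$ (so the other $\Delta_j - m$ satisfy $X_{kj} = -y_k y_j$), each configuration contributes exactly $\min\bigl\{(1-p)^m p^{\Delta_j - m},\, p^m (1-p)^{\Delta_j - m}\bigr\}$, and there are $\binom{\Delta_j}{m}$ such configurations, so
\[
\|P_{X|\pmb{y}} \wedge P_{X|\pmb{y}'}\|
\;=\;
\sum_{m=0}^{\Delta_j} \binom{\Delta_j}{m} \min\bigl\{(1-p)^m p^{\Delta_j - m},\; p^m (1-p)^{\Delta_j - m}\bigr\}.
\]

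To finish, I minimize this expression over the single-flip coordinate $j$, equivalently over $\Delta_j \in \{1, \ldots, \Delta_{\max}\}$. Specializing Lemma \ref{lemma3} to $q = 1/2$ makes $l(\Delta_j)$ equal to exactly one half of the display above, and the lemma asserts that $l$ is non-increasing; hence the minimum over $j$ is attained at any $j$ with $\Delta_j = \Delta_{\max}$. Substituting into the Assouad bound yields $f_1$ exactly. The step I expect to require the most care is the reduction of the affinity to a single sum of minima: because $\min$ does not distribute over products, one must first peel off the identical factors from non-incident edges before classifying the incident-edge configurations; handling this correctly is essential to obtain the sum-of-minima form rather than a loose product-of-minima surrogate.
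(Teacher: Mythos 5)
Your proposal is correct and follows essentially the same route as the paper: Assouad's lemma reduced to the single-flip affinity, the factorization that peels off the non-incident edges before taking the minimum, and Lemma \ref{lemma3} with $q=\tfrac12$ to push $\Delta_j$ down to $\Delta_{\max}$. The only cosmetic difference is that the paper folds the $1/n$ into the pseudo-distances ($\alpha_n=1/n$ with the zero-one loss) whereas you run Assouad on the Hamming distance with $\alpha_n=1$ and divide by $n$ afterward; the resulting bound is identical.
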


\begin{lemma}
\label{theorem1_lemma2}
Under the same conditions as in Theorem \ref{theorem1}, we have
$
\underset{\hat{\pmb{y}} : \mathcal{X} \to \mathcal{Y}}{\inf}
\underset{P \in\mathcal{P}}{\sup}
\mathbb{P}_{ (\pmb{y}^*, X) \sim P } \big[ \hat{\pmb{y}} (X) \neq \pmb{y}^* \big]  
\geq \max\{g_1, g_1^*\}.
$
\end{lemma}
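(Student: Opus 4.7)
My plan is to apply Fano's inequality and derive two independent upper bounds on $I(\pmb{y}^*;X)$, each producing one of $g_1$, $g_1^*$. Since $\pmb{y}^*$ is uniform on $\mathcal{Y}$ with $|\mathcal{Y}|=2^n$, Fano gives
$$
\inf_{\hat{\pmb{y}}} \sup_P \mathbb{P}[\hat{\pmb{y}}(X) \neq \pmb{y}^*] \;\geq\; \frac{n-1}{n} - \frac{I(\pmb{y}^*; X)}{n\log 2},
$$
and since the edge flips are conditionally independent $\text{Bernoulli}(p)$, $H(X\mid\pmb{y}^*) = |\mathcal{E}|H^*(p)$ regardless. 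Strategy A is the trivial cardinality bound $H(X) \leq |\mathcal{E}|\log 2$, which combined with Fano reproduces $g_1$ directly.

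Strategy B is to estimate $H(X)$ more carefully using a spanning tree. Fix any spanning-tree subgraph $(\mathcal{V},\mathcal{E}_1)$ and set $\mathcal{E}_2 := \mathcal{E}\setminus \mathcal{E}_1$, so $|\mathcal{E}_2| = |\mathcal{E}|-n+1$. By Lemma~\ref{lemma1}, every $X$ is encoded by a pair $(\pmb{y}_X,\mathcal{B}_X)$, where $\pmb{y}_X$ is the (sign-ambiguous) label vector consistent with $X$ on tree edges and $\mathcal{B}_X := \{(i,j)\in\mathcal{E}_2 : X_{ij}\neq y_{X,i}y_{X,j}\}$ with $B_X := |\mathcal{B}_X|$. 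For arbitrary $\pmb{y} \in \mathcal{Y}$ with flip-set $S := S_{\pmb{y}_X}(\pmb{y})$ of size $k$, a direct case analysis on tree vs.\ non-tree edges (via Facts~\ref{fact2}--\ref{fact4}) yields
$$
d\bigl(X_\mathcal{E}, Y_\mathcal{E}\bigr) \;=\; \bigl|\mathcal{E}_1(S,S^c)\bigr| + \bigl|\mathcal{B}_X \triangle \mathcal{E}_2(S,S^c)\bigr|.
$$
Combining $|\mathcal{B}\triangle A|\geq (|A|-|\mathcal{B}|)\vee 0$ with the Cheeger bound $|\mathcal{E}(S,S^c)|\geq \phi_\mathcal{G}(k\wedge(n-k))$, the right-hand side is at least $(\phi_\mathcal{G}(k\wedge(n-k)) - B_X)\vee 0$ for $1 \leq k \leq n-1$, and equals $B_X$ for $k\in\{0,n\}$. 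Plugging this into $\mathbb{P}[X] = \frac{(1-p)^{|\mathcal{E}|}}{2^n}\sum_{\pmb{y}}(p/(1-p))^{d(X_\mathcal{E},Y_\mathcal{E})}$ and grouping by $k$ then gives the pointwise bound $\mathbb{P}[X]\leq q(X) := \frac{(1-p)^{|\mathcal{E}|}}{2^n}\tau(B_X)$, with $\tau$ exactly as in the statement.

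The remaining step is to promote this pointwise bound on $\mathbb{P}[X]$ into an upper bound on $H(X)$. Using the crude estimate $\tau(B) \leq 2^n$, the hypothesis $(1-p)^{|\mathcal{E}|}\leq e^{-1}$ forces both $\mathbb{P}[X]$ and $q(X)$ into $(0,1/e]$; on that interval $-x\log x$ is increasing, so $-\mathbb{P}[X]\log \mathbb{P}[X]\leq -q(X)\log q(X)$ pointwise and hence $H(X) \leq \sum_X -q(X)\log q(X)$. Noting that $|\{X : B_X = B\}| = 2^{n-1}\binom{|\mathcal{E}|-n+1}{B}$ converts this sum into an expectation over $B \sim \text{Bin}(|\mathcal{E}|-n+1,\tfrac12)$, and expanding $-q\log q = q\cdot\bigl(-|\mathcal{E}|\log(1-p) + n\log 2 - \log\tau(B)\bigr)$ collects the terms into exactly $\kappa_1$. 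Substituting $H(X) \leq \kappa_1$ into Fano yields a bound that improves on $g_1$ by $(|\mathcal{E}|\log 2 - \kappa_1)/(n\log 2)$; taking the positive part (in case Strategy B happens to be weaker than Strategy A) and restricting to the applicable regime by the indicator reproduces $g_1^*$, and taking the maximum over the two strategies gives $\max\{g_1,g_1^*\}$. I expect the main obstacle to be exactly this transfer step: a pointwise probability inequality does not in general produce an entropy inequality, and the monotonicity of $-x\log x$ only applies when both values sit below $1/e$, which is precisely why the threshold $(1-p)^{|\mathcal{E}|}\leq e^{-1}$ must appear in the indicator of $g_1^*$.
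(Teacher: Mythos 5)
Your proposal is correct and follows essentially the same route as the paper's proof: Fano's inequality with $\mathbb{I}(\pmb{y}^*, X) = H(X) - |\mathcal{E}|H^*(p)$, the trivial bound $H(X)\le|\mathcal{E}|\log 2$ yielding $g_1$, and the spanning-tree/Cheeger refinement of $H(X)$ combined with the monotonicity of $-x\log x$ on $(0,e^{-1}]$ and the count $2^{n-1}\binom{|\mathcal{E}|-n+1}{m}$ yielding $\kappa_1$ and hence $g_1^*$. The only cosmetic difference is that you reach the exponent $(\phi_{\mathcal{G}}\{k\wedge(n-k)\}-B_X)\vee 0$ through an exact tree/non-tree symmetric-difference identity for $d(X_\mathcal{E},Y_\mathcal{E})$, whereas the paper applies the triangle inequality to get $\bigl|\,|\mathcal{E}(S,S^c)|-m_X\,\bigr|$ and then relaxes the absolute value to the positive part; both land on the same $\tau(B_X)$.
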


\subsubsection{Proof of Lemma \protect\ref{theorem1_lemma1}}

We apply Lemma \ref{lemma2} to our problem.
Here, we consider the zero-one distance 
$\delta(\pmb{y}, \pmb{y}') = \mathbb{I}[\pmb{y}\neq \pmb{y}']$
and the pseudo-distance $\delta_j(\pmb{y}, \pmb{y}') = \frac{1}{n}\mathbb{I}[\pmb{y}\neq \pmb{y}']$ for $j=1,\cdots,n$.
Then with $\alpha_n = \frac{1}{n}$, we can derive that
$$
\underset{\pmb{y}^*\in\mathcal{Y}}{\max}~\mathbb{P}_{X|\pmb{y}^*}\big[\hat{\pmb{y}}(X) \neq \pmb{y}^*\big]
\geq
\frac{1}{2} 
\min\bigg\{\|P_{X|\pmb{y}} \wedge P_{X|\pmb{y}'} \| : \sum_{i=1}^n \mathbb{I}[y_i \neq y'_i] = 1,~ \pmb{y}, \pmb{y}' \in \mathcal{Y} \bigg\}.
$$

To find the minimum value of the RHS, we consider $\pmb{y}, \pmb{y}'$ such that
$y_k\neq y'_k$ and $y_i= y'_i$ for $\forall i \neq k$. 
Denote the set of edges which are connected to the $k$th node by $\mathcal{E}_k$.
Then we can obtain that
\begin{align*}
p(X|\pmb{y}) \wedge p(X|\pmb{y}')
&= 
\bigg\{\prod_{(i,j) \in \mathcal{E}} p(X_{ij}|y_iy_j)\bigg\}\wedge
\bigg\{\prod_{(i,j) \in \mathcal{E}} p(X_{ij}|y'_iy'_j)\bigg\}
\\&=
\bigg\{\prod_{(i,j) \notin \mathcal{E}_k} p(X_{ij}|y_iy_j)
\cdot \prod_{(i,j) \in \mathcal{E}_k} p(X_{ij}|y_iy_j) \bigg\}
\wedge \bigg\{\prod_{(i,j) \notin \mathcal{E}_k} p(X_{ij}|y_iy_j) \cdot \prod_{(i,j) \in \mathcal{E}_k} p(X_{ij}|y'_iy'_j) \bigg\}
\\&=
\bigg\{\prod_{(i,j) \notin \mathcal{E}_k} p(X_{ij}|y_iy_j)\bigg\}
\times \Bigg[ \bigg\{ \prod_{(i,j) \in \mathcal{E}_k} p(X_{ij}|y_iy_j) \bigg\} \wedge \bigg\{ \prod_{(i,j) \in \mathcal{E}_k} p(X_{ij}|y'_iy'_j) \bigg\} \Bigg]
\\&=
\bigg\{\prod_{(i,j) \notin \mathcal{E}_k} p(X_{ij}|y_iy_j)\bigg\} \times \Bigg[ \bigg\{ \prod_{(i,j) \in \mathcal{E}_k} p(X_{ij}|y_iy_j) \bigg\} \wedge \bigg\{ \prod_{(i,j) \in \mathcal{E}_k} 1-p(X_{ij}|y_iy_j) \bigg\} \Bigg]
\\&=
\bigg\{\prod_{(i,j) \notin \mathcal{E}_k} p(X_{ij}|y_iy_j)\bigg\}
\\&~~~\times
\Bigg[ \bigg\{ p^{d(X_{\mathcal{E}_k}, Y_{\mathcal{E}_k})} \cdot (1-p)^{\Delta_k - d(X_{\mathcal{E}_k}, Y_{\mathcal{E}_k})} \bigg\} \wedge \bigg\{ (1-p)^{d(X_{\mathcal{E}_k}, Y_{\mathcal{E}_k})} \cdot p^{\Delta_k - d(X_{\mathcal{E}_k}, Y_{\mathcal{E}_k})} \bigg\} \Bigg].
\end{align*}
Accordingly,
\begin{align*}
\|P_{X|\pmb{y}} \wedge P_{X|\pmb{y}'} \| 
&= 
\sum_{X\in\mathcal{X}} p(X|\pmb{y}) \wedge p(X|\pmb{y}')
\\&=
\sum_{X\in\mathcal{X}} \bigg\{\prod_{(i,j) \notin \mathcal{E}_k} p(X_{ij}|y_iy_j)\bigg\}
\\&~~~~~~~~~~\times \Bigg[ \bigg\{ p^{d(X_{\mathcal{E}_k}, Y_{\mathcal{E}_k})} \cdot (1-p)^{\Delta_k - d(X_{\mathcal{E}_k}, Y_{\mathcal{E}_k})} \bigg\} \wedge \bigg\{ (1-p)^{d(X_{\mathcal{E}_k}, Y_{\mathcal{E}_k})} \cdot p^{\Delta_k - d(X_{\mathcal{E}_k}, Y_{\mathcal{E}_k})} \bigg\} \Bigg]
\\&=
\mathop{\sum_{X_{ij} \in \{-1,1\}}}_{(i,j)\notin\mathcal{E}_k} \bigg\{\prod_{(i,j) \notin \mathcal{E}_k} p(X_{ij}|y_iy_j)\bigg\}
\\&~~~~\times
\mathop{\sum_{X_{ij} \in \{-1,1\}}}_{(i,j)\in\mathcal{E}_k} \Bigg[ \bigg\{ p^{d(X_{\mathcal{E}_k}, Y_{\mathcal{E}_k})} \cdot (1-p)^{\Delta_k - d(X_{\mathcal{E}_k}, Y_{\mathcal{E}_k})}
\bigg\} \wedge \bigg\{ (1-p)^{d(X_{\mathcal{E}_k}, Y_{\mathcal{E}_k})} \cdot p^{\Delta_k - d(X_{\mathcal{E}_k}, Y_{\mathcal{E}_k})} \bigg\} \Bigg]
\\&=
\mathop{\sum_{X_{ij} \in \{-1,1\}}}_{(i,j)\in\mathcal{E}_k} \Bigg[ \bigg\{ p^{d(X_{\mathcal{E}_k}, Y_{\mathcal{E}_k})} \cdot (1-p)^{\Delta_k - d(X_{\mathcal{E}_k}, Y_{\mathcal{E}_k})} \bigg\} \wedge \bigg\{ (1-p)^{d(X_{\mathcal{E}_k}, Y_{\mathcal{E}_k})} \cdot p^{\Delta_k - d(X_{\mathcal{E}_k}, Y_{\mathcal{E}_k})} \bigg\} \Bigg].
\end{align*}
Note that
$d(X_{\mathcal{E}_k}, Y_{\mathcal{E}_k})$
has a value between $0$ and $\Delta_k$, 
and for each $m\in[0, \Delta_k]$, there exist $\binom{\Delta_k}{m}$ different $X_{\mathcal{E}_k}$'s satisfying $d(X_{\mathcal{E}_k}, Y_{\mathcal{E}_k})=m$.
Hence, we can write that
\begin{align*}
\|P_{X|\pmb{y}} \wedge P_{X|\pmb{y}'} \| 
&=
\sum_{m=0}^{\Delta_k} \binom{\Delta_k}{m} \bigg[ \Big\{ p^{m} \cdot (1-p)^{\Delta_k - m} \Big\} \wedge \Big\{ (1-p)^{m} \cdot p^{\Delta_k - m} \Big\} \bigg],
\end{align*}
and consequently,
\begin{align*}
&\min\bigg\{\|P_{X|\pmb{y}} \wedge P_{X|\pmb{y}'} \| : \sum_{i=1}^n \mathbb{I}[y_i \neq y'_i] = 1,~ \pmb{y}, \pmb{y}' \in \mathcal{Y} \bigg\}
\\&=
\underset{1\leq k \leq n}{\min} \Bigg( \sum_{m=0}^{\Delta_k} \binom{\Delta_k}{m} \bigg[ \Big\{ p^{m} \cdot (1-p)^{\Delta_k - m} \Big\} \wedge \Big\{ (1-p)^{m} \cdot p^{\Delta_k - m} \Big\} \bigg] \Bigg)
\\&=
\sum_{m=0}^{\Delta_{\max}} \binom{\Delta_{\max}}{m} \bigg[ \Big\{ p^{m} \cdot (1-p)^{\Delta_{\max} - m} \Big\} \wedge \Big\{ (1-p)^{m} \cdot p^{\Delta_{\max} - m} \Big\} \bigg].
\end{align*}
The last equality holds by Lemma \ref{lemma3}, where $q$ is set as $\frac{1}{2}$.

Finally, consider a joint probability distribution $P_0$ of $\pmb{y}^*$ and $X$ where the marginal probability of $\pmb{y}^*$ is $p_{\pmb{y}^*}(\pmb{y}_0) = 1$ for some $\pmb{y}_0\in\mathcal{Y}$, and where $X$ given $\pmb{y}^*$ follows the assumed conditional probability distribution.
Then we have that
\begin{align*}
\underset{P \in\mathcal{P}}{\sup}
\mathbb{P}_{ (\pmb{y}^*, X) \sim P } \big[ \hat{\pmb{y}} (X) \neq \pmb{y}^* \big] 
&\geq
\mathbb{P}_{ (\pmb{y}^*, X) \sim P_0 } \big[ \hat{\pmb{y}} (X) \neq \pmb{y}^* \big]     
\\&=
\mathbb{P}_{X|\pmb{y}_0}\big[\hat{\pmb{y}}(X) \neq \pmb{y}_0\big]
\end{align*}
which holds for any $\pmb{y}_0 \in \mathcal{Y}$.
Hence, we derive that
$$
\underset{P \in\mathcal{P}}{\sup}
\mathbb{P}_{ (\pmb{y}^*, X) \sim P } \big[ \hat{\pmb{y}} (X) \neq \pmb{y}^* \big] 
\geq
\underset{\pmb{y}^*\in\mathcal{Y}}{\max}~\mathbb{P}_{X|\pmb{y}^*}\big[\hat{\pmb{y}}(X) \neq \pmb{y}^*\big]
$$
for any estimator $\hat{\pmb{y}}(\cdot)$.

Therefore, we have
\begin{align*}
\underset{\hat{\pmb{y}} : \mathcal{X} \to \mathcal{Y}}{\inf}
\underset{P \in\mathcal{P}}{\sup}
\mathbb{P}_{ (\pmb{y}^*, X) \sim P } \big[ \hat{\pmb{y}} (X) \neq \pmb{y}^* \big] 
&\geq
\frac{1}{2} \min\bigg\{\|P_{X|\pmb{y}} \wedge P_{X|\pmb{y}'} \| : \sum_{i=1}^n \mathbb{I}[y_i \neq y'_i] = 1,~ \pmb{y}, \pmb{y}' \in \mathcal{Y} \bigg\}
\\&=
\frac{1}{2} \sum_{m=0}^{\Delta_{\max}} \binom{\Delta_{\max}}{m} \bigg[ \Big\{ p^{m} \cdot (1-p)^{\Delta_{\max} - m} \Big\} \wedge \Big\{ (1-p)^{m} \cdot p^{\Delta_{\max} - m} \Big\} \bigg]
= f_1.
\end{align*}

\subsubsection{Proof of Lemma \protect\ref{theorem1_lemma2}}

We will apply Fano's inequality (Theorem 2.10.1 in \citep{cover1999elements}) to our problem. When we observe that $\pmb{y}^* \rightarrow X \rightarrow \hat{\pmb{y}}$ forms a Markov chain, we have
$$
\mathbb{P}(\hat{\pmb{y}}\neq\pmb{y}^*) 
\geq 1-\frac{\mathbb{I}(\pmb{y}^*, X) + \log 2}{\log |\mathcal{Y}|}
$$
where $\mathbb{I}(\pmb{y}^*, X)$ is the mutual information of $\pmb{y}^*$ and $X$. 
To achieve our goal, we need to find an exact form or an upper bound of the mutual information $\mathbb{I}(\pmb{y}^*, X)$.
For simplicity, we write $\pmb{y}=\pmb{y}^*$ in the rest of the proof.

We can write the mutual information as follows:
\begin{align*}
\mathbb{I}(\pmb{y}, X) 
&= \frac{1}{2^n} \sum_{\pmb{y}\in \mathcal{Y}} \mathbb{KL}(P_{X|\pmb{y}}\|P_X)  
\\&=
\frac{1}{2^n} \sum_{\pmb{y}\in \mathcal{Y}} \sum_{X\in\mathcal{X}} p(X|\pmb{y}) \log \bigg[ \frac{p(X|\pmb{y})}{p(X)} \cdot \frac{q(X)}{q(X)} \bigg]
\\&=
\frac{1}{2^n} \sum_{\pmb{y}\in \mathcal{Y}} \sum_{X\in\mathcal{X}} p(X|\pmb{y}) \bigg[ \log  \frac{p(X|\pmb{y})}{q(X)} -\log \frac{p(X)}{q(X)} \bigg]
\\&=
\frac{1}{2^n} \sum_{\pmb{y}\in \mathcal{Y}} \mathbb{KL}(P_{X|\pmb{y}}\|Q_X)
- \sum_{X\in\mathcal{X}} \bigg[ \frac{1}{2^n} \sum_{\pmb{y}\in \mathcal{Y}} p(X|\pmb{y}) \bigg] \log\frac{p(X)}{q(X)}
\\&= 
\frac{1}{2^n} \sum_{\pmb{y}\in \mathcal{Y}} \mathbb{KL}(P_{X|\pmb{y}}\|Q_X)
- \sum_{X\in\mathcal{X}} p(X) \log\frac{p(X)}{q(X)}
\\&= \frac{1}{2^n} \sum_{\pmb{y}\in \mathcal{Y}} \mathbb{KL}(P_{X|\pmb{y}}\|Q_X) - \mathbb{KL}(P_X\|Q_X).
\end{align*}
The first equality holds because $\pmb{y}$ follows a uniform distribution, and the rest holds for any distribution $Q_X$ on $\mathcal{X}$.
If we set $Q_X$ to be a uniform distribution on $\mathcal{X}$, i.e., $q(x) = 2^{-|\mathcal{E}|}$, then we have
\begin{align*}
\mathbb{KL}(P_{X|\pmb{y}}\|Q_X)
&= \sum_{X\in\mathcal{X}} p(X|\pmb{y}) \log\frac{p(X|\pmb{y})}{2^{-|\mathcal{E}|}}
\\&= \sum_{X\in\mathcal{X}} \Big\{\prod_{(i,j)\in\mathcal{E}}p(X_{ij}|y_iy_j)\Big\}\cdot \bigg[
\log \Big(\prod_{(i,j)\in\mathcal{E}}p(X_{ij}|y_iy_j) \Big)
+ |\mathcal{E}|\log 2 \bigg]
\\&= \bigg[\sum_{(i,j)\in\mathcal{E}} \sum_{X_{ij}\in \{-1, 1\}}p(X_{ij}|y_iy_j) \log \Big( p(X_{ij}|y_iy_j) \Big) \bigg] + |\mathcal{E}|\log 2
\\&= |\mathcal{E}|\Big(p\log p + (1-p) \log (1-p)\Big) + |\mathcal{E}|\log 2
\\&= -|\mathcal{E}| H^*(p) + |\mathcal{E}|\log 2
\end{align*}
and
\begin{align*}
\mathbb{KL}(P_X\|Q_X)
&= \sum_{X\in\mathcal{X}} p(X)\log \frac{p(X)}{2^{-|\mathcal{E}|}}
\\&= \sum_{X\in\mathcal{X}} p(X)\log p(X) + |\mathcal{E}|\log 2
\\&= -H(X) + |\mathcal{E}|\log 2
\end{align*}
where $H(X) = - \sum_{X\in\mathcal{X}} p(X)\log p(X)$ is the entropy of $X$. 
Hence, we have the following exact form of the mutual information:
$$
\mathbb{I}(\pmb{y}, X) = H(X) -|\mathcal{E}| H^*(p).
$$
Note that the entropy of a discrete random variable is always positive and bounded above by the logarithm of the size of its domain, which implies that $0 \leq H(X) \leq |\mathcal{E}|\log 2$ in our case. 
By using this fact, we can simply find the upper bound of the mutual information as follows:
$$
\mathbb{I}(\pmb{y}, X) \leq |\mathcal{E}| \big(\log 2 -H^*(p)\big).
$$
Then we have
\begin{align}
\mathbb{P}(\hat{\pmb{y}}\neq\pmb{y}^*) 
&\geq 1-\frac{|\mathcal{E}| \big(\log 2 -H^*(p)\big) + \log 2}{n\log 2} \nonumber    
\\[1em]&=\frac{n-1}{n} - \frac{|\mathcal{E}|}{n}\cdot \bigg(1-\frac{H^*(p)}{\log2} \bigg) = g_1.
\label{eq:minimax1_fano_proof1}
\end{align}

Now, we want to find another upper bound of $H(X)$ instead of $|\mathcal{E}|\log 2$. 
For this, we can first write the probability mass function $p(X)$ as follows:
\begin{align*}
p(X)
&= \sum_{\pmb{y}\in\mathcal{Y}} p(X|\pmb{y})p(\pmb{y})
= \frac{1}{2^n} \sum_{\pmb{y}\in\mathcal{Y}} p(X|\pmb{y})
= \frac{1}{2^n} \sum_{\pmb{y}\in\mathcal{Y}} \Big\{\prod_{(i,j)\in\mathcal{E}}p(X_{ij}|y_iy_j)\Big\}
\\&=\frac{1}{2^n} \sum_{\pmb{y}\in\mathcal{Y}} 
\Big\{\prod_{(i,j)\in\mathcal{E}} p^{\mathbb{I}[X_{ij}\neq y_iy_j]} (1-p)^{1-\mathbb{I}[X_{ij}\neq y_iy_j]}\Big\}
\\&= \frac{1}{2^n} \sum_{\pmb{y}\in\mathcal{Y}} 
\Big\{ p^{d(X, Y_\mathcal{E})} (1-p)^{|\mathcal{E}|- d(X, Y_\mathcal{E})}\Big\}
\\&= \frac{1}{2^n} (1-p)^{|\mathcal{E}|} \sum_{\pmb{y}\in\mathcal{Y}} \bigg(\frac{p}{1-p}\bigg)^{d(X, Y_\mathcal{E})}.
\end{align*}

Here, we will partition the space $\mathcal{X} = \{-1, 1\}^{|\mathcal{E}|}$ into $\mathcal{Y}_{\mathcal{E}}$ and $\mathcal{X}\backslash\mathcal{Y}_{\mathcal{E}}$, and deal with
$-\sum_{X\in\mathcal{Y}_{\mathcal{E}}} p(X)\log p(X)$ and $-\sum_{X\in\mathcal{X}\backslash\mathcal{Y}_{\mathcal{E}}} p(X)\log p(X)$ separately.
The first thing to note is that
$\mathcal{Y}_\mathcal{E}=\{Y_\mathcal{E} ~;~ Y= \pmb{y}\pmb{y}^T, \pmb{y}\in \mathcal{Y}\}$
is the set of all matrices whose elements are feasible label products of the edges. 
Therefore, if $X \in \mathcal{Y}_{\mathcal{E}}$, we can find a vector $\pmb{y}'\in \mathcal{Y}$ which satisfies $X = Y'_\mathcal{E} := (\pmb{y}'{\pmb{y}'}^T)_\mathcal{E}$. 
On the other hand, if $X \in \mathcal{X}\backslash\mathcal{Y}_{\mathcal{E}}$, 
there does not exist such a vector $\pmb{y}'\in \mathcal{Y}$ satisfying $X = Y'_\mathcal{E} := (\pmb{y}'{\pmb{y}'}^T)_\mathcal{E}$.

First, consider the case that $X \in \mathcal{Y}_{\mathcal{E}}$. 
There exists $\pmb{y}'\in \mathcal{Y}$ such that $X = Y'_\mathcal{E} = (\pmb{y}'{\pmb{y}'}^T)_\mathcal{E}$,
and for each $\pmb{y}\in \mathcal{Y}$, we can write $d(X, Y_\mathcal{E}) = d\Big((\pmb{y}'{\pmb{y}'}^T)_\mathcal{E}, (\pmb{y}{\pmb{y}}^T)_\mathcal{E}\Big)$. 
Hence, we have that
\begin{align*}
\sum_{\pmb{y}\in\mathcal{Y}} \bigg(\frac{p}{1-p}\bigg)^{d(X, Y_\mathcal{E})}
&= 
\sum_{\pmb{y}\in\mathcal{Y}} \bigg(\frac{p}{1-p}\bigg)^{d((\pmb{y}'{\pmb{y}'}^T)_\mathcal{E}, (\pmb{y}{\pmb{y}}^T)_\mathcal{E})}
\\&= 
\sum_{\pmb{y}\in\mathcal{Y}} \bigg(\frac{p}{1-p}\bigg)^{| \mathcal{E} (S_{\pmb{y}'}(\pmb{y}), S_{\pmb{y}'}(\pmb{y})^c ) |}
\\&= \sum_{S\in \mathcal{P}(\mathcal{V})} \bigg(\frac{p}{1-p}\bigg)^{|\mathcal{E}(S, S^c)|}
= \sum_{S\subseteq \mathcal{V}} \bigg(\frac{p}{1-p}\bigg)^{|\mathcal{E}(S, S^c)|}    
\end{align*}
where the second and third equalities hold by the Facts \ref{fact2} and \ref{fact4}, respectively.
Accordingly, for any $X \in \mathcal{Y}_{\mathcal{E}}$,
$$
p(X) = \frac{1}{2^n} (1-p)^{|\mathcal{E}|} \sum_{S\subseteq \mathcal{V}} \bigg(\frac{p}{1-p}\bigg)^{|\mathcal{E}(S, S^c)|}.
$$
Note that the above does not depend on $X$.
Also, $|\mathcal{Y}_{\mathcal{E}}|= 2^{n-1}$ as shown in the proof of Lemma \ref{lemma1}, thus
\begin{align*}
-\sum_{X \in \mathcal{Y}_{\mathcal{E}}} p(X)\log p(X)
&= - 2^{n-1} \cdot \frac{1}{2^n} (1-p)^{|\mathcal{E}|} \sum_{S\subseteq \mathcal{V}} \bigg(\frac{p}{1-p}\bigg)^{|\mathcal{E}(S, S^c)|} \cdot
\log \Bigg[\frac{1}{2^n} (1-p)^{|\mathcal{E}|} \sum_{S\subseteq \mathcal{V}} \bigg(\frac{p}{1-p}\bigg)^{|\mathcal{E}(S, S^c)|}\Bigg].
\end{align*}

Next, consider the case that $X \in \mathcal{X}\backslash\mathcal{Y}_{\mathcal{E}}$.
By the Fact \ref{fact1}, there exists $\pmb{y}'\in \mathcal{Y}$ such that
$X_{\mathcal{E}_1} = Y'_{\mathcal{E}_1} = (\pmb{y}'{\pmb{y}'}^T)_{\mathcal{E}_1}$. 
Also, since $X \in \mathcal{X}\backslash\mathcal{Y}_{\mathcal{E}}$, 
we have that $X_{\mathcal{E}_2}$ should not be equal to $(\pmb{y}'{\pmb{y}'}^T)_{\mathcal{E}_2}$.
By using these facts and the triangle inequality, for each $\pmb{y}\in \mathcal{Y}$, we can derive the following:
\begin{align*}
d(X, Y_\mathcal{E})
&\geq
| d(Y'_{\mathcal{E}}, Y_{\mathcal{E}})
- d(X, Y'_\mathcal{E}) |
\\&=
| d(Y'_{\mathcal{E}}, Y_{\mathcal{E}})
- d(X_{\mathcal{E}_2}, Y'_{\mathcal{E}_2}) |    
\\&=
\Big| \big| \mathcal{E} \big(S_{\pmb{y}'}(\pmb{y}), S_{\pmb{y}'}(\pmb{y})^c \big) \big|
- d(X_{\mathcal{E}_2}, Y'_{\mathcal{E}_2}) \Big|.
\end{align*}
Therefore,
\begin{align*}
p(X)
&=
\frac{1}{2^n} (1-p)^{|\mathcal{E}|} 
\sum_{\pmb{y}\in\mathcal{Y}} \bigg(\frac{p}{1-p}\bigg)^{d(X, Y_\mathcal{E})}
\\&\leq
\frac{1}{2^n} (1-p)^{|\mathcal{E}|} 
\sum_{\pmb{y}\in\mathcal{Y}} \bigg(\frac{p}{1-p}\bigg)^{\big| | \mathcal{E} (S_{\pmb{y}'}(\pmb{y}), S_{\pmb{y}'}(\pmb{y})^c ) |
- d(X_{\mathcal{E}_2}, Y'_{\mathcal{E}_2}) \big|}    
\\&=
\frac{1}{2^n} (1-p)^{|\mathcal{E}|} 
\sum_{S\subseteq \mathcal{V}} \bigg(\frac{p}{1-p}\bigg)^{\big| | \mathcal{E} (S, S^c ) |
- d(X_{\mathcal{E}_2}, Y'_{\mathcal{E}_2}) \big|}
\end{align*}
where the inequality holds because $p\in[0, \frac{1}{2}]$.
Now, note that $-p\log p$ increases as $p$ grows from $0$ to $e^{-1}$.
Also, we can see that
\begin{align*}
p(X)
&\leq
\frac{1}{2^n} (1-p)^{|\mathcal{E}|} 
\sum_{S\subseteq \mathcal{V}} \bigg(\frac{p}{1-p}\bigg)^{\big| | \mathcal{E} (S, S^c ) |
- d(X_{\mathcal{E}_2}, Y'_{\mathcal{E}_2}) \big|}
\\&\leq
\frac{1}{2^n} (1-p)^{|\mathcal{E}|} \sum_{S\subseteq \mathcal{V}} 1 = \frac{1}{2^n} (1-p)^{|\mathcal{E}|} \cdot 2^n = (1-p)^{|\mathcal{E}|}.
\end{align*}
Hence, if we assume that $(1-p)^{|\mathcal{E}|} \leq \frac{1}{e}$, 
we have
$$
-p(X)\log p(X) \leq -\frac{1}{2^n} (1-p)^{|\mathcal{E}|} \sum_{S\subseteq \mathcal{V}} \bigg(\frac{p}{1-p}\bigg)^{| | \mathcal{E} (S, S^c ) |
- d(X_{\mathcal{E}_2}, Y'_{\mathcal{E}_2}) |} \cdot
\log \Bigg[\frac{1}{2^n} (1-p)^{|\mathcal{E}|} \sum_{S\subseteq \mathcal{V}} \bigg(\frac{p}{1-p}\bigg)^{| | \mathcal{E} (S, S^c ) |
- d(X_{\mathcal{E}_2}, Y'_{\mathcal{E}_2}) |}\Bigg].
$$

We define
$\rho_{\mathcal{G}}(p, m) 
:= \sum_{S\subseteq \mathcal{V}} \big(\frac{p}{1-p}\big)^{| | \mathcal{E} (S, S^c ) |
- m |}$
and
$m_X = d(X_{\mathcal{E}_2}, Y'_{\mathcal{E}_2})$.
We can check that
$m_X$ has a value between $1$ and $|\mathcal{E}_2|=|\mathcal{E}|-n+1$ for any $X \in \mathcal{X}\backslash\mathcal{Y}_{\mathcal{E}}$,
and
for each $m\in[1,|\mathcal{E}|-n+1]$, there exist $\binom{|\mathcal{E}|-n+1}{m}$ different $X_{\mathcal{E}_2}$'s satisfying $m_X=m$.
Also, $2^{n-1}$ different $X$'s in $\mathcal{X}\backslash\mathcal{Y}_{\mathcal{E}}$
have the same $X_{\mathcal{E}_2}$.
In sum,
for each $m\in[1,|\mathcal{E}|-n+1]$,
there are
$2^{n-1}\cdot\binom{|\mathcal{E}|-n+1}{m}$ different $X$'s in $\mathcal{X}\backslash\mathcal{Y}_{\mathcal{E}}$.
Therefore, we can derive that 
\begin{align*}
-\sum_{X\in \mathcal{X}\backslash\mathcal{Y}_{\mathcal{E}}} p(X)\log p(X)
&\leq
-\sum_{X\in \mathcal{X}\backslash\mathcal{Y}_{\mathcal{E}}} 
\frac{1}{2^n} (1-p)^{|\mathcal{E}|} 
\rho_{\mathcal{G}}(p, m_X)
\cdot
\log \bigg[\frac{1}{2^n} (1-p)^{|\mathcal{E}|} 
\rho_{\mathcal{G}}(p, m_X)\bigg]
\\&=
-2^{n-1}\cdot\sum_{m=1}^{|\mathcal{E}|-n+1}
\binom{|\mathcal{E}|-n+1}{m}
\frac{1}{2^n} (1-p)^{|\mathcal{E}|} 
\rho_{\mathcal{G}}(p, m)
\cdot
\log \bigg[\frac{1}{2^n} (1-p)^{|\mathcal{E}|} 
\rho_{\mathcal{G}}(p, m)\bigg].
\end{align*}

Now we can find the upper bound of the entropy $H(X)$.
Note that when $m=0$,
\begin{gather*}
\binom{|\mathcal{E}|-n+1}{0}
\frac{1}{2^n} (1-p)^{|\mathcal{E}|} 
\rho_{\mathcal{G}}(p, 0)
\cdot
\log \bigg[\frac{1}{2^n} (1-p)^{|\mathcal{E}|} 
\rho_{\mathcal{G}}(p, 0)\bigg]
\\=
\frac{1}{2^n} (1-p)^{|\mathcal{E}|} \sum_{S\subseteq \mathcal{V}} \bigg(\frac{p}{1-p}\bigg)^{|\mathcal{E}(S, S^c)|} \cdot
\log \Bigg[\frac{1}{2^n} (1-p)^{|\mathcal{E}|} \sum_{S\subseteq \mathcal{V}} \bigg(\frac{p}{1-p}\bigg)^{|\mathcal{E}(S, S^c)|}\Bigg].
\end{gather*}
Therefore,
\begin{align*}
H(X) 
&= - \sum_{X\in\mathcal{X}} p(X)\log p(X)
= - \sum_{X \in \mathcal{Y}_{\mathcal{E}}} p(X)\log p(X) - \sum_{X \in \mathcal{X}\backslash\mathcal{Y}_{\mathcal{E}}} p(X)\log p(X) 
\\&\leq 
-2^{n-1} \cdot \frac{1}{2^n} (1-p)^{|\mathcal{E}|} \sum_{S\subseteq \mathcal{V}} \bigg(\frac{p}{1-p}\bigg)^{|\mathcal{E}(S, S^c)|} \cdot
\log \Bigg[\frac{1}{2^n} (1-p)^{|\mathcal{E}|} \sum_{S\subseteq \mathcal{V}} \bigg(\frac{p}{1-p}\bigg)^{|\mathcal{E}(S, S^c)|}\Bigg]
\\&~~~~ 
-2^{n-1}\cdot\sum_{m=1}^{|\mathcal{E}|-n+1}
\binom{|\mathcal{E}|-n+1}{m}
\frac{1}{2^n} (1-p)^{|\mathcal{E}|} 
\rho_{\mathcal{G}}(p, m)
\cdot
\log \bigg[\frac{1}{2^n} (1-p)^{|\mathcal{E}|} 
\rho_{\mathcal{G}}(p, m)\bigg]
\\&=
-2^{n-1}\cdot\sum_{m=0}^{|\mathcal{E}|-n+1}
\binom{|\mathcal{E}|-n+1}{m}
\frac{1}{2^n} (1-p)^{|\mathcal{E}|} 
\rho_{\mathcal{G}}(p, m)
\cdot
\log \bigg[\frac{1}{2^n} (1-p)^{|\mathcal{E}|} 
\rho_{\mathcal{G}}(p, m)\bigg].
\end{align*}
The above inequality holds under the assumption that $(1-p)^{|\mathcal{E}|} \leq \frac{1}{e}$.

Now, to see the connection between the above bound and the Cheeger constant $\phi_{\mathcal{G}}$,
we can take a closer look at $\rho_{\mathcal{G}}(p, m)$. Note that $\phi_{\mathcal{G}} = \underset{S\subseteq \mathcal{V}, 1\leq|S|\leq \lfloor\frac{n}{2}\rfloor}{\min} \frac{|\mathcal{E}(S, S^c)|}{|S|}$. Then,
\begin{align*}
\rho_{\mathcal{G}}(p, m)
&= 
\sum_{S\subseteq \mathcal{V}} \bigg(\frac{p}{1-p}\bigg)^{| | \mathcal{E} (S, S^c ) |
- m |}
\\&= 
2\cdot\bigg(\frac{p}{1-p}\bigg)^m
+
\sum_{S\subseteq \mathcal{V}, 1\leq|S|\leq \lfloor\frac{n}{2}\rfloor}
\big(2-\mathbb{I}\big[|S|=n/2, n ~\text{is even}\big]\big)\bigg(\frac{p}{1-p}\bigg)^{| | \mathcal{E} (S, S^c ) |
- m |}
\\&\leq
2\cdot\bigg(\frac{p}{1-p}\bigg)^m
+
\sum_{S\subseteq \mathcal{V}, 1\leq|S|\leq \lfloor\frac{n}{2}\rfloor}
\big(2-\mathbb{I}\big[|S|=n/2, n ~\text{is even}\big]\big)\bigg(\frac{p}{1-p}\bigg)^{( \phi_{\mathcal{G}}|S|
- m )\vee 0}
\\&=
2\cdot\bigg(\frac{p}{1-p}\bigg)^m
+
\sum_{k=1}^{\lfloor\frac{n}{2}\rfloor} 
\big(2-\mathbb{I}\big[k=n/2, n ~\text{is even}\big]\big)
\binom{n}{k}
\bigg(\frac{p}{1-p}\bigg)^{( \phi_{\mathcal{G}}k
- m )\vee 0}
\\&=
2\cdot\bigg(\frac{p}{1-p}\bigg)^m
+
\sum_{k=1}^{n-1} 
\binom{n}{k}
\bigg(\frac{p}{1-p}\bigg)^{\big( \phi_{\mathcal{G}} \cdot\{k\wedge (n-k)\}
- m \big)\vee 0}
= \tau(m)
\end{align*}
where the inequality holds because $|\cdot| \geq (\cdot)\vee 0$, 
$|\mathcal{E}(S, S^c)| = \frac{|\mathcal{E}(S, S^c)|}{|S|}\cdot|S| \geq \phi_{\mathcal{G}} |S|$ when $1\leq|S|\leq \lfloor\frac{n}{2}\rfloor$, and
$a\vee 0 \geq b\vee 0$ if $a\geq b$.

Now, if we can derive that
\begin{align*}
\frac{1}{2^n} (1-p)^{|\mathcal{E}|} 
\rho_{\mathcal{G}}(p, m)
\leq
\frac{1}{2^n} (1-p)^{|\mathcal{E}|} 
\tau(m)
\leq
\frac{1}{e},
\end{align*}
then by using the fact that $-p\log p$ is increasing between $0$ and $\frac{1}{e}$, we have the inequality
$$
-\frac{1}{2^n} (1-p)^{|\mathcal{E}|} 
\rho_{\mathcal{G}}(p, m)
\log \bigg[\frac{1}{2^n} (1-p)^{|\mathcal{E}|} 
\rho_{\mathcal{G}}(p, m) \bigg]
\leq
-\frac{1}{2^n} (1-p)^{|\mathcal{E}|} 
\tau(m)
\log \bigg[\frac{1}{2^n} (1-p)^{|\mathcal{E}|} 
\tau(m)\bigg].
$$
Note that
\begin{gather*}
\tau(m) 
= 
2\cdot\bigg(\frac{p}{1-p}\bigg)^m +
\sum_{k=1}^{n-1} \binom{n}{k} \bigg(\frac{p}{1-p}\bigg)^{\big( \phi_{\mathcal{G}} \cdot\{k\wedge (n-k)\} - m \big)\vee 0} 
\leq 
2 + \sum_{k=1}^{n-1} \binom{n}{k}
= 2^n
\end{gather*}
and if the assumption $(1-p)^{|\mathcal{E}|} \leq \frac{1}{e}$ holds,
we can derive the desired inequality
\begin{align*}
\frac{1}{2^n} (1-p)^{|\mathcal{E}|} \tau(m) 
\leq
\frac{1}{2^n} (1-p)^{|\mathcal{E}|} \cdot 2^n
=
(1-p)^{|\mathcal{E}|}
\leq
\frac{1}{e}.
\end{align*}
Therefore, we have
\begin{align*}
H(X) 
&\leq 
-2^{n-1}\cdot\sum_{m=0}^{|\mathcal{E}|-n+1} \binom{|\mathcal{E}|-n+1}{m} \frac{1}{2^n} (1-p)^{|\mathcal{E}|} \rho_{\mathcal{G}}(p, m) \cdot \log \bigg[\frac{1}{2^n} (1-p)^{|\mathcal{E}|}  \rho_{\mathcal{G}}(p, m)\bigg]
\\&\leq 
-2^{n-1}\cdot\sum_{m=0}^{|\mathcal{E}|-n+1} \binom{|\mathcal{E}|-n+1}{m} \frac{1}{2^n} (1-p)^{|\mathcal{E}|} \tau(m) \cdot \log \bigg[\frac{1}{2^n} (1-p)^{|\mathcal{E}|} \tau(m)\bigg]
\\&=
-2^{n-1}\cdot2^{|\mathcal{E}|-n+1} (1-p)^{|\mathcal{E}|} \sum_{m=0}^{|\mathcal{E}|-n+1} \binom{|\mathcal{E}|-n+1}{m} \frac{1}{2^{|\mathcal{E}|-n+1}} \cdot \frac{\tau(m)}{2^n} \Big\{|\mathcal{E}|\log(1-p) + \log \frac{\tau(m)}{2^n} \Big\}
\\&=
-|\mathcal{E}|\log(1-p) \{2(1-p)\}^{|\mathcal{E}|}  \mathbb{E}_B \bigg[\frac{\tau(B)}{2^n}\bigg] + \{2(1-p)\}^{|\mathcal{E}|} \mathbb{E}_B \bigg[-\frac{\tau(B)}{2^n} \log\Big(\frac{\tau(B)}{2^n}\Big)\bigg]
\\&=
\kappa_1
\end{align*}
where $B\sim \text{Bin}(|\mathcal{E}|-n+1,~ \frac{1}{2})$.
Now, we can derive the upper bound of the mutual information as follows:
\begin{align*}
\mathbb{I}(\pmb{y}, X) 
&= H(X) -|\mathcal{E}| H^*(p)
\\&\leq \big\{|\mathcal{E}|\log 2 \wedge \kappa_1 \big\}\cdot \mathbb{I}\big[(1-p)^{|\mathcal{E}|} \leq e^{-1}\big]
+ |\mathcal{E}|\log 2 \cdot \big(1-\mathbb{I}\big[(1-p)^{|\mathcal{E}|} \leq e^{-1}\big]\big)
-|\mathcal{E}| H^*(p)
\\&=
\Big\{\big(\kappa_1-|\mathcal{E}|\log 2\big)\wedge 0 \Big\}\cdot \mathbb{I}\big[(1-p)^{|\mathcal{E}|} \leq e^{-1}\big]
+ |\mathcal{E}|\log 2
-|\mathcal{E}| H^*(p)
\\&=|\mathcal{E}|\log 2-|\mathcal{E}| H^*(p)
-\Big\{\big(|\mathcal{E}|\log 2-\kappa_1\big)\vee 0 \Big\}\cdot \mathbb{I}\big[(1-p)^{|\mathcal{E}|} \leq e^{-1}\big]
.
\end{align*}
Then, by Fano's inequality, we have
\begin{align}
\mathbb{P}(\hat{\pmb{y}}\neq \pmb{y}^*)
&\geq 1-\frac{|\mathcal{E}|\log 2-|\mathcal{E}| H^*(p) + \log 2}{n\log 2}
+\frac{\big(|\mathcal{E}|\log 2-\kappa_1\big)\vee 0 }{n\log 2}\cdot \mathbb{I}\big[(1-p)^{|\mathcal{E}|} \leq e^{-1}\big] \nonumber
\\&=g_1 + \frac{\big(|\mathcal{E}|\log 2-\kappa_1\big)\vee 0 }{n\log 2}\cdot \mathbb{I}\big[(1-p)^{|\mathcal{E}|} \leq e^{-1}\big]
=g_1^*.
\label{eq:minimax1_fano_proof2}
\end{align}
From (\ref{eq:minimax1_fano_proof1}) and (\ref{eq:minimax1_fano_proof2}), we obtain the desired result.

\subsection{Proof of Theorem \protect\ref{theorem2}}
\label{proof_theorem2}

In this section, we write $Y = Y_\mathcal{E}=[\pmb{y}\pmb{y}^T]_\mathcal{E}$ for each $\pmb{y}\in \mathcal{Y}$ and $Y^* = Y^*_\mathcal{E} = [\pmb{y}^*(\pmb{y}^*)^{T}]_\mathcal{E} = [-\pmb{y}^*(-\pmb{y}^*)^{T}]_\mathcal{E}$,
for simplicity.
Also, we write that 
$$
\pmb{y}^T X \pmb{y} 
= tr(X\pmb{y}\pmb{y}^T)
= tr(XY)
=: \langle X, Y \rangle.
$$
Now we define 
\begin{align*}
\Delta_X(Y) 
&:= \pmb{y}^{*T} X \pmb{y}^* - \pmb{y}^T X \pmb{y} 
= \langle X, Y^* - Y \rangle
= \langle \mathbb{E}[X], Y^* - Y \rangle + \langle X-\mathbb{E}[X], Y^* - Y \rangle.
\end{align*}
Then our goal becomes to find the upper bound of the probability
$\mathbb{P}\big[\exists Y\in{\mathcal{Y}_\mathcal{E}}\backslash\{Y^*\} ~\text{s.t.}~ \Delta_X(Y) \leq 0\big]$.

We will first find the upper bound of the probability $\mathbb{P} \big[\Delta_X(Y) \leq 0 \big]$ for a fixed $Y$.
Note that
$$
\mathbb{E}[X_{ij}] = Y^*_{ij}(1-p) - Y^*_{ij} p = Y^*_{ij} (1-2p),
$$
i.e., 
$\mathbb{E}[X] = Y^* (1-2p)$.
Then $\langle \mathbb{E}[X], Y^* - Y \rangle$ can be written as
\begin{align*}
\langle \mathbb{E}[X], Y^* - Y \rangle 
&= 
(1-2p) \langle Y^*, Y^* - Y \rangle
=
(1-2p) \sum_{(i,j)\in\mathcal{E}} Y^*_{ij} \cdot(Y^*_{ij} - Y_{ij})
\\&=
(1-2p) \Bigg[
\sum_{(i,j)\in\mathcal{E}:\tiny\begin{cases} Y^*_{ij}= 1 \\ Y_{ij}= -1 \end{cases}}
2
+ \sum_{(i,j)\in\mathcal{E}:\tiny\begin{cases} Y^*_{ij}= -1 \\ Y_{ij}= 1 \end{cases}}
(-1)\cdot (-2) \Bigg]
=
(1-2p) \cdot 2 \cdot 
\mathop{\sum_{(i,j)\in\mathcal{E}:}}_{Y^*_{ij}\neq Y_{ij}} 1
\\&= 2(1-2p)N(Y)
\end{align*}
where 
$N(Y) = \big|\big\{ (i,j)\in\mathcal{E} : Y^*_{ij} \neq Y_{ij}\big\}\big|$.

Also, $\langle X-\mathbb{E}[X], Y^* - Y \rangle
= \langle X-Y^* (1-2p), Y^* - Y \rangle$ can be represented by
\begin{align*}
\sum_{(i,j)\in\mathcal{E}} \big(X_{ij}-Y^*_{ij} (1-2p)\big) \cdot(Y^*_{ij} - Y_{ij})
&=
\mathop{\sum_{(i,j)\in\mathcal{E}:}}_{Y^*_{ij}\neq Y_{ij}}
\big(X_{ij}-Y^*_{ij} (1-2p)\big) \cdot(Y^*_{ij} - Y_{ij})
=: T_X(Y).
\end{align*}
Here, we can check that $T_X(Y)$ is the summation of $N(Y)$ i.i.d. binary random variables which have mean zero and variance $16p(1-p)$ and are bounded by $4(1-p)$.
Then, by Bernstein inequality (inequality (2.10) in \citep{boucheron2013concentration}),
\begin{align*}
\mathbb{P} \bigg[ -T_X(Y) \geq 2(1-2p) N(Y) \bigg]
&\leq 
\exp \bigg[ - \frac{2(1-2p)^2 \{N(Y)\}^2 }{ 16p(1-p)N(Y) + \frac{8}{3}(1-p)(1-2p)N(Y)} \bigg]
\\[0.5em]&= 
\exp \bigg[ - \frac{ (1-2p)^2N(Y) }{ \frac{4}{3}(1-p) (1 + 4p)} \bigg],
\end{align*}
that is,
\begin{align*}
\mathbb{P} \big[\Delta_X(Y) \leq 0 \big]
=
\mathbb{P} \bigg[ T_X(Y) + 2(1-2p) N(Y)  \leq 0 \bigg]
\leq 
\exp \bigg[ - \frac{ (1-2p)^2N(Y) }{ \frac{4}{3}(1-p) (1 + 4p)} \bigg]
=: h_1\big(p, N(Y)\big)
\end{align*}
for each $Y \neq Y^*$.

Note that $N(Y) = \big|\big\{ (i,j)\in\mathcal{E} : Y^*_{ij} \neq Y_{ij}\big\}\big| = \big|\mathcal{E}\big( S_{\pmb{y}^*}(\pmb{y}), S_{\pmb{y}^*}(\pmb{y})^c  \big)\big|$
for $\pmb{y}$ such that $Y = \pmb{y}\pmb{y}^T$ by the Fact \ref{fact2}.
Then, we can find the upper bound of the following probability:
\begin{align*}
\mathbb{P}\big[\exists Y\in{\mathcal{Y}_\mathcal{E}}\backslash\{Y^*\} ~\text{s.t.}~ \Delta_X(Y) \leq 0\big]
&\leq 
\sum_{Y\in{\mathcal{Y}_\mathcal{E}}} \mathbb{I}[Y \neq Y^*]\cdot \mathbb{P}\big[\Delta_X(Y) \leq 0 \big]
\\&\leq 
\sum_{Y\in{\mathcal{Y}_\mathcal{E}}} \mathbb{I}[Y \neq Y^*]\cdot h_1(p, N(Y))
\\&=
\frac{1}{2} \sum_{\pmb{y}\in \mathcal{Y}} \mathbb{I}[\pmb{y} \neq \pmb{y}^*, -\pmb{y}^*]\cdot
h_1\big(p, \big|\mathcal{E}\big( S_{\pmb{y}^*}(\pmb{y}), S_{\pmb{y}^*}(\pmb{y})^c  \big)\big|\big)
\\&= 
\frac{1}{2}\sum_{\pmb{y}\in \mathcal{Y}} \mathbb{I}\big[|S_{\pmb{y}^*}(\pmb{y})|\neq 0 ~\text{or}~ n\big] \cdot
h_1\big(p, \big|\mathcal{E}\big( S_{\pmb{y}^*}(\pmb{y}), S_{\pmb{y}^*}(\pmb{y})^c  \big)\big|\big)
\\&= 
\frac{1}{2}\mathop{\sum_{S\subseteq \mathcal{V}:}}_{1\leq|S|\leq n-1} h_1\big(p, \big|\mathcal{E}\big( S, S^c  \big)\big|\big)
\end{align*}
where the first inequality holds by the union bound and the last equality holds by the Fact \ref{fact4}.

Since $h_1(p, \cdot)$ is decreasing, we have
$$
h_1\big(p, \big|\mathcal{E}\big( S, S^c  \big)\big|\big) = h_1\Big(p,~ \frac{|\mathcal{E}( S, S^c )|}{|S|}\cdot |S|\Big) 
\leq h_1(p, \phi_{\mathcal{G}} |S|)
$$
for $1\leq|S|\leq \lfloor \frac{n}{2}\rfloor$.
Therefore,
\begin{align*}
\frac{1}{2}\mathop{\sum_{S\subseteq \mathcal{V}:}}_{1\leq|S|\leq n-1} h_1\big(p, \big|\mathcal{E}\big( S, S^c  \big)\big|\big)
&\leq 
\mathop{\sum_{S\subseteq \mathcal{V}:}}_{1\leq|S|\leq \lfloor \frac{n}{2}\rfloor}  
h_1\big(p, \big|\mathcal{E}\big( S, S^c  \big)\big|\big)
\leq 
\mathop{\sum_{S\subseteq \mathcal{V}:}}_{1\leq|S|\leq \lfloor \frac{n}{2}\rfloor} 
h_1(p, \phi_{\mathcal{G}} |S|)
=
\sum_{k=1}^{\lfloor \frac{n}{2}\rfloor} \binom{n}{k} 
h_1(p, \phi_{\mathcal{G}}k).
\end{align*}
In conclusion, we have the lower bound of the probability of success of the MLE approach as follows:
\begin{align*}
\mathbb{P}\big[\Delta_X(Y) > 0~\text{for all}~ Y\in{\mathcal{Y}_\mathcal{E}}\backslash\{Y^*\}\big]
&=
1 - \mathbb{P}\big[\exists Y\in{\mathcal{Y}_\mathcal{E}}\backslash\{Y^*\} ~\text{s.t.}~ \Delta_X(Y) \leq 0\big]
\\&\geq 
1 - \sum_{k=1}^{\lfloor \frac{n}{2}\rfloor} \binom{n}{k} h_1(p, \phi_{\mathcal{G}}k).
\end{align*}

\subsection{Detailed Description of Comparison to Tractable Algorithm in Section \protect\ref{subsec:mle1}}
\label{proof_comparison1}

Let $r = \exp \Big[ - \frac{ (1-2p)^2 \phi_{\mathcal{G}} }{ \frac{4}{3}(1-p) (1 + 4p)} \Big]$ and $s=\frac{r}{r+1}$ ($\Leftrightarrow r=\frac{s}{1-s}$.)
Then we can write that
\begin{align*}
\sum_{k=1}^{\lfloor \frac{n}{2}\rfloor} \binom{n}{k} h_1(p, \phi_{\mathcal{G}}k)
=
\sum_{k=0}^{\lfloor \frac{n}{2}\rfloor} \binom{n}{k} r^k - 1
= 
\sum_{k=0}^{\lfloor \frac{n}{2}\rfloor} \binom{n}{k} s^k (1-s)^{n-k} (1-s)^{-n} - 1
<
(1-s)^{-n} - 1
=
(r+1)^n -1.
\end{align*}

If $\phi_{\mathcal{G}}=\Omega(n)$, then $r\rightarrow 0$ as $n\rightarrow \infty$. From this fact, we can derive that 
$(r+1)^n -1 \approx e^{nr} - 1$ and thus
$nr \geq \log \big[ \sum_{k=1}^{\lfloor \frac{n}{2}\rfloor} \binom{n}{k} h_1(p, \phi_{\mathcal{G}}k) + 1 \big]$ for sufficiently large $n$.
Since $\sum_{k=1}^{\lfloor \frac{n}{2}\rfloor} \binom{n}{k} h_1(p, \phi_{\mathcal{G}}k) \underset{n\rightarrow \infty}{\longrightarrow} 0$ when $\phi_{\mathcal{G}}=\Omega(n)$, there exists a positive constant $C'$ such that
$$
nr 
\geq \log \Bigg[ \sum_{k=1}^{\lfloor \frac{n}{2}\rfloor} \binom{n}{k} h_1(p, \phi_{\mathcal{G}}k) + 1 \Bigg]
\geq C' \cdot \sum_{k=1}^{\lfloor \frac{n}{2}\rfloor} \binom{n}{k} h_1(p, \phi_{\mathcal{G}}k)
$$
for sufficiently large $n$.

Note that
\begin{align*}
nr 
&= n\cdot \exp \Bigg[ - \frac{ (1-2p)^2 \phi_{\mathcal{G}} }{ \frac{4}{3}(1-p) (1 + 4p)} \Bigg] \\
&= \frac{\epsilon_1}{2} \cdot \exp \Bigg[ \frac{3(1-2p)^2 \phi_{\mathcal{G}}^4}{1536\Delta_{\max}^3 p(1-p)+32(1-2p)(1-p)\phi_{\mathcal{G}}^2\Delta_{\max}} - \frac{ (1-2p)^2 \phi_{\mathcal{G}} }{ \frac{4}{3}(1-p) (1 + 4p)} \Bigg] \\
&= \frac{\epsilon_1}{2} \cdot \exp(-C_{n,p}\phi_{\mathcal{G}})
\end{align*}
where
$C_{n,p} = \frac{3(1-2p)^2}{4(1-p) (1 + 4p)} - \frac{3(1-2p)^2 \phi_{\mathcal{G}}^3}{1536\Delta_{\max}^3 p(1-p)+32(1-2p)(1-p)\phi_{\mathcal{G}}^2\Delta_{\max}}
= \frac{3(1-2p)^2}{4(1-p)}\cdot \bigg[\frac{1}{1+4p} - \frac{1}{8\cdot \frac{\Delta_{\max}}{\phi_{\mathcal{G}}} \cdot \big\{ 48p(\frac{\Delta_{\max}}{\phi_{\mathcal{G}}})^2 + (1-2p) \big\}} \bigg]$.
Since $\frac{\Delta_{\max}}{\phi_{\mathcal{G}}} \geq 1$, we can easily check that
$8\cdot \frac{\Delta_{\max}}{\phi_{\mathcal{G}}} \cdot \big\{ 48p(\frac{\Delta_{\max}}{\phi_{\mathcal{G}}})^2 + (1-2p) \big\} 
\geq 8(1+4p)$ and hence $C_{n,p} \geq \frac{21(1-2p)^2}{32(1-p)(1+4p)}$.
Now, we obtain
$$
\sum_{k=1}^{\lfloor \frac{n}{2}\rfloor} \binom{n}{k} h_1(p, \phi_{\mathcal{G}}k)
\leq C''\cdot nr
= \frac{C''}{2} \cdot \epsilon_1 \exp(-C_{n,p}\phi_{\mathcal{G}})
\leq \frac{C''}{2} \cdot \epsilon_1 \exp\bigg(-\frac{21(1-2p)^2}{32(1-p)(1+4p)}\cdot C''' n\bigg)
$$
for sufficiently large $n$ and some positive constants $C''$ and $C'''$.
Therefore, if $\phi_{\mathcal{G}}=\Omega(n)$,
$$
\frac{\sum_{k=1}^{ _\lfloor \frac{n}{2} _\rfloor} \binom{n}{k} h_1(p, \phi_{\mathcal{G}}k)}{\epsilon_1} = O\bigg(\exp\bigg(-\frac{C(1-2p)^2n}{(1-p)(1+4p)}\bigg)\bigg)
$$
for some positive constant $C$.

\subsection{Proof of Corollary \protect\ref{corollary2}}

Let $r = \exp \Big[ - \frac{ (1-2p)^2 \phi_{\mathcal{G}} }{ \frac{4}{3}(1-p) (1 + 4p)} \Big]$.
In the previous Section \ref{proof_comparison1}, we showed that 
\begin{align*}
\sum_{k=1}^{\lfloor \frac{n}{2}\rfloor} \binom{n}{k} h_1(p, \phi_{\mathcal{G}}k)
<
(r+1)^n -1.
\end{align*}

If the condition $\phi_{\mathcal{G}}\cdot \frac{(1-2p)^2}{(1-p) (1 + 4p)} \geq \frac{8}{3} \log n$ holds, we can derive that
\begin{align*}
\frac{ (1-2p)^2 \phi_{\mathcal{G}} }{ \frac{4}{3}(1-p) (1 + 4p)} 
&\geq 2\log n
\\ &\geq
\log n - \log (3^{\frac{1}{n}}-1)
\\& = -\log\bigg[\frac{3^{\frac{1}{n}}-1}{2}\cdot 2n^{-1}\bigg]
\\&\geq
-\log \big[ (2n^{-1} + 1)^{\frac{1}{n}} -1 \big]
\end{align*}
where the second inequality holds because for any $n\in\mathbb{N}$,
\begin{align*}
3 \geq e \geq (1+n^{-1})^{n}
~\Rightarrow~
3^{\frac{1}{n}} \geq 1+n^{-1}
~\Rightarrow~
\log(3^{\frac{1}{n}}-1) \geq -\log n,
\end{align*}
and the last inequality holds because the function $(x+1)^{\frac{1}{n}} - 1$ is concave and satisfies
\begin{align*}
(x+1)^{\frac{1}{n}} - 1
\geq
\frac{(2+1)^{\frac{1}{n}} - 1 - (0+1)^{\frac{1}{n}} + 1}{2-0}\cdot x
= \frac{3^{\frac{1}{n}}-1}{2}\cdot x
\end{align*}
on $x\in[0,2]$. By letting $x=2n^{-1}$, we obtain the last inequality.

Consequently,
$$
r 
= \exp \bigg[ - \frac{ (1-2p)^2 \phi_{\mathcal{G}} }{ \frac{4}{3}(1-p) (1 + 4p)} \bigg]
\leq
(2n^{-1} + 1)^{\frac{1}{n}} -1
~\Rightarrow~
(r+1)^n - 1 \leq 2n^{-1}.
$$
Therefore,
$$
1 - \sum_{k=1}^{\lfloor \frac{n}{2}\rfloor} \binom{n}{k} h_1(p, \phi_{\mathcal{G}}k)
>
1-2n^{-1}.
$$

\newpage
\subsection{Illustration of Bounds of Probabilities for Additional Examples of Graphs in Section \protect\ref{sec:case_that_only_edge}}
\label{appendix_illustration1}

\begin{figure}[h]
\vskip 0.2in
\begin{center}
\centerline{\includegraphics[width=\columnwidth]{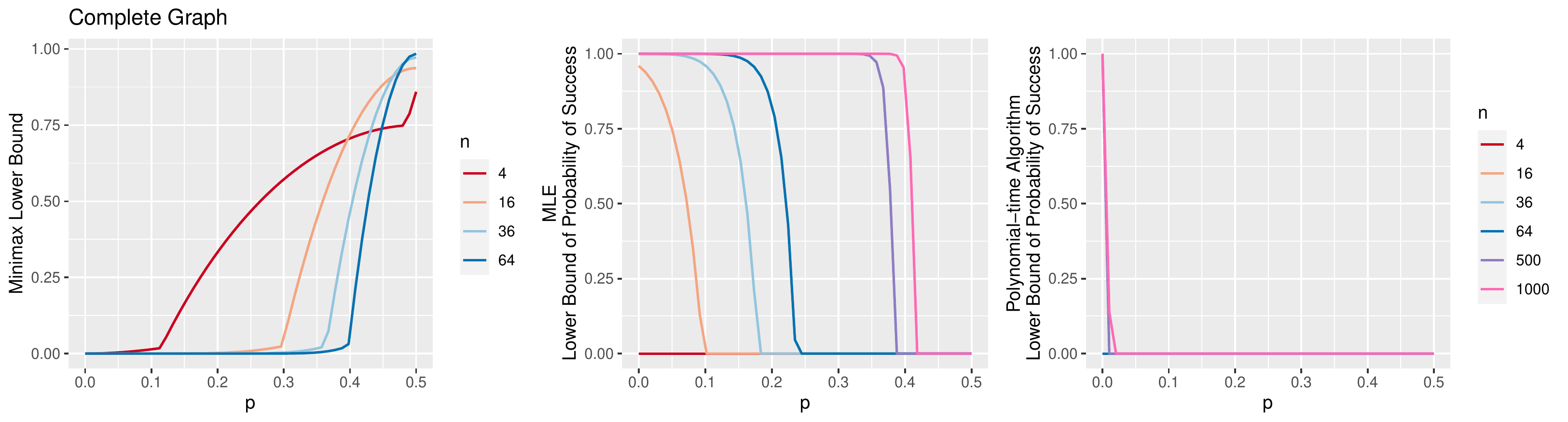}}
\caption{
Minimax lower bound (left), lower bound of the probability of success of the MLE algorithm (center), and lower bound of the probability of success of the polynomial-time algorithm (right) for complete graphs.
}
\label{fig:appendix_illustration1}
\end{center}
\vskip -0.2in
\end{figure}

\begin{figure}[h]
\vskip 0.2in
\begin{center}
\centerline{\includegraphics[width=4in]{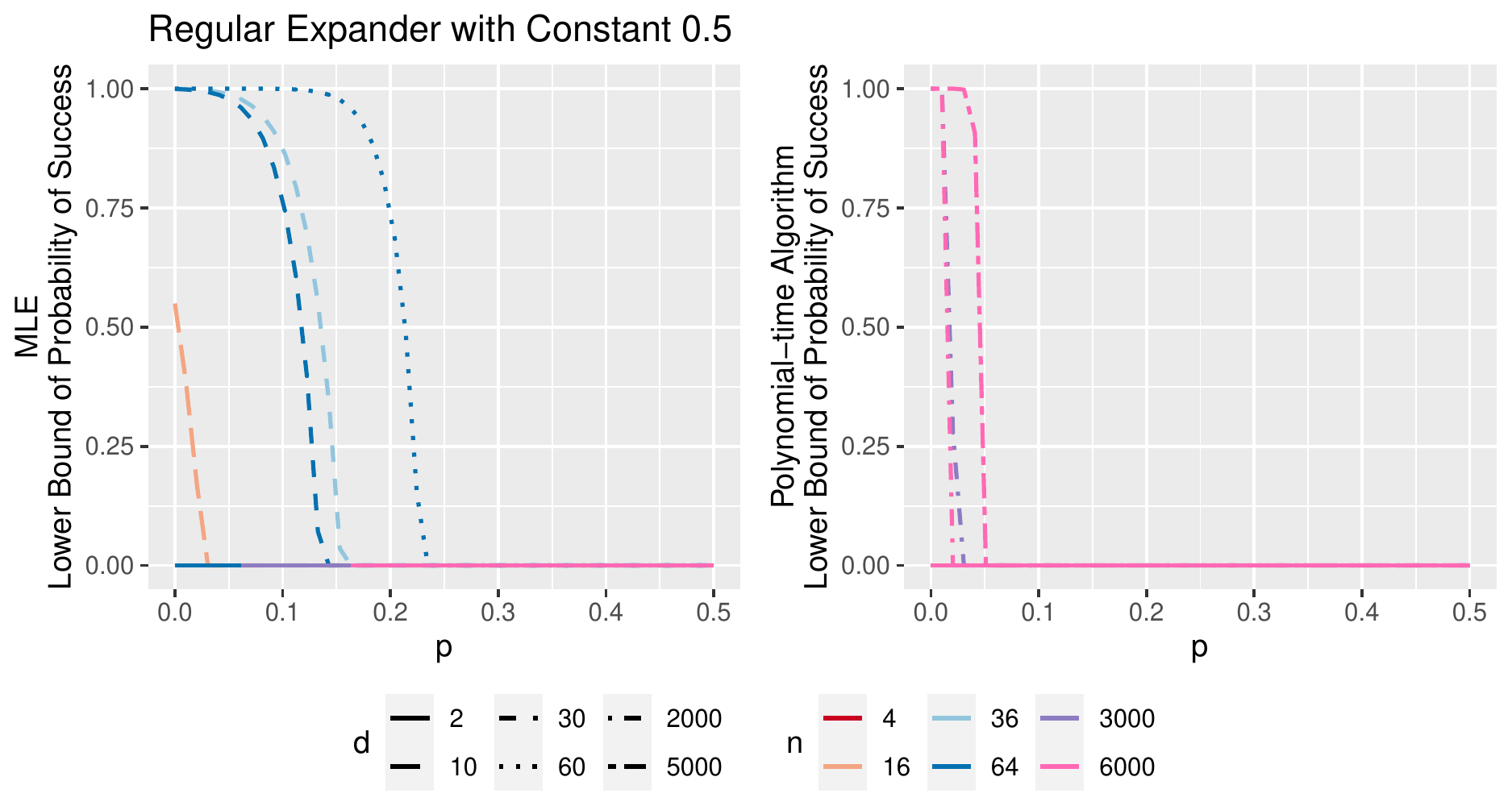}}
\caption{Lower bound of the probability of success of the MLE algorithm (left) and lower bound of the probability of success of the polynomial-time algorithm (right) for regular expanders.
}
\label{fig:appendix_illustration2}
\end{center}
\vskip -0.2in
\end{figure}

\subsection{Proof of Theorem \protect\ref{theorem3}}
\label{proof_theorem3}

In this section, we denote $(X, \pmb{c})$ by $Z$, where $Z\in\mathcal{Z}:=\mathcal{X}\times\mathcal{C}$.
We can prove Theorem \ref{theorem3} by showing the following lemmas:

\begin{lemma}
\label{theorem2_lemma1}
Under the same conditions as in Theorem \ref{theorem3}, we have
$
\underset{\hat{\pmb{y}}:\mathcal{Z} \to \mathcal{Y}}{\inf}
~\underset{ P \in\mathcal{P} }{\sup}
~\mathbb{P}_{ (\pmb{y}^*, Z) \sim P } \big[ \hat{\pmb{y}} (Z) \neq \pmb{y}^* \big] 
\geq f_2.
$
\end{lemma}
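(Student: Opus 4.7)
The plan is to follow the same Assouad-based template used in Lemma \ref{theorem1_lemma1}, augmenting it to carry the extra node observation $\pmb{c}$. As before, I take the zero-one distance $\delta(\pmb{y},\pmb{y}')=\mathbb{I}[\pmb{y}\neq\pmb{y}']$, decompose it as $\delta=\sum_{j=1}^{n}\delta_{j}$ with $\delta_{j}(\pmb{y},\pmb{y}')=\tfrac{1}{n}\mathbb{I}[\pmb{y}\neq\pmb{y}']$, and set $\alpha_{n}=1/n$. Applying Lemma \ref{lemma2} reduces the problem to lower bounding
$$
\min\Big\{\|P_{Z|\pmb{y}}\wedge P_{Z|\pmb{y}'}\|\;:\;\textstyle\sum_{i}\mathbb{I}[y_{i}\neq y'_{i}]=1,\;\pmb{y},\pmb{y}'\in\mathcal{Y}\Big\},
$$
where $Z=(X,\pmb{c})$. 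The reduction to $\sup_{P\in\mathcal{P}}$ via a point-mass prior on $\pmb{y}^{*}$ is identical to that in Lemma \ref{theorem1_lemma1}.

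Next I would fix $\pmb{y},\pmb{y}'$ that differ only in the $k$-th coordinate, and exploit independence across edges and nodes. All edges outside $\mathcal{E}_{k}$ and all nodes other than $k$ contribute identical factors to $p(Z|\pmb{y})$ and $p(Z|\pmb{y}')$, so they sum out to $1$ under the $\sum_{Z}$ operator by the same argument as in Lemma \ref{theorem1_lemma1}. What remains is
$$
\sum_{X_{\mathcal{E}_{k}}\in\{-1,1\}^{\Delta_{k}}}\sum_{c_{k}\in\{-1,1\}}\bigl[p(X_{\mathcal{E}_{k}}|\pmb{y})p(c_{k}|\pmb{y})\bigr]\wedge\bigl[p(X_{\mathcal{E}_{k}}|\pmb{y}')p(c_{k}|\pmb{y}')\bigr].
$$
Splitting the $c_{k}$ sum into $c_{k}=y_{k}$ (factor $(1-q)$ vs.\ $q$) and $c_{k}=-y_{k}=y'_{k}$ (factor $q$ vs.\ $(1-q)$), and grouping the $X_{\mathcal{E}_{k}}$ terms by $m=d(X_{\mathcal{E}_{k}},Y_{\mathcal{E}_{k}})$, each value of $m$ is attained by exactly $\binom{\Delta_{k}}{m}$ configurations and yields, for the two $c_{k}$ cases respectively, the summands
$$
\bigl\{p^{m}(1-p)^{\Delta_{k}-m}(1-q)\bigr\}\wedge\bigl\{(1-p)^{m}p^{\Delta_{k}-m}q\bigr\},
$$
$$
\bigl\{p^{m}(1-p)^{\Delta_{k}-m}q\bigr\}\wedge\bigl\{(1-p)^{m}p^{\Delta_{k}-m}(1-q)\bigr\}.
$$
So $\|P_{Z|\pmb{y}}\wedge P_{Z|\pmb{y}'}\|$ equals the sum over $m$ of $\binom{\Delta_{k}}{m}$ times the sum of those two minima, which matches $2 f_{2}$ with $\Delta_{\max}$ replaced by $\Delta_{k}$.

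Finally, to take the minimum over $k$, I would invoke Lemma \ref{lemma3} twice, once for each of the two minima terms (the second one falls under Lemma \ref{lemma3} after swapping the roles of $p$ and $1-p$, which only reindexes $m\mapsto \Delta_{k}-m$ in the binomial sum and preserves monotonicity). This shows the sum over $m$ is non-increasing in $\Delta_{k}$, so the minimum over $k$ is attained at $\Delta_{\max}$. Combining with Assouad's lemma and the factor $\alpha_{n}/2=1/(2n)$, and noting that the factor $n$ from Assouad cancels, delivers the bound $f_{2}$. The only nonroutine step is verifying that Lemma \ref{lemma3} really applies to both minima simultaneously; this is the bookkeeping point most likely to trip up the argument, but the $p\leftrightarrow 1-p$ symmetry takes care of it cleanly.
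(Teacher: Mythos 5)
Your proposal is correct and follows essentially the same route as the paper's proof: Assouad's lemma with the zero-one loss and $\alpha_n=1/n$, factoring out the common edge and node likelihood terms, splitting the $c_k$ sum into the two cases, grouping by $m=d(X_{\mathcal{E}_k},Y_{\mathcal{E}_k})$, and reducing the minimum over $k$ to $\Delta_{\max}$ via Lemma \ref{lemma3}. The only cosmetic difference is in how you handle the second minimum term: the paper implicitly applies Lemma \ref{lemma3} with $q$ replaced by $1-q$ (the lemma holds for all $q\in[0,1]$), whereas you route through the $p\leftrightarrow 1-p$ symmetry and a reindexing $m\mapsto\Delta_k-m$; both are valid and yield the same monotonicity.
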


\begin{lemma}
\label{theorem2_lemma2}
Under the same conditions as in Theorem \ref{theorem3}, we have
$
\underset{\hat{\pmb{y}}:\mathcal{Z} \to \mathcal{Y}}{\inf}
~\underset{ P \in\mathcal{P} }{\sup}
~\mathbb{P}_{ (\pmb{y}^*, Z) \sim P } \big[ \hat{\pmb{y}} (Z) \neq \pmb{y}^* \big] 
\geq \max\{ g_2, g_2^*\}.
$
\end{lemma}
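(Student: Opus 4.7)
The plan is to mirror the proof strategy of Lemma \ref{theorem1_lemma2} for Regime I, adapting it to incorporate the extra node observation $\pmb{c}$. Since $\pmb{y}^*$ is uniform on $\mathcal{Y}$ and $\pmb{y}^* \to Z \to \hat{\pmb{y}}$ with $Z=(X,\pmb{c})$ is a Markov chain, Fano's inequality gives
\begin{equation*}
\mathbb{P}(\hat{\pmb{y}}\neq\pmb{y}^*) \;\geq\; 1-\frac{\mathbb{I}(\pmb{y}^*;Z)+\log 2}{n\log 2},
\end{equation*}
so both $g_2$ and $g_2^*$ will follow from two different upper bounds on $\mathbb{I}(\pmb{y}^*;Z)$. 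I would first observe that given $\pmb{y}^*$, the coordinates of $X$ and $\pmb{c}$ are mutually independent $\mathrm{Bernoulli}$-type flips, so $H(Z\mid \pmb{y}^*)=|\mathcal{E}|H^*(p)+nH^*(q)$, and hence $\mathbb{I}(\pmb{y}^*;Z)=H(Z)-|\mathcal{E}|H^*(p)-nH^*(q)$. Everything reduces to bounding $H(Z)$.

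For the bound $g_2$, I would use the trivial inequality $H(Z)\leq \log|\mathcal{Z}|=(|\mathcal{E}|+n)\log 2$. Substitution into Fano and rearrangement reproduces $g_2 = \frac{n-1}{n}-\frac{|\mathcal{E}|}{n}(1-H^*(p)/\log 2)-(1-H^*(q)/\log 2)$ verbatim.

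For $g_2^*$, I need a refined bound on $H(Z)$ that exposes the Cheeger constant. I would compute, via the uniform prior on $\pmb{y}$,
\begin{equation*}
p(X,\pmb{c})=\frac{(1-p)^{|\mathcal{E}|}(1-q)^n}{2^n}\sum_{\pmb{y}\in\mathcal{Y}}\Bigl(\tfrac{p}{1-p}\Bigr)^{d(X,Y_\mathcal{E})}\Bigl(\tfrac{q}{1-q}\Bigr)^{d(\pmb{c},\pmb{y})},
\end{equation*}
and then partition the sum $-\sum_{(X,\pmb{c})} p(X,\pmb{c})\log p(X,\pmb{c})$ into the contributions from $X\in\mathcal{Y}_\mathcal{E}$ and $X\in\mathcal{X}\setminus\mathcal{Y}_\mathcal{E}$, reusing the ideas of Lemma \ref{theorem1_lemma2}. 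For $X\in\mathcal{Y}_\mathcal{E}$ I would use the bijection $\pmb{y}\leftrightarrow S_{\pmb{y}'}(\pmb{y})$ (with $X=Y'_\mathcal{E}$) together with Facts \ref{fact2}-\ref{fact4} to replace $d(X,Y_\mathcal{E})$ by $|\mathcal{E}(S,S^c)|$ and then bound $|\mathcal{E}(S,S^c)|\geq \phi_\mathcal{G}\{|S|\wedge (n-|S|)\}$, recovering the $\tau(\cdot)$ expression. For $X\notin\mathcal{Y}_\mathcal{E}$ I would invoke the triangle inequality as in Regime I to reduce the exponent to $|\,|\mathcal{E}(S,S^c)|-d(X_{\mathcal{E}_2},Y'_{\mathcal{E}_2})|$, again landing on $\tau$.

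The new ingredient, which I expect to be the main obstacle, is handling the extra factor $(q/(1-q))^{d(\pmb{c},\pmb{y})}$ and summing over $\pmb{c}\in\mathcal{C}$. Under the assumption $(1-p)^{|\mathcal{E}|}(1-q)^n\leq e^{-1}$ the running expression $\frac{1}{2^n}(1-p)^{|\mathcal{E}|}(1-q)^n\tau(\cdot)$ stays below $e^{-1}$, which lets me apply the monotonicity of $x\mapsto -x\log x$ on $[0,e^{-1}]$ to push the bound $\rho_\mathcal{G}(p,m)\leq \tau(m)$ inside the entropy functional. Summing the resulting expression over $\pmb{c}$—carefully separating the contribution when $\pmb{c}$ aligns with the reference sign pattern from the generic case—yields the three-term decomposition that comprises $\kappa_2$: one term from $X\in\mathcal{Y}_\mathcal{E}$ paired with all $\pmb{c}$, a second from the degenerate alignment producing the $\frac{1}{2}\{2(1-p)\}^{|\mathcal{E}|}\cdot 2^{-n}$ factor, and the third from the $-\tau(B)/2^n\log(\tau(B)/2^n)$ Binomial-average piece, with $B\sim\mathrm{Bin}(|\mathcal{E}|-n+1,\tfrac12)$ accounting for the $\mathcal{E}_2$ coordinates. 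This gives $H(Z)\leq\kappa_2$, and combining with $H(Z)\leq(|\mathcal{E}|+n)\log 2$ via a minimum yields $\mathbb{I}(\pmb{y}^*;Z)\leq(|\mathcal{E}|+n)\log 2-|\mathcal{E}|H^*(p)-nH^*(q)-\{(|\mathcal{E}|+n)\log 2-\kappa_2\}\vee 0\cdot \mathbb{I}[(1-p)^{|\mathcal{E}|}(1-q)^n\leq e^{-1}]$. Plugging this into Fano produces $g_2^*$, completing the bound. The reduction $\sup_P\mathbb{P}_P[\hat{\pmb{y}}(Z)\neq\pmb{y}^*]\geq \mathbb{P}_{(\pmb{y}^*,Z)\sim P_0}[\hat{\pmb{y}}(Z)\neq\pmb{y}^*]$ over the uniform-prior model $P_0$ shows that the Bayes-risk lower bound is also a minimax lower bound, completing the proof.
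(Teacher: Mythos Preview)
Your treatment of $g_2$ is correct and matches the paper: Fano plus $H(Z\mid \pmb{y}^*)=|\mathcal{E}|H^*(p)+nH^*(q)$ plus the trivial entropy bound $H(Z)\le(|\mathcal{E}|+n)\log 2$.

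For $g_2^*$, however, there is a genuine gap. You propose to keep the joint factor $(p/(1-p))^{d(X,Y_\mathcal{E})}(q/(1-q))^{d(\pmb{c},\pmb{y})}$, partition over $X\in\mathcal{Y}_\mathcal{E}$ versus $X\notin\mathcal{Y}_\mathcal{E}$, and then ``separate the contribution when $\pmb{c}$ aligns with the reference sign pattern from the generic case.'' This is not how $\kappa_2$ arises, and I do not see how your description produces the specific three terms. In particular, the term $\{-|\mathcal{E}|\log(1-p)+n\log 2\}\cdot\tfrac{1}{2}\{2(1-p)\}^{|\mathcal{E}|}\cdot 2^{-n}$ does not come from any alignment event for $\pmb{c}$; it has no $q$-dependence at all, which your joint-factor approach cannot explain.

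The paper's key step, which you are missing, is a \emph{decoupling} of $X$ and $\pmb{c}$ before bounding the entropy. Since $p/(1-p)\le 1$ and $q/(1-q)\le 1$, one has two separate upper bounds on $p(Z)$: drop the $q$-factor to get $p(Z)\le \frac{1}{2^n}(1-p)^{|\mathcal{E}|}(1-q)^n\sum_{\pmb{y}}(p/(1-p))^{d(X,Y_\mathcal{E})}=:\tilde f(X)$, and drop the $p$-factor to get $p(Z)\le \frac{1}{2^n}(1-p)^{|\mathcal{E}|}(1-q)^n\sum_{\pmb{y}}(q/(1-q))^{d(\pmb{c},\pmb{y})}=:\tilde g(\pmb{c})$. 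Under $(1-p)^{|\mathcal{E}|}(1-q)^n\le e^{-1}$ both lie in $[0,e^{-1}]$, so monotonicity of $-x\log x$ gives $-p(Z)\log p(Z)\le f(X)$ and $-p(Z)\log p(Z)\le g(\pmb{c})$; averaging yields $-p(Z)\log p(Z)\le \tfrac12 f(X)+\tfrac12 g(\pmb{c})$. Summing then gives $H(Z)\le \tfrac12\cdot 2^n\sum_X f(X)+\tfrac12\cdot 2^{|\mathcal{E}|}\sum_{\pmb{c}} g(\pmb{c})$. The first half is handled exactly as in Lemma~\ref{theorem1_lemma2} (this is where $\tau(B)$ and the Binomial average appear), while the second half collapses because $(1-q)^n\sum_{\pmb{y}}(q/(1-q))^{d(\pmb{c},\pmb{y})}=1$ identically, producing the $q$-free term. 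The $\tfrac12$'s from the averaging are precisely the $\tfrac12$'s you see in $\kappa_2$. Without this decoupling trick your outline does not lead to $\kappa_2$.
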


\subsubsection{{Proof of Lemma \protect\ref{theorem2_lemma1}}}

As in the proof of Lemma \ref{theorem1_lemma1}, we apply Lemma \ref{lemma2} and consider the zero-one distance 
$\delta(\pmb{y}, \pmb{y}') = \mathbb{I}[\pmb{y}\neq \pmb{y}']$
and the pseudo-distance $\delta_j(\pmb{y}, \pmb{y}') = \frac{1}{n}\mathbb{I}[\pmb{y}\neq \pmb{y}']$ for $j=1,\cdots,n$.
Then we can derive that 
$$
\underset{\pmb{y}^*\in\mathcal{Y}}{\max}~\mathbb{P}_{Z|\pmb{y}^*}[\hat{\pmb{y}}(Z)\neq \pmb{y}^*]
\geq
\frac{1}{2} 
\min\bigg\{\|P_{Z|\pmb{y}} \wedge P_{Z|\pmb{y}'} \| : \sum_{i=1}^n \mathbb{I}[y_i \neq y'_i] = 1,~ \pmb{y}, \pmb{y}' \in \mathcal{Y} \bigg\}.
$$

Denote the set of edges which are connected to the $k$th node by $\mathcal{E}_k$.
For $\pmb{y}, \pmb{y}'$ such that
$y_k\neq y'_k$ and $y_i= y'_i$ for $\forall i \neq k$,
we can obtain that
\begin{align*}
&p(Z|\pmb{y}) \wedge p(Z|\pmb{y}')
\\&= 
\bigg\{\prod_{(i,j) \in \mathcal{E}} p(X_{ij}|y_iy_j)
\cdot \prod_{i\in\mathcal{V}} p(c_i|y_i) \bigg\}\wedge
\bigg\{\prod_{(i,j) \in \mathcal{E}} p(X_{ij}|y'_iy'_j)
\cdot \prod_{i\in\mathcal{V}} p(c_i|y'_i) \bigg\}
\\&=
\bigg\{\prod_{(i,j) \notin \mathcal{E}_k} p(X_{ij}|y_iy_j)
\cdot
\prod_{(i,j) \in \mathcal{E}_k} p(X_{ij}|y_iy_j)
\cdot \prod_{i\neq k} p(c_i|y_i) \cdot p(c_k|y_k)
\bigg\}
\\&~~~~~\wedge
\bigg\{\prod_{(i,j) \notin \mathcal{E}_k} p(X_{ij}|y_iy_j)
\cdot
\prod_{(i,j) \in \mathcal{E}_k} p(X_{ij}|y'_iy'_j)
\cdot \prod_{i\neq k} p(c_i|y_i)\cdot p(c_k|y'_k)
\bigg\}
\\&=
\bigg\{\prod_{(i,j) \notin \mathcal{E}_k} p(X_{ij}|y_iy_j) \cdot \prod_{i\neq k} p(c_i|y_i)\bigg\}
\times
\Bigg[
\bigg\{
\prod_{(i,j) \in \mathcal{E}_k} 
p(X_{ij}|y_iy_j) \cdot p(c_k|y_k)
\bigg\}
\wedge 
\bigg\{
\prod_{(i,j) \in \mathcal{E}_k} p(X_{ij}|y'_iy'_j)
 \cdot p(c_k|y'_k)
\bigg\}
\Bigg]
\\&=
\bigg\{\prod_{(i,j) \notin \mathcal{E}_k} p(X_{ij}|y_iy_j) \cdot \prod_{i\neq k} p(c_i|y_i)\bigg\}
\\&~~~~~\times
\Bigg[
\bigg\{
\prod_{(i,j) \in \mathcal{E}_k} 
p(X_{ij}|y_iy_j) \cdot p(c_k|y_k)
\bigg\}
\wedge 
\bigg\{
\prod_{(i,j) \in \mathcal{E}_k} \Big(1- p(X_{ij}|y_iy_j)\Big)
 \cdot \Big(1-p(c_k|y_k)\Big)
\bigg\}
\Bigg]
\\&=
\bigg\{\prod_{(i,j) \notin \mathcal{E}_k} p(X_{ij}|y_iy_j) \cdot \prod_{i\neq k} p(c_i|y_i)\bigg\}
\times
\Bigg[
\bigg\{
p^{d(X_{\mathcal{E}_k}, Y_{\mathcal{E}_k})} \cdot (1-p)^{\Delta_k - d(X_{\mathcal{E}_k}, Y_{\mathcal{E}_k})}
\cdot q^{\mathbb{I}[c_k\neq y_k]} \cdot (1-q)^{1-\mathbb{I}[c_k\neq y_k]}
\bigg\}
\\&~~~~~~~~~~~~~~~~~~~~~~~~~~~~~~~~~~~~~~~~~~~~~~~~~~~~~~\wedge 
\bigg\{
(1-p)^{d(X_{\mathcal{E}_k}, Y_{\mathcal{E}_k})} \cdot p^{\Delta_k - d(X_{\mathcal{E}_k}, Y_{\mathcal{E}_k})}
\cdot (1-q)^{\mathbb{I}[c_k\neq y_k]} \cdot q^{1-\mathbb{I}[c_k\neq y_k]}
\bigg\}
\Bigg].
\end{align*}
Hence,
\begin{align*}
&\|P_{Z|\pmb{y}} \wedge P_{Z|\pmb{y}'} \| 
= 
\sum_{X\in\mathcal{X}}\sum_{\pmb{c}\in\mathcal{C}} p(Z|\pmb{y}) \wedge p(Z|\pmb{y}')
\\&=
\sum_{X\in\mathcal{X}}\sum_{\pmb{c}\in\mathcal{C}}
\bigg\{\prod_{(i,j) \notin \mathcal{E}_k} p(X_{ij}|y_iy_j) \cdot \prod_{i\neq k} p(c_i|y_i)\bigg\}
\\&~~~~~\times
\Bigg[
\bigg\{
p^{d(X_{\mathcal{E}_k}, Y_{\mathcal{E}_k})} \cdot (1-p)^{\Delta_k - d(X_{\mathcal{E}_k}, Y_{\mathcal{E}_k})}
\cdot q^{\mathbb{I}[c_k\neq y_k]} \cdot (1-q)^{1-\mathbb{I}[c_k\neq y_k]}
\bigg\}
\\&~~~~~~~~~~~~\wedge 
\bigg\{
(1-p)^{d(X_{\mathcal{E}_k}, Y_{\mathcal{E}_k})} \cdot p^{\Delta_k - d(X_{\mathcal{E}_k}, Y_{\mathcal{E}_k})}
\cdot (1-q)^{\mathbb{I}[c_k\neq y_k]} \cdot q^{1-\mathbb{I}[c_k\neq y_k]}
\bigg\}
\Bigg]
\\&=
\mathop{\sum_{X_{ij} \in \{-1,1\}}}_{(i,j)\in\mathcal{E}_k}
\sum_{c_k\in\{-1,1\}}
\Bigg[
\bigg\{
p^{d(X_{\mathcal{E}_k}, Y_{\mathcal{E}_k})} \cdot (1-p)^{\Delta_k - d(X_{\mathcal{E}_k}, Y_{\mathcal{E}_k})}
\cdot q^{\mathbb{I}[c_k\neq y_k]} \cdot (1-q)^{1-\mathbb{I}[c_k\neq y_k]}
\bigg\}
\\&~~~~~~~~~~~~~~~~~~~~~~~~~~~~~~~~\wedge 
\bigg\{
(1-p)^{d(X_{\mathcal{E}_k}, Y_{\mathcal{E}_k})} \cdot p^{\Delta_k - d(X_{\mathcal{E}_k}, Y_{\mathcal{E}_k})}
\cdot (1-q)^{\mathbb{I}[c_k\neq y_k]} \cdot q^{1-\mathbb{I}[c_k\neq y_k]}
\bigg\}
\Bigg]
\\&=
\mathop{\sum_{X_{ij} \in \{-1,1\}}}_{(i,j)\in\mathcal{E}_k}
\Bigg[
\bigg\{
p^{d(X_{\mathcal{E}_k}, Y_{\mathcal{E}_k})} \cdot (1-p)^{\Delta_k - d(X_{\mathcal{E}_k}, Y_{\mathcal{E}_k})}
\cdot q \bigg\}
\wedge 
\bigg\{
(1-p)^{d(X_{\mathcal{E}_k}, Y_{\mathcal{E}_k})} \cdot p^{\Delta_k - d(X_{\mathcal{E}_k}, Y_{\mathcal{E}_k})}
\cdot (1-q) \bigg\}
\Bigg]
\\&~~~~~~~~~~~~~~~~~~+
\Bigg[
\bigg\{
p^{d(X_{\mathcal{E}_k}, Y_{\mathcal{E}_k})} \cdot (1-p)^{\Delta_k - d(X_{\mathcal{E}_k}, Y_{\mathcal{E}_k})}
\cdot (1-q) \bigg\}
\wedge 
\bigg\{
(1-p)^{d(X_{\mathcal{E}_k}, Y_{\mathcal{E}_k})} \cdot p^{\Delta_k - d(X_{\mathcal{E}_k}, Y_{\mathcal{E}_k})}
\cdot q \bigg\}
\Bigg].
\end{align*}
Note that
$d(X_{\mathcal{E}_k}, Y_{\mathcal{E}_k})$
has a value between $0$ and $\Delta_k$, and for each $m\in[0, \Delta_k]$, there exist $\binom{\Delta_k}{m}$ different $X_{\mathcal{E}_k}$'s satisfying $d(X_{\mathcal{E}_k}, Y_{\mathcal{E}_k})=m$. Therefore, we can write that
\begin{align*}
\|P_{Z|\pmb{y}} \wedge P_{Z|\pmb{y}'} \| 
&=
\sum_{m=0}^{\Delta_k}
\binom{\Delta_k}{m}
\Bigg( \bigg[ \Big\{
p^{m} \cdot (1-p)^{\Delta_k - m}
\cdot q \Big\}
\wedge 
\Big\{
(1-p)^{m} \cdot p^{\Delta_k - m}
\cdot (1-q) \Big\}\bigg]
\\&~~~~~~~~~~~~~~~~~~~~~~+
\bigg[ \Big\{
p^{m} \cdot (1-p)^{\Delta_k -m}
\cdot (1-q) \Big\}
\wedge 
\Big\{
(1-p)^{m} \cdot p^{\Delta_k - m}
\cdot q \Big\} \bigg] \Bigg)
\end{align*}
and accordingly,
\begin{align*}
&\min\bigg\{\|P_{Z|\pmb{y}} \wedge P_{Z|\pmb{y}'} \| : \sum_{i=1}^n \mathbb{I}[y_i \neq y'_i] = 1,~ \pmb{y}, \pmb{y}' \in \mathcal{Y} \bigg\}
\\&=
\underset{1\leq k \leq n}{\min} 
\Bigg[
\sum_{m=0}^{\Delta_k}
\binom{\Delta_k}{m}
\Bigg(
\bigg[
\Big\{
p^{m} \cdot (1-p)^{\Delta_k - m}
\cdot q \Big\}
\wedge 
\Big\{
(1-p)^{m} \cdot p^{\Delta_k - m}
\cdot (1-q) \Big\}\bigg]
\\&~~~~~~~~~~~~~~~~~~~~~~~~~~~~~~~~+
\bigg[ \Big\{
p^{m} \cdot (1-p)^{\Delta_k -m}
\cdot (1-q) \Big\}
\wedge 
\Big\{
(1-p)^{m} \cdot p^{\Delta_k - m}
\cdot q \Big\} \bigg] \Bigg) \Bigg]
\\&=
\sum_{m=0}^{\Delta_{\max}}
\binom{\Delta_{\max}}{m}
\Bigg(
\bigg[
\Big\{
p^{m} \cdot (1-p)^{\Delta_{\max} - m}
\cdot q \Big\}
\wedge 
\Big\{
(1-p)^{m} \cdot p^{\Delta_{\max} - m}
\cdot (1-q) \Big\}\bigg]
\\&~~~~~~~~~~~~~~~~~~~~~~~~~+
\bigg[ \Big\{
p^{m} \cdot (1-p)^{\Delta_{\max} -m}
\cdot (1-q) \Big\}
\wedge 
\Big\{
(1-p)^{m} \cdot p^{\Delta_{\max} - m}
\cdot q \Big\} \bigg]
\Bigg) \Bigg].
\end{align*}
The last equality holds by Lemma \ref{lemma3}.

Finally, consider a joint probability distribution $P_0$ of $\pmb{y}^*$ and $Z$ where the marginal probability of $\pmb{y}^*$ is $p_{\pmb{y}^*}(\pmb{y}_0) = 1$ for some $\pmb{y}_0\in\mathcal{Y}$, and where $Z$ given $\pmb{y}^*$ follows the assumed conditional probability distribution.
Then we have that
\begin{align*}
\underset{P \in\mathcal{P}}{\sup}
\mathbb{P}_{ (\pmb{y}^*, Z) \sim P } \big[ \hat{\pmb{y}} (Z) \neq \pmb{y}^* \big] 
&\geq
\mathbb{P}_{ (\pmb{y}^*, Z) \sim P_0 } \big[ \hat{\pmb{y}} (Z) \neq \pmb{y}^* \big]     
\\&=
\mathbb{P}_{Z|\pmb{y}_0}\big[\hat{\pmb{y}}(Z) \neq \pmb{y}_0\big]
\end{align*}
which holds for any $\pmb{y}_0 \in \mathcal{Y}$.
Hence, we derive that
$$
\underset{ P \in\mathcal{P} }{\sup}
~\mathbb{P}_{ (\pmb{y}^*, Z) \sim P } \big[ \hat{\pmb{y}} (Z) \neq \pmb{y}^* \big] 
\geq 
\underset{\pmb{y}^*\in\mathcal{Y}}{\max}~\mathbb{P}_{Z|\pmb{y}^*}[\hat{\pmb{y}}(Z)\neq \pmb{y}^*]
$$
for any estimator $\hat{\pmb{y}}(\cdot)$.

Therefore, we have
\begin{align*}
\underset{\hat{\pmb{y}}:\mathcal{Z} \to \mathcal{Y}}{\inf}
~\underset{ P \in\mathcal{P} }{\sup}
~\mathbb{P}_{ (\pmb{y}^*, Z) \sim P } \big[ \hat{\pmb{y}} (Z) \neq \pmb{y}^* \big] 
&\geq
\frac{1}{2} \min\bigg\{\|P_{Z|\pmb{y}} \wedge P_{Z|\pmb{y}'} \| : \sum_{i=1}^n \mathbb{I}[y_i \neq y'_i] = 1,~ \pmb{y}, \pmb{y}' \in \mathcal{Y} \bigg\}
\\&=
\frac{1}{2}
\sum_{m=0}^{\Delta_{\max}}
\binom{\Delta_{\max}}{m}
\Bigg( \bigg[ \Big\{
p^{m} \cdot (1-p)^{\Delta_{\max} - m}
\cdot q \Big\}
\wedge 
\Big\{
(1-p)^{m} \cdot p^{\Delta_{\max} - m}
\cdot (1-q) \Big\}\bigg]
\\&~~~~~~~~~~~~~~~~~~~~~~~~~~~~+
\bigg[ \Big\{
p^{m} \cdot (1-p)^{\Delta_{\max} -m}
\cdot (1-q) \Big\}
\wedge 
\Big\{
(1-p)^{m} \cdot p^{\Delta_{\max} - m}
\cdot q \Big\} \bigg]
\Bigg) \Bigg]
\\&= f_2.
\end{align*}

\subsubsection{{Proof of Lemma \protect\ref{theorem2_lemma2}}}

As in the proof of Lemma \ref{theorem1_lemma2},
we can derive that
$$
\mathbb{I}(\pmb{y}, Z) = H(Z) -  |\mathcal{E}|H^*(p)
- n H^*(q)
$$
where $H(Z) = - \sum_{Z\in\mathcal{Z}} p(Z)\log p(Z)$ and $0 \leq H(Z) \leq (|\mathcal{E}|+n)\log 2$. 
Hence,
$$
\mathbb{I}(\pmb{y}, Z) \leq (|\mathcal{E}|+n)\log 2 -  |\mathcal{E}|H^*(p)
- n H^*(q)
$$
and by Fano's inequality, we have
\begin{align}
\mathbb{P}(\hat{\pmb{y}}\neq \pmb{y}^*)
&\geq
1- \frac{(|\mathcal{E}|+n)\log 2 -  |\mathcal{E}|H^*(p)
- n H^*(q) + \log 2}{n\log 2} \nonumber
\\&=
\frac{n-1}{n}-\frac{|\mathcal{E}|}{n}\bigg(1 - \frac{H^*(p)}{\log 2}\bigg)
-\bigg(1 - \frac{H^*(q)}{\log 2}\bigg)
= g_2.
\label{eq:minimax2_fano_proof1}
\end{align}

Next, we can write
\begin{align*}
p(Z)
&= \sum_{\pmb{y}\in\mathcal{Y}} p(Z|\pmb{y})p(\pmb{y})
= \frac{1}{2^n} \sum_{\pmb{y}\in\mathcal{Y}} p(X|\pmb{y})p(\pmb{c}|\pmb{y})
\\&= \frac{1}{2^n} \sum_{\pmb{y}\in\mathcal{Y}} 
\Big\{ p^{d(X_\mathcal{E}, Y_\mathcal{E})} (1-p)^{|\mathcal{E}|- d(X_\mathcal{E}, Y_\mathcal{E})}\Big\}\cdot
\Big\{ q^{d(\pmb{c}, \pmb{y})} (1-q)^{n- d(\pmb{c}, \pmb{y})}\Big\}
\\&= \frac{1}{2^n} (1-p)^{|\mathcal{E}|} (1-q)^n \sum_{\pmb{y}\in\mathcal{Y}} \bigg(\frac{p}{1-p}\bigg)^{d(X_\mathcal{E}, Y_\mathcal{E})}\cdot \bigg(\frac{q}{1-q}\bigg)^{d(\pmb{c}, \pmb{y})}.
\end{align*}
Since $\frac{p}{1-p}\leq 1$ and $\frac{q}{1-q}\leq 1$, we have the following inequalities:
\begin{align*}
p(Z)
&\leq \frac{1}{2^n} (1-p)^{|\mathcal{E}|} (1-q)^n \sum_{\pmb{y}\in\mathcal{Y}} \bigg(\frac{p}{1-p}\bigg)^{d(X_\mathcal{E}, Y_\mathcal{E})}
\leq \frac{1}{2^n} (1-p)^{|\mathcal{E}|} (1-q)^n \cdot 2^n = (1-p)^{|\mathcal{E}|} (1-q)^n,
\\
p(Z)
&\leq \frac{1}{2^n} (1-p)^{|\mathcal{E}|} (1-q)^n \sum_{\pmb{y}\in\mathcal{Y}} \bigg(\frac{q}{1-q}\bigg)^{d(\pmb{c}, \pmb{y})}
\leq \frac{1}{2^n} (1-p)^{|\mathcal{E}|} (1-q)^n \cdot 2^n = (1-p)^{|\mathcal{E}|} (1-q)^n.
\end{align*}
Then, if $(1-p)^{|\mathcal{E}|} (1-q)^n \leq \frac{1}{e}$ holds, we have
\begin{align*}
-p(Z)\log p(Z)
&\leq -\frac{1}{2^n} (1-p)^{|\mathcal{E}|} (1-q)^n \sum_{\pmb{y}\in\mathcal{Y}} \bigg(\frac{p}{1-p}\bigg)^{d(X_\mathcal{E}, Y_\mathcal{E})}\cdot
\log \Bigg[\frac{1}{2^n} (1-p)^{|\mathcal{E}|} (1-q)^n \sum_{\pmb{y}\in\mathcal{Y}} \bigg(\frac{p}{1-p}\bigg)^{d(X_\mathcal{E}, Y_\mathcal{E})}
\Bigg]
\\&=: f(X) ~~\text{and}
\\
-p(Z)\log p(Z)
&\leq -\frac{1}{2^n} (1-p)^{|\mathcal{E}|} (1-q)^n \sum_{\pmb{y}\in\mathcal{Y}} \bigg(\frac{q}{1-q}\bigg)^{d(\pmb{c}, \pmb{y})}
\cdot \log \Bigg[
\frac{1}{2^n} (1-p)^{|\mathcal{E}|} (1-q)^n \sum_{\pmb{y}\in\mathcal{Y}} \bigg(\frac{q}{1-q}\bigg)^{d(\pmb{c}, \pmb{y})}\Bigg]
\\&=: g(\pmb{c}),
\end{align*}
and consequently,
\begin{gather*}
-p(Z)\log p(Z)
\leq \frac{1}{2} f(X) + \frac{1}{2} g(\pmb{c})
\\
\Rightarrow H(Z) 
= - \sum_{Z\in\mathcal{Z}} p(Z)\log p(Z)
\leq \frac{1}{2}\cdot 2^n \sum_{X\in\mathcal{X}} f(X)
+ \frac{1}{2}\cdot 2^{|\mathcal{E}|} \sum_{\pmb{c}\in\mathcal{C}} g(\pmb{c}).
\end{gather*}

First, we can derive that
\begin{align*}
\sum_{X\in\mathcal{X}}f(X)
&\leq
-2^{n-1}\cdot\sum_{m=0}^{|\mathcal{E}|-n+1}
\binom{|\mathcal{E}|-n+1}{m}
\frac{1}{2^n} (1-p)^{|\mathcal{E}|} (1-q)^n
\tau(m)
\cdot
\log \bigg[\frac{1}{2^n} (1-p)^{|\mathcal{E}|} (1-q)^n
\tau(m)\bigg]
\\&=
-2^{|\mathcal{E}|} (1-p)^{|\mathcal{E}|} (1-q)^n \sum_{m=0}^{|\mathcal{E}|-n+1}
\binom{|\mathcal{E}|-n+1}{m}
\frac{1}{2^{|\mathcal{E}|-n+1}}
\cdot
\frac{\tau(m)}{2^n}
\bigg\{
|\mathcal{E}|\log(1-p) + n\log(1-q) + \log \frac{\tau(m)}{2^n}\bigg\}
\\&=
-\{2(1-p)\}^{|\mathcal{E}|} (1-q)^n \big\{
|\mathcal{E}|\log(1-p) + n\log(1-q) \big\} 
\mathbb{E}_B\bigg[\frac{\tau(B)}{2^n}\bigg]
\\&~~~~+
\{2(1-p)\}^{|\mathcal{E}|} (1-q)^n
\mathbb{E}_B\bigg[-\frac{\tau(B)}{2^n}\log\bigg(\frac{\tau(B)}{2^n}\bigg)\bigg]
\end{align*}
in a similar way to the proof of Lemma \ref{theorem1_lemma2}.
Note that 
$B\sim \text{Bin}(|\mathcal{E}|-n+1,~ \frac{1}{2})$.

Also, note that
$$
(1-q)^n \sum_{\pmb{y}\in\mathcal{Y}} \bigg(\frac{q}{1-q}\bigg)^{d(\pmb{c}, \pmb{y})}
= (1-q)^n \sum_{S\subseteq\mathcal{V}} \bigg(\frac{q}{1-q}\bigg)^{|S|}
= \sum_{k=0}^n \binom{n}{k} \bigg(\frac{q}{1-q}\bigg)^{k} (1-q)^n
= 1,
$$
that is,
$$
g(\pmb{c})
= -\frac{1}{2^n} (1-p)^{|\mathcal{E}|}
\cdot \log \bigg[
\frac{1}{2^n} (1-p)^{|\mathcal{E}|} \bigg].
$$

Therefore,
\begin{align*}
H(Z) 
&\leq \frac{1}{2}\cdot 2^n \sum_{X\in\mathcal{X}} f(X)
+ \frac{1}{2}\cdot 2^{|\mathcal{E}|} \sum_{\pmb{c}\in\mathcal{C}} g(\pmb{c})
\\&\leq
-\frac{1}{2}\{2(1-p)\}^{|\mathcal{E}|} \{2(1-q)\}^n \big\{
|\mathcal{E}|\log(1-p) + n\log(1-q) \big\} 
\mathbb{E}_B\bigg[\frac{\tau(B)}{2^n}\bigg]
\\&~~~+
\frac{1}{2}\{2(1-p)\}^{|\mathcal{E}|} \{2(1-q)\}^n
\mathbb{E}_B\bigg[-\frac{\tau(B)}{2^n}\log\bigg(\frac{\tau(B)}{2^n}\bigg)\bigg]
\\&~~~-
\frac{1}{2} \{2(1-p)\}^{|\mathcal{E}|}
\frac{1}{2^{n}}\log \bigg[
\frac{1}{2^n} (1-p)^{|\mathcal{E}|} \bigg]
\\&=
\{-|\mathcal{E}|\log(1-p) -n\log(1-q)\} \cdot \frac{1}{2}\{2(1-q)\}^{n}  \{2(1-p)\}^{|\mathcal{E}|} 
\mathbb{E}_B\bigg[\frac{\tau(B)}{2^n}\bigg]
\\&~~~+ \{-|\mathcal{E}|\log(1-p) +n\log 2\} \cdot\frac{1}{2}\{2(1-p)\}^{|\mathcal{E}|}\cdot \frac{1}{2^n}
\\&~~~+ \frac{1}{2}\{2(1-q)\}^{n}\{2(1-p)\}^{|\mathcal{E}|}
\mathbb{E}_B\bigg[-\frac{\tau(B)}{2^n}\log\bigg(\frac{\tau(B)}{2^n}\bigg)\bigg]
\\&=
\kappa_2
\end{align*}
when $(1-p)^{|\mathcal{E}|} (1-q)^n \leq \frac{1}{e}$.

Now, we can derive the upper bound of the mutual information as follows:
\begin{align*}
\mathbb{I}(\pmb{y}, Z) 
&= H(Z) -  |\mathcal{E}|H^*(p)
- n H^*(q)
\\&\leq \big\{(|\mathcal{E}|+n)\log 2 \wedge  \kappa_2 \big\}\cdot
\mathbb{I}\big[(1-p)^{|\mathcal{E}|} (1-q)^n \leq e^{-1}\big]
\\&~~~~+ (|\mathcal{E}|+n)\log 2\cdot \big( 1-\mathbb{I}\big[(1-p)^{|\mathcal{E}|} (1-q)^n \leq e^{-1}\big]\big)
-  |\mathcal{E}|H^*(p) - n H^*(q)
\\&=
\big\{\kappa_2-(|\mathcal{E}|+n)\log 2 \big\}\wedge 0 \cdot
\mathbb{I}\big[(1-p)^{|\mathcal{E}|} (1-q)^n \leq e^{-1}\big] + (|\mathcal{E}|+n)\log 2
- |\mathcal{E}|H^*(p) - n H^*(q)
\\&=
(|\mathcal{E}|+n)\log 2
- |\mathcal{E}|H^*(p) - n H^*(q)
- \big\{(|\mathcal{E}|+n)\log 2-\kappa_2 \big\}\vee 0 \cdot
\mathbb{I}\big[(1-p)^{|\mathcal{E}|} (1-q)^n \leq e^{-1}\big]
.
\end{align*}
Then, by Fano's inequality, we have
\begin{align}
\mathbb{P}(\hat{\pmb{y}}\neq \pmb{y}^*)
&\geq 1-\frac{(|\mathcal{E}|+n)\log 2
- |\mathcal{E}|H^*(p) - n H^*(q) + \log 2}{n\log 2} \nonumber
\\&~~~+\frac{\big\{(|\mathcal{E}|+n)\log 2-\kappa_2 \big\}\vee 0}{n\log 2}\cdot
\mathbb{I}\big[(1-p)^{|\mathcal{E}|} (1-q)^n \leq e^{-1}\big] \nonumber
\\&=g_2 +\frac{\big\{(|\mathcal{E}|+n)\log 2-\kappa_2 \big\}\vee 0}{n\log 2}\cdot
\mathbb{I}\big[(1-p)^{|\mathcal{E}|} (1-q)^n \leq e^{-1}\big] \nonumber
\\&=g_2^*.
\label{eq:minimax2_fano_proof2}
\end{align}

From (\ref{eq:minimax2_fano_proof1}) and (\ref{eq:minimax2_fano_proof2}), we obtain the desired result.

\subsection{Proof of Theorem \protect\ref{theorem4}}
\label{proof_theorem4}

As in the proof of Theorem \ref{theorem2}, we write $Y = Y_\mathcal{E}=[\pmb{y}\pmb{y}^T]_\mathcal{E}$ for each $\pmb{y}\in \mathcal{Y}$ and $Y^* = Y^*_\mathcal{E} = [\pmb{y}^*(\pmb{y}^*)^{T}]_\mathcal{E} = [-\pmb{y}^*(-\pmb{y}^*)^{T}]_\mathcal{E}$,
for simplicity.
Also, we define the following:
\begin{align*}
\Delta_{\pmb{c}}(\pmb{y}) 
&:= \alpha\pmb{c}^T \pmb{y}^* - \alpha\pmb{c}^T \pmb{y} 
= \alpha\langle \pmb{c}, \pmb{y}^* - \pmb{y} \rangle
= \alpha\langle \mathbb{E}[\pmb{c}], \pmb{y}^* - \pmb{y} \rangle
+ \alpha\langle \pmb{c}- \mathbb{E}[\pmb{c}], \pmb{y}^* - \pmb{y} \rangle
\\
\Delta(\pmb{y}) 
&:= 
\pmb{y}^{*T} X \pmb{y}^* + \alpha \pmb{c}^T \pmb{y}^* - \pmb{y}^T X \pmb{y} - \alpha \pmb{c}^T \pmb{y}
=\Delta_X(Y) + \Delta_{\pmb{c}}(\pmb{y}).
\end{align*}
Then our goal becomes to find the upper bound of the probability
$\mathbb{P}\big[\exists\pmb{y}\in\mathcal{Y}\backslash\{\pmb{y}^*\} ~\text{s.t.}~\Delta(\pmb{y}) \leq 0\big]$.

In a similar way to the proof of Theorem \ref{theorem2}, we can derive that
$$
\alpha\langle \mathbb{E}[\pmb{c}], \pmb{y}^* - \pmb{y} \rangle
= 2\alpha(1-2q)N(\pmb{y})
$$
where $N(\pmb{y}) := S_{\pmb{y}^*}(\pmb{y})=\{i\in\mathcal{V} : y_i^* \neq y_i\}$.
Also, $\alpha\langle \pmb{c}- \mathbb{E}[\pmb{c}], \pmb{y}^* - \pmb{y} \rangle
=
\alpha\langle \pmb{c}- \pmb{y}^* (1-2q), \pmb{y}^* - \pmb{y} \rangle$ can be represented by
\begin{align*}
\alpha\sum_{i\in\mathcal{V}} \big(c_i-y^*_i (1-2q)\big) \cdot(y^*_{i} - y_{i})
&=
\mathop{\sum_{i\in\mathcal{V}:}}_{y^*_i\neq y_i}
\alpha\big(c_i-y^*_i (1-2q)\big) \cdot(y^*_{i} - y_{i})
=:
T_{\pmb{c}}(\pmb{y})
\end{align*}
where $T_{\pmb{c}}(\pmb{y})$ is the summation of $N(\pmb{y})$ i.i.d. binary random variables which have mean zero and variance $16q(1-q)\alpha^2$ and are bounded by $4(1-q)\alpha$.

Now, we can find the upper bound of the probability $\mathbb{P}\big[\exists\pmb{y}\in\mathcal{Y}\backslash\{\pmb{y}^*\} ~\text{s.t.}~\Delta(\pmb{y}) \leq 0\big]$
by applying Bernstein inequality in two different ways. 

(1) Firstly, by applying Bernstein inequality on $T_{\pmb{c}}(\pmb{y})$, we can derive that
\begin{align*}
\mathbb{P} \big[\Delta_{\pmb{c}}(\pmb{y}) \leq 0 \big]
&=
\mathbb{P} \bigg[ -T_{\pmb{c}}(\pmb{y}) \geq 2\alpha(1-2q) N(\pmb{y}) \bigg]
\leq 
\exp \bigg[ - \frac{2\alpha^2(1-2q)^2 \{N(\pmb{y})\}^2 }{ 16q(1-q)\alpha^2N(\pmb{y}) + \frac{8}{3}(1-q)\alpha^2(1-2q)N(\pmb{y})} \bigg]
\\[0.5em]&= 
\exp \bigg[ - \frac{ (1-2q)^2N(\pmb{y}) }{ \frac{4}{3}(1-q) (1 + 4q)} \bigg]
= h_1\big(q, N(\pmb{y})\big) = h_1\big(q, S_{\pmb{y}^*}(\pmb{y})\big)
\end{align*}
for each $\pmb{y} \neq \pmb{y}^*$.

Note that for any $\pmb{y} \neq \pmb{y}^*,-\pmb{y}^*$,
\begin{align*}
\mathbb{P}\big[\Delta(\pmb{y}) \leq 0 ~\text{or}~ \Delta(-\pmb{y}) \leq 0 \big]
&= \mathbb{P}\big[\Delta_X(Y)+\Delta_{\pmb{c}}(\pmb{y}) \leq 0 ~\text{or}~ \Delta_X(Y)+\Delta_{\pmb{c}}(-\pmb{y}) \leq 0 \big]
\\&\leq \mathbb{P}\big[\Delta_X(Y) \leq 0 ~~\text{or}~~ \Delta_{\pmb{c}}(\pmb{y}) \leq 0 ~~\text{or}~~ \Delta_{\pmb{c}}(-\pmb{y}) \leq 0 \big]
\\&\leq \mathbb{P}\big[\Delta_X(Y) \leq 0 \big] +  \mathbb{P}\big[\Delta_{\pmb{c}}(\pmb{y}) \leq 0\big] +  \mathbb{P}\big[ \Delta_{\pmb{c}}(-\pmb{y}) \leq 0 \big],
\end{align*}
and for $\pmb{y} = -\pmb{y}^*$,
$
\mathbb{P}\big[\Delta(\pmb{y}) \leq 0 \big]
= \mathbb{P}\big[\Delta_{\pmb{c}}(-\pmb{y}^*) \leq 0\big].
$

Now, denote by $\mathcal{Y}^{\frac{1}{2}}$ the subset of $\mathcal{Y}$ which includes either $\pmb{y}$ or $-\pmb{y}$ for all $\pmb{y}\in\mathcal{Y}$.
Then, we can find the upper bound of the following probability:
\begin{align*}
&\mathbb{P}\big[\exists\pmb{y}\in\mathcal{Y}\backslash\{\pmb{y}^*\} ~\text{s.t.}~\Delta(\pmb{y}) \leq 0\big]
\\&\leq 
\mathbb{P}\big[\Delta(-\pmb{y}^*) \leq 0 \big] 
+ 
\sum_{\pmb{y}\in \mathcal{Y}^{\frac{1}{2}}} \mathbb{I}[\pmb{y} \neq \pmb{y}^*, -\pmb{y}^*] \cdot \mathbb{P}\big[\Delta(\pmb{y}) \leq 0 ~\text{or}~ \Delta(-\pmb{y}) \leq 0 \big]
\\&\leq 
\mathbb{P}\big[\Delta_{\pmb{c}}(-\pmb{y}^*) \leq 0 \big]
+ \sum_{\pmb{y}\in \mathcal{Y}^{\frac{1}{2}}} \mathbb{I}[\pmb{y} \neq \pmb{y}^*, -\pmb{y}^*]
\cdot \Big\{ \mathbb{P}\big[\Delta_X(Y) \leq 0 \big] +  \mathbb{P}\big[\Delta_{\pmb{c}}(\pmb{y}) \leq 0\big] +  \mathbb{P}\big[ \Delta_{\pmb{c}}(-\pmb{y}) \leq 0 \big] \Big\}
\\&= 
\mathbb{P}\big[\Delta_{\pmb{c}}(-\pmb{y}^*) \leq 0 \big]
+ \frac{1}{2}\sum_{\pmb{y}\in \mathcal{Y}} \mathbb{I}[\pmb{y} \neq \pmb{y}^*, -\pmb{y}^*] \cdot \mathbb{P}\big[\Delta_X(Y) \leq 0 \big]
+ 
\sum_{\pmb{y}\in \mathcal{Y}} \mathbb{I}[\pmb{y} \neq \pmb{y}^*, -\pmb{y}^*] \cdot \mathbb{P}\big[\Delta_{\pmb{c}}(\pmb{y}) \leq 0 \big]
\\&= 
\frac{1}{2}\sum_{\pmb{y}\in \mathcal{Y}} \mathbb{I}[\pmb{y} \neq \pmb{y}^*, -\pmb{y}^*] \cdot \mathbb{P}\big[\Delta_X(Y) \leq 0 \big]
+ 
\sum_{\pmb{y}\in \mathcal{Y}} \mathbb{I}[\pmb{y} \neq \pmb{y}^*] \cdot \mathbb{P}\big[\Delta_{\pmb{c}}(\pmb{y}) \leq 0 \big]
\\&\leq 
\frac{1}{2}\sum_{\pmb{y}\in \mathcal{Y}} \mathbb{I}[\pmb{y} \neq \pmb{y}^*, -\pmb{y}^*] \cdot 
h_1\big(p, \big|\mathcal{E}\big( S_{\pmb{y}^*}(\pmb{y}), S_{\pmb{y}^*}(\pmb{y})^c  \big)\big|\big)
+ \sum_{\pmb{y}\in \mathcal{Y}} \mathbb{I}[\pmb{y} \neq \pmb{y}^*] \cdot h_1\big(q, S_{\pmb{y}^*}(\pmb{y})\big)
\\&= 
\frac{1}{2}\sum_{\pmb{y}\in \mathcal{Y}} \mathbb{I}\big[|S_{\pmb{y}^*}(\pmb{y})|\neq 0 ~\text{or}~ n\big] \cdot h_1\big(p, \big|\mathcal{E}\big( S_{\pmb{y}^*}(\pmb{y}), S_{\pmb{y}^*}(\pmb{y})^c  \big)\big|\big)
+ \sum_{\pmb{y}\in \mathcal{Y}} \mathbb{I}[|S_{\pmb{y}^*}(\pmb{y})|\neq 0] \cdot h_1\big(q, S_{\pmb{y}^*}(\pmb{y})\big)
\\&= 
\frac{1}{2}\mathop{\sum_{S\subseteq \mathcal{V}:}}_{1\leq|S|\leq n-1} h_1\big(p, \big|\mathcal{E}\big( S, S^c  \big)\big|\big)
+ \mathop{\sum_{S\subseteq \mathcal{V}:}}_{|S|\geq 1} h_1\big(q, \big|S|\big)
\end{align*}
where the first inequality holds by the union bound and
the last equality holds by the Fact \ref{fact4}.

As shown in the proof of Theorem \ref{theorem2},
$$
\frac{1}{2}\mathop{\sum_{S\subseteq \mathcal{V}:}}_{1\leq|S|\leq n-1} h_1\big(p, \big|\mathcal{E}\big( S, S^c  \big)\big|\big)
\leq 
\sum_{k=1}^{\lfloor \frac{n}{2}\rfloor} \binom{n}{k} 
h_1(p, \phi_{\mathcal{G}}k),
$$
and we can easily derive that 
$$
\mathop{\sum_{S\subseteq \mathcal{V}:}}_{|S|\geq 1} h_1\big(q, \big|S|\big)
=
\sum_{k=1}^{n} \binom{n}{k} h_1\big(q, k\big).
$$
Therefore, 
\begin{align}
\label{theorem4_proof_eq1}
\mathbb{P}\big[\exists\pmb{y}\in\mathcal{Y}\backslash\{\pmb{y}^*\} ~\text{s.t.}~\Delta(\pmb{y}) \leq 0\big]
&\leq \sum_{k=1}^{\lfloor \frac{n}{2}\rfloor} \binom{n}{k} 
h_1(p, \phi_{\mathcal{G}}k)
+ \sum_{k=1}^{n} \binom{n}{k} h_1\big(q, k\big).
\end{align}

(2) Secondly, since $T_X(Y) + T_{\pmb{c}}(\pmb{y})$ is the summation of independent random variables which have zero mean and are bounded by $4(1-p)\vee 4(1-q)\alpha$, we can apply Bernstein inequality on $T_X(Y) + T_{\pmb{c}}(\pmb{y})$ and derive that
\begin{align*}
&\mathbb{P} \big[\Delta(\pmb{y}) \leq 0 \big]
=\mathbb{P} \bigg[ -T_X(Y) - T_{\pmb{c}}(\pmb{y}) \geq 2(1-2p) N(Y) + 2\alpha(1-2q)N(\pmb{y}) \bigg]
\\[0.5em]&\leq 
\exp \bigg[-\frac{2\big\{(1-2p)N(Y) + \alpha(1-2q) N(\pmb{y}) \big\}^2 }
{ 16p(1-p)N(Y) + 16q(1-q)\alpha^2N(\pmb{y}) + \frac{8}{3}[(1-p)\vee (1-q)\alpha]\cdot\big\{(1-2p)N(Y)+\alpha(1-2q)N(\pmb{y})\big\}} \bigg]
\\[0.5em]&= 
\exp \bigg[-\frac{\big\{(1-2p)N(Y) + \alpha(1-2q) N(\pmb{y}) \big\}^2}
{ 8p(1-p)N(Y) + 8q(1-q)\alpha^2N(\pmb{y}) + \frac{4}{3}[(1-p)\vee (1-q)\alpha]\cdot\big\{(1-2p)N(Y)+\alpha(1-2q)N(\pmb{y})\big\}} \bigg]
\\[0.5em]&=: 
h_2\big(N(Y), N(\pmb{y})\big)
\end{align*}
for each $\pmb{y} \neq \pmb{y}^*$.
Then, we can find the upper bound of the following probability:
\begin{align*}
\mathbb{P}\big[\exists\pmb{y}\in\mathcal{Y}\backslash\{\pmb{y}^*\} ~\text{s.t.}~\Delta(\pmb{y}) \leq 0\big]
&\leq 
\sum_{\pmb{y}\in{\mathcal{Y}}} \mathbb{I}[\pmb{y} \neq \pmb{y}^*]\cdot \mathbb{P}\big[\Delta(\pmb{y}) \leq 0 \big]
\\&\leq 
\sum_{\pmb{y}\in{\mathcal{Y}}} \mathbb{I}[\pmb{y} \neq \pmb{y}^*]\cdot 
h_2\big(N(Y), N(\pmb{y})\big)
\\&=
\sum_{\pmb{y}\in{\mathcal{Y}}} \mathbb{I}\big[|S_{\pmb{y}^*}(\pmb{y})|\neq 0\big]\cdot 
h_2\big(\big|\mathcal{E}\big( S_{\pmb{y}^*}(\pmb{y}), S_{\pmb{y}^*}(\pmb{y})^c  \big)\big|, \big|S_{\pmb{y}^*}(\pmb{y}) \big|\big)
\\&= 
\mathop{\sum_{S\subseteq \mathcal{V}:}}_{|S|\geq 1} 
h_2\big(|\mathcal{E}\big( S, S^c  \big)|, |S|\big)
\end{align*}
where the first inequality holds by the union bound and
the last equality holds by the Fact \ref{fact4}.

Since $h_2(\cdot, |S|)$ and $h_2(\cdot, n-|S|)$ are decreasing by Lemma \ref{theorem4_lemma}, which will be presented immediately, we can derive that
\begin{align*}
\mathop{\sum_{S\subseteq \mathcal{V}:}}_{|S|\geq 1} 
h_2\big(|\mathcal{E}\big( S, S^c  \big)|, |S|\big)
&=    
\mathop{\sum_{S\subseteq \mathcal{V}:}}_{1\leq|S|\leq \lfloor \frac{n}{2}\rfloor} 
h_2\big(|\mathcal{E}\big( S, S^c  \big)|, |S|\big)
+
\mathop{\sum_{S\subseteq \mathcal{V}:}}_{|S|\leq \lfloor \frac{n}{2}\rfloor} 
h_2\big(|\mathcal{E}\big( S, S^c  \big)|, n-|S|\big)
\\&\leq
\mathop{\sum_{S\subseteq \mathcal{V}:}}_{1\leq|S|\leq \lfloor \frac{n}{2}\rfloor} 
h_2\big(\phi_{\mathcal{G}}|S|, |S|\big)
+
\mathop{\sum_{S\subseteq \mathcal{V}:}}_{|S|\leq \lfloor \frac{n}{2}\rfloor} 
h_2\big(\phi_{\mathcal{G}}|S|, n-|S|\big)
\\&=
\sum_{k=1}^{\lfloor \frac{n}{2}\rfloor} 
\binom{n}{k} 
h_2\big(\phi_{\mathcal{G}}k, k\big)
+
\sum_{k=0}^{\lfloor \frac{n}{2}\rfloor} 
\binom{n}{k} 
h_2\big(\phi_{\mathcal{G}}k, n-k\big).
\end{align*}
Consequently,
\begin{align}
\label{theorem4_proof_eq2}
\mathbb{P}\big[\exists\pmb{y}\in\mathcal{Y}\backslash\{\pmb{y}^*\} ~\text{s.t.}~\Delta(\pmb{y}) \leq 0\big]
&\leq \sum_{k=1}^{\lfloor \frac{n}{2}\rfloor} 
\binom{n}{k} 
h_2\big(\phi_{\mathcal{G}}k, k\big)
+
\sum_{k=0}^{\lfloor \frac{n}{2}\rfloor} 
\binom{n}{k} 
h_2\big(\phi_{\mathcal{G}}k, n-k\big).
\end{align}

From (\ref{theorem4_proof_eq1}) and (\ref{theorem4_proof_eq2}), we can obtain the lower bound of the probability of success of the MLE approach as follows:
\begin{align*}
&\mathbb{P}\big[\Delta(\pmb{y}) > 0~\text{for all}~ \pmb{y}\in\mathcal{Y}\backslash\{\pmb{y}^*\}\big]
=
1- \mathbb{P}\big[\exists\pmb{y}\in\mathcal{Y}\backslash\{\pmb{y}^*\} ~\text{s.t.}~\Delta(\pmb{y}) \leq 0\big]
\\&\geq 1 - 
\min\Bigg\{
\sum_{k=1}^{\lfloor \frac{n}{2}\rfloor} \binom{n}{k} 
h_1(p, \phi_{\mathcal{G}}k)
+ \sum_{k=1}^{n} \binom{n}{k} h_1\big(q, k\big),~~
\sum_{k=1}^{\lfloor \frac{n}{2}\rfloor} 
\binom{n}{k} 
h_2\big(\phi_{\mathcal{G}}k, k\big)
+
\sum_{k=0}^{\lfloor \frac{n}{2}\rfloor} 
\binom{n}{k} 
h_2\big(\phi_{\mathcal{G}}k, n-k\big)
\Bigg\}.
\end{align*}

\begin{lemma}
\label{theorem4_lemma}
For any $p, q \in (0, \frac{1}{2})$ and $w> 0$,~
$h_2(z, w)$ decreases as $z$ increases on $[0,\infty)$. 
Likewise, for any $p, q \in (0, \frac{1}{2})$ and $z> 0$,~
$h_2(z, w)$ decreases as $w$ increases on $[0,\infty)$.
\end{lemma}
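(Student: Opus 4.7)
The plan is to show that $h_2(z, w) = \exp(-F(z, w))$ is decreasing in $z$ by proving that $F(\cdot, w)$ is nondecreasing on $[0, \infty)$; the analogue for $w$ follows by an identical argument with the roles of $(z, p)$ and $(w, q)$ swapped. Writing $F = N^2/D$ with $N = (1-2p)z + \alpha(1-2q)w$ and $D = 8p(1-p)z + 8q(1-q)\alpha^2 w + \tfrac{4M}{3} N$ (where $M = (1-p)\vee (1-q)\alpha$), and fixing $w > 0$, both $N$ and $D$ are linear in $z$ with strictly positive coefficients and intercepts, say $N(z) = a_1 z + b_1$ and $D(z) = a_2 z + b_2$. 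A direct computation gives
\[
\partial_z F(z) = \frac{N(z)\,\bigl[2 a_1 D(z) - a_2 N(z)\bigr]}{D(z)^2}.
\]
The bracket is itself linear in $z$ with positive slope $a_1 a_2 > 0$, so its nonnegativity on $[0, \infty)$ is equivalent to its nonnegativity at $z = 0$, i.e., $2 a_1 b_2 \geq a_2 b_1$. After expansion and division by the positive factor $\alpha w$, this reduces to the single algebraic inequality
\[
16(1-2p)q(1-q)\alpha + \tfrac{4}{3}(1-2p) M (1-2q) \geq 8 p(1-p)(1-2q). \qquad (\star)
\]

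I would verify $(\star)$ by case analysis on $M$. Since $x \mapsto (1-x)\log((1-x)/x)$ is strictly decreasing on $(0, 1/2)$, the identity $M = 1-p$ (equivalently $(1-q)\alpha \leq 1-p$) is equivalent to $q \geq p$, and $M = (1-q)\alpha$ to $q \leq p$. In either case, the explicit form $\alpha = \log((1-q)/q)/\log((1-p)/p)$ is controlled through the two-sided logit inequality $2(1-2x) \leq \log((1-x)/x) \leq (1-2x)/(2x(1-x))$ on $(0, 1/2)$, both sides being immediate from the power series of $\operatorname{arctanh}$. In the case $q \geq p$, combining these bounds gives $\alpha \geq 4p(1-p)(1-2q)/(1-2p)$, and $(\star)$ collapses to the one-variable estimate $8p\bigl[1-6q(1-q)\bigr] \leq 1$: this holds since either $q(1-q) \geq 1/6$ (making the left side nonpositive), or else $p \leq q$ and the left side is bounded by $\max_{x \in (0, 1/2)} 8x\bigl[1-6x(1-x)\bigr]$, which a short calculus check shows is strictly less than $1$. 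In the case $q \leq p$, $(\star)$ reduces to $(1-2p)(1-q)(10q+1)\alpha \geq 6p(1-p)(1-2q)$; here I would use the refinement $\log((1-q)/q) \geq \log(1/(2q))$ (needed only for $q$ near $0$) to obtain $\alpha \geq 2p(1-p)\log(1/(2q))/(1-2p)$, reducing $(\star)$ to the single-variable inequality $(1-q)(10q+1)\log(1/(2q)) \geq 3(1-2q)$ on $(0, 1/2)$, which is verified by checking that both sides vanish at $q = 1/2$, comparing their Taylor expansions there, and ruling out interior zeros of the difference.

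The main technical obstacle is the sharpness demanded in the asymmetric regimes: the naive bounds $\alpha \geq 1$ (when $q \leq p$) or $\alpha \geq (1-p)/(1-q)$ are not strong enough when $p$ is close to $1/2$ and $q$ is close to $0$ (where $\alpha$ grows like $\log(1/q)$), so one must invoke the tight two-sided logit bounds described above and, for $q$ near $0$, the logarithmic refinement $\log((1-q)/q) \geq \log(1/(2q))$. Once $(\star)$ is established, its counterpart for the $w$-direction follows by the identical argument applied to $F$ viewed as a function of $w$ with $z$ fixed.
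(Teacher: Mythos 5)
Your proposal is correct, and for the decisive step it is genuinely more rigorous than the paper's own argument. Both proofs make the identical first move: write $h_2=\exp(-N^2/D)$ with $N,D$ affine in $z$ (slopes $a_1,a_2$, intercepts $b_1,b_2$), differentiate, and observe that monotonicity on $[0,\infty)$ reduces to $2a_1b_2\geq a_2b_1$ --- the paper states this criterion as ``$\frac{b}{a}-\frac{2d}{c}\leq 0$,'' and your $(\star)$ is the same condition after clearing the positive factor $\alpha w$ (I verified the expansion, and also that the $w$-direction condition is exactly $(\star)$ with $(p,q)$ swapped and $\alpha\mapsto 1/\alpha$, so your symmetry claim is sound). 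The divergence is in how that inequality is established: the paper rewrites it as $r(p,q)\geq\frac{1}{2}$ for an explicit two-variable function and then simply asserts that this ``is shown numerically,'' pointing to a plot (Figure \ref{fig:r_function}); you instead give an analytic verification via the two-sided bounds $2(1-2x)\leq\log\frac{1-x}{x}\leq\frac{1-2x}{2x(1-x)}$, a case split on which term attains the maximum in $M=(1-p)\vee(1-q)\alpha$ (correctly tied to the sign of $q-p$ through the monotonicity of $(1-x)\log\frac{1-x}{x}$), and a reduction to the one-variable inequalities $8x\,[1-6x(1-x)]\leq 1$ and $(1-q)(10q+1)\log\frac{1}{2q}\geq 3(1-2q)$. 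I checked the algebra in both cases and it is correct, and the first one-variable bound does hold with maximum value about $0.37$ at $x=\frac{2-\sqrt{2}}{6}$. What your route buys is an actual proof where the paper offers only numerical evidence over the square $(0,\frac12)^2$. The one residual soft spot is that your final inequality $(1-q)(10q+1)\log\frac{1}{2q}\geq 3(1-2q)$ on $(0,\frac12)$ is described (``rule out interior zeros'') rather than carried out; it is true --- the difference tends to $+\infty$ as $q\to 0$, behaves like $8\epsilon^2$ at $q=\frac12-\epsilon$, and stays positive in between --- but you should write out that last calculus step to make the argument fully self-contained.
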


\begin{proof}
First, when $a, b, c, d > 0$, $\exp\big[-\frac{(az+b)^2}{cz+d}\big]$ decreases as $z$ increases on $[0, \infty)$ if $\frac{b}{a} - \frac{2d}{c}\leq 0$. Note that
\begin{align*}
&\frac{\alpha(1-2q)w}{(1-2p)}
-2\cdot\frac{8q(1-q)\alpha^2w + \frac{4}{3}[(1-p)\vee (1-q)\alpha]\cdot\alpha(1-2q)w}
{8p(1-p) + \frac{4}{3}[(1-p)\vee (1-q)\alpha]\cdot(1-2p)}
\\&=
\alpha w \frac{(1-2q)}{(1-2p)} \cdot \Bigg[1-2\cdot
\frac{\frac{6q(1-q)\alpha}{1-2q} + [(1-p)\vee (1-q)\alpha]}
{\frac{6p(1-p)}{1-2p} + [(1-p)\vee (1-q)\alpha]}
\Bigg]
\leq 0,
\end{align*}
which is shown numerically (see Figure \ref{fig:r_function}.)
Therefore, $h_2(z, w)$ decreases as $z$ increases on $[0,\infty)$.
The same argument holds with respect to $w$ from symmetry.

\begin{figure}[ht]
\vskip 0.2in
\begin{center}
\centerline{\includegraphics[width=3in]{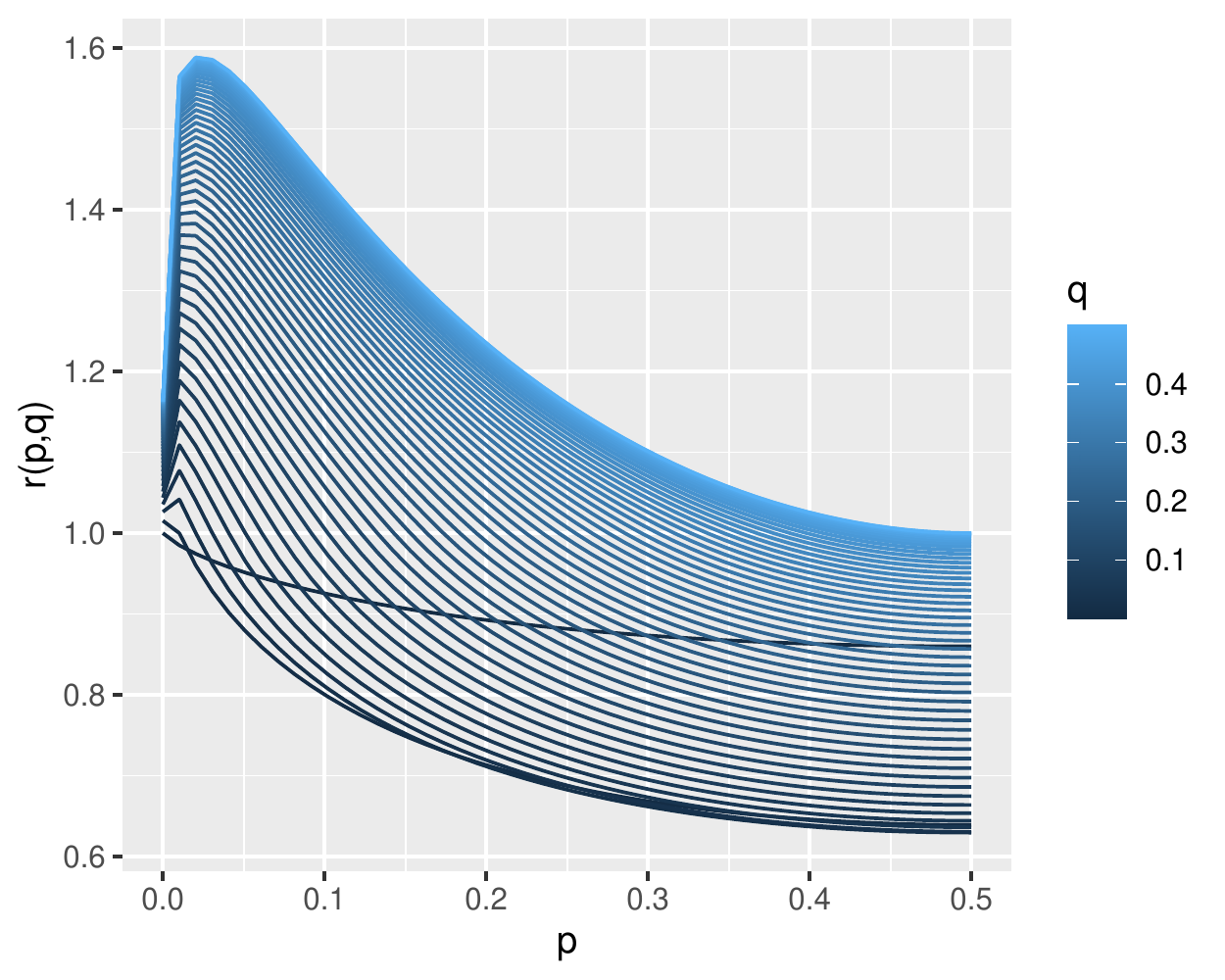}}
\caption{
Graph of the function $r(p,q):=\frac{\frac{6q(1-q)\alpha}{1-2q} + [(1-p)\vee (1-q)\alpha]}
{\frac{6p(1-p)}{1-2p} + [(1-p)\vee (1-q)\alpha]}$.~ It is always greater than $\frac{1}{2}$ when $p, q \in (0, \frac{1}{2})$.
}
\label{fig:r_function}
\end{center}
\vskip -0.2in
\end{figure}

\end{proof}

\subsection{Detailed Description of Comparison to Tractable Algorithm in Section \protect\ref{subsec:mle2}}
\label{proof_comparison2}

From Lemma \ref{theorem4_lemma}, we can obtain
\begin{align*}
\sum_{k=0}^{\lfloor \frac{n}{2}\rfloor} 
\binom{n}{k} 
h_2(\phi_{\mathcal{G}}k, n-k)
\leq
h_2(0, n) + \sum_{k=1}^{\lfloor \frac{n}{2}\rfloor} 
\binom{n}{k} 
h_2(\phi_{\mathcal{G}}k, k).
\end{align*}
Also, if $p\approx q$, we have
\begin{align*}
h_2(0, n)
= 
e^{-\frac{\{\alpha(1-2q)\}^2n}
{ 8q(1-q)\alpha^2 + \frac{4}{3}[(1-p)\vee (1-q)\alpha]\cdot\{\alpha(1-2q)\}}}
\approx
e^{-\frac{(1-2q)^2n}
{ 8q(1-q) + \frac{4}{3} (1-q)(1-2q)}}
=
e^{-\frac{(1-2q)^2n}{\frac{4}{3}(1-q) (1 + 4q)}}.
\end{align*}
Let $r 
= e^{-\frac{\{(1-2p)\phi_{\mathcal{G}} + \alpha(1-2q) \}^2}
{ 8p(1-p)\phi_{\mathcal{G}} + 8q(1-q)\alpha^2 + \frac{4}{3}[(1-p)\vee (1-q)\alpha]\cdot\{(1-2p)\phi_{\mathcal{G}}+\alpha(1-2q)\}}}$. 
Then we have
\begin{align*}
\sum_{k=1}^{\lfloor \frac{n}{2}\rfloor} 
\binom{n}{k} 
h_2(\phi_{\mathcal{G}}k, k)
+
\sum_{k=0}^{\lfloor \frac{n}{2}\rfloor} 
\binom{n}{k} 
h_2(\phi_{\mathcal{G}}k, n-k)
&\leq
2\cdot \sum_{k=1}^{\lfloor \frac{n}{2}\rfloor} 
\binom{n}{k} 
h_2(\phi_{\mathcal{G}}k, k)
+
e^{-\frac{(1-2q)^2n}{\frac{4}{3}(1-q) (1 + 4q)}} \\
&<
2 \cdot \{ (r+1)^n -1 \} + e^{-\frac{(1-2q)^2n}{\frac{4}{3}(1-q) (1 + 4q)}} \\
&= 2 \cdot \{ (r+1)^n -1 \} + \epsilon_2^{\frac{3}{2(1-q) (1 + 4q)}},
\end{align*}
where the second inequality holds as in Appendix \ref{proof_comparison1}.
We can check that $0.96 \leq \frac{3}{2(1-q) (1 + 4q)} \leq \frac{3}{2}$ for $q\in[0, \frac{1}{2}]$, and especially in the case that $q\leq \frac{1}{4}$, we can see that $\frac{3}{2(1-q) (1 + 4q)} \geq 1$ holds, i.e., $\epsilon_2^{\frac{3}{2(1-q) (1 + 4q)}}$ decays faster than $\epsilon_2$.

Next, if $\phi_{\mathcal{G}}=\Omega(n)$, then $r \underset{n\rightarrow\infty}{\longrightarrow} 0$. 
From this fact, we can derive that
$2 \cdot \{ (r+1)^n -1 \} \approx 2 (e^{nr} - 1)$ for sufficiently large $n$.
Let $A := 2 \cdot \{ (r+1)^n -1 \}$.
Then we have $nr \approx \log(\frac{A}{2} + 1)$.
Since $A\underset{n\rightarrow\infty}{\longrightarrow} 0$ when $\phi_{\mathcal{G}}=\Omega(n)$, there exists a positive constant $C$ such that
$nr \approx \log(\frac{A}{2} + 1) \geq C A$ holds
for sufficiently large $n$.

Also, by Lemma \ref{theorem4_lemma} and the condition that $p\approx q$, we have
$$
r
\leq
e^{-\frac{\{(1-2p)\phi_{\mathcal{G}} \}^2}
{ 8p(1-p)\phi_{\mathcal{G}} + \frac{4}{3}[(1-p)\vee (1-q)\alpha]\cdot\{(1-2p)\phi_{\mathcal{G}}\}}}
\approx
e^{-\frac{\{(1-2p)\phi_{\mathcal{G}} \}^2}
{ 8p(1-p)\phi_{\mathcal{G}} + \frac{4}{3} (1-q)\cdot\{(1-2p)\phi_{\mathcal{G}}\}}}
=
e^{-\frac{3(1-2p)^2\phi_{\mathcal{G}}}{4(1-p) (1 + 4p)}}.
$$
Hence,
$$
nr
\leq n\cdot e^{-\frac{3(1-2p)^2\phi_{\mathcal{G}}}{4(1-p) (1 + 4p)}}
=\frac{\epsilon_1}{2} \exp(-C_{n,p} \phi_{\mathcal{G}})
$$
where
$C_{n,p} = \frac{3(1-2p)^2}{4(1-p) (1 + 4p)} - \frac{3(1-2p)^2 \phi_{\mathcal{G}}^3}{1536\Delta_{\max}^3 p(1-p)+32(1-2p)(1-p)\phi_{\mathcal{G}}^2\Delta_{\max}}
\geq \frac{21(1-2p)^2}{32(1-p)(1+4p)}$
as shown in Appendix \ref{proof_comparison1}.
Therefore, we have
$$
A \leq C' \cdot \epsilon_1 \exp\bigg(-\frac{C''(1-2p)^2n}{(1-p)(1+4p)}\bigg)
$$
for sufficiently large $n$ and some positive constants $C'$ and $C''$,
which implies that $A = 2 \cdot \{ (r+1)^n -1 \}$ decays much faster than $\epsilon_1$.

Consequently, $\sum_{k=1}^{\lfloor \frac{n}{2}\rfloor} 
\binom{n}{k} 
h_2(\phi_{\mathcal{G}}k, k)
+
\sum_{k=0}^{\lfloor \frac{n}{2}\rfloor} 
\binom{n}{k} 
h_2(\phi_{\mathcal{G}}k, n-k)$ decays much faster than $\epsilon_1+ \epsilon_2$
if $p\approx q \leq \frac{1}{4}$ and $\phi_{\mathcal{G}}=\Omega(n)$.

\subsection{Proof of Corollary \protect\ref{corollary4}}

Let $r 
= e^{-\frac{\{(1-2p)\phi_{\mathcal{G}} + \alpha(1-2q) \}^2}
{ 8p(1-p)\phi_{\mathcal{G}} + 8q(1-q)\alpha^2 + \frac{4}{3}[(1-p)\vee (1-q)\alpha]\cdot\{(1-2p)\phi_{\mathcal{G}}+\alpha(1-2q)\}}}$.\\
If the condition $\frac{\{(1-2p)\phi_{\mathcal{G}} + \alpha(1-2q) \}^2}
{ 6p(1-p)\phi_{\mathcal{G}} + 6q(1-q)\alpha^2 + [(1-p)\vee (1-q)\alpha]\cdot\{(1-2p)\phi_{\mathcal{G}}+\alpha(1-2q)\}} \geq \frac{8}{3}\log n$ holds, we can derive that
\begin{align*}
\sum_{k=1}^{\lfloor \frac{n}{2}\rfloor} 
\binom{n}{k} 
h_2(\phi_{\mathcal{G}}k, k)
<
(r+1)^n -1
\leq
2n^{-1}
\end{align*}
exactly the same way as in the proof of Corollary \ref{corollary2}.

From Lemma \ref{theorem4_lemma}, we can obtain
\begin{align*}
\sum_{k=0}^{\lfloor \frac{n}{2}\rfloor} 
\binom{n}{k} 
h_2(\phi_{\mathcal{G}}k, n-k)
\leq
h_2(0, n) + \sum_{k=1}^{\lfloor \frac{n}{2}\rfloor} 
\binom{n}{k} 
h_2(\phi_{\mathcal{G}}k, k)
\end{align*}
where
\begin{align*}
h_2(0, n)
= 
e^{-\frac{\{\alpha(1-2q)\}^2n}
{ 8q(1-q)\alpha^2 + \frac{4}{3}[(1-p)\vee (1-q)\alpha]\cdot\{\alpha(1-2q)\}}}.
\end{align*}
It is easy to check that 
if the condition $\frac{\alpha\{(1-2q)\}^2n}
{ 6q(1-q)\alpha + \big[(1-p)\vee (1-q)\alpha\big]\cdot(1-2q)}
\geq \frac{4}{3}\log n$ holds,
$h_2(0, n) \leq n^{-1}$.

Consequently, we have
\begin{align*}
\sum_{k=1}^{\lfloor \frac{n}{2}\rfloor} 
\binom{n}{k} 
h_2(\phi_{\mathcal{G}}k, k)
+
\sum_{k=0}^{\lfloor \frac{n}{2}\rfloor} 
\binom{n}{k} 
h_2(\phi_{\mathcal{G}}k, n-k)
<
5n^{-1}
\end{align*}
under the assumptions.
Therefore,
$$
1 - \sum_{k=1}^{\lfloor \frac{n}{2}\rfloor} 
\binom{n}{k} 
h_2(\phi_{\mathcal{G}}k, k)
-
\sum_{k=0}^{\lfloor \frac{n}{2}\rfloor} 
\binom{n}{k} 
h_2(\phi_{\mathcal{G}}k, n-k)
>
1-5n^{-1}.
$$

\subsection{Illustration of Bounds of Probabilities for Additional Examples of Graphs in Section \protect\ref{sec:case_that_edge_and_node}}
\label{appendix_illustration2}

\begin{figure}[ht]
\vskip 0.2in
\begin{center}
\centerline{\includegraphics[width=\columnwidth]{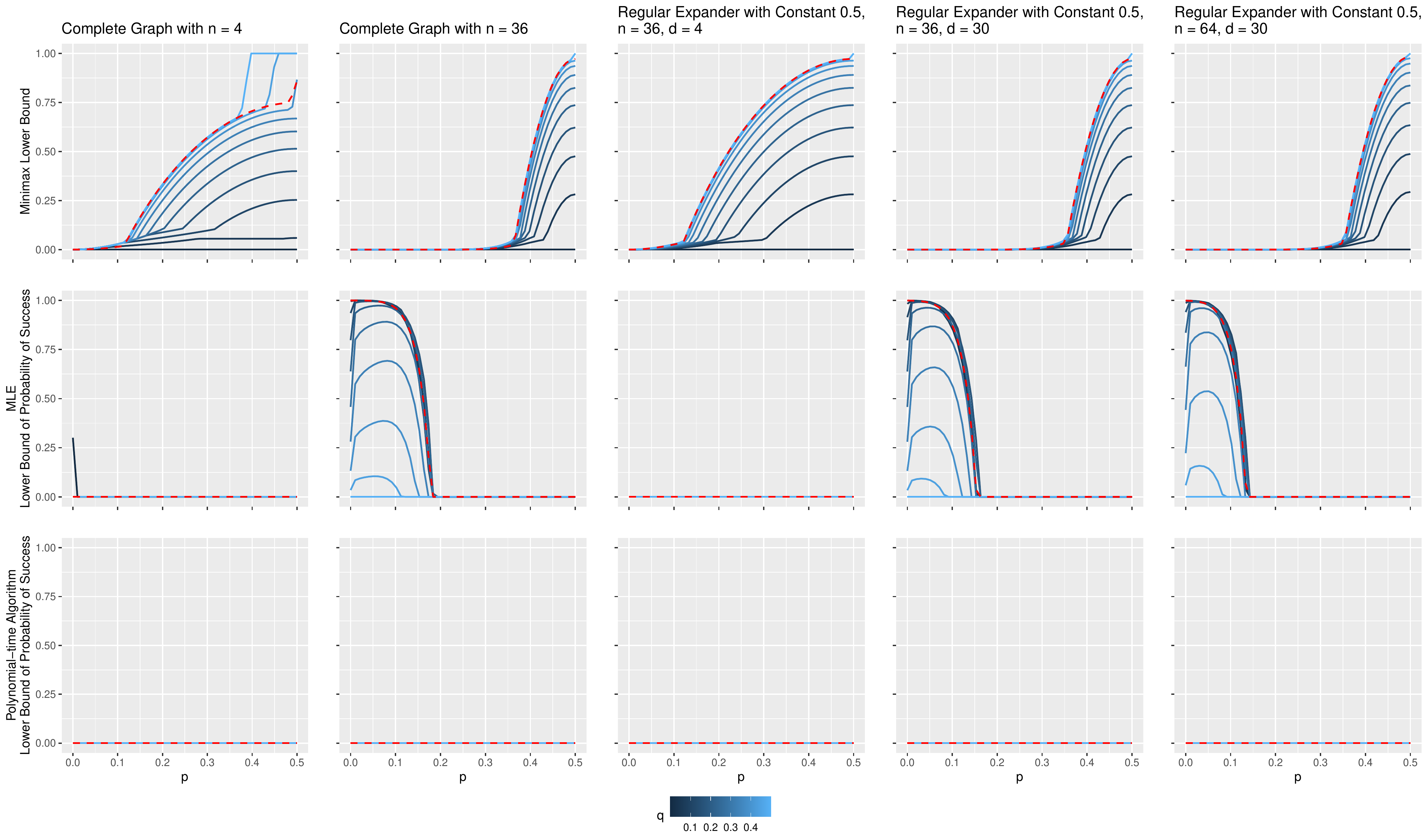}}
\caption{
Minimax lower bound (top), lower bound of the probability of success of the MLE algorithm (middle), and lower bound of the probability of success of the polynomial-time algorithm (bottom) for complete graphs and regular expanders.
}
\label{fig:appendix_illustration3}
\end{center}
\vskip -0.2in
\end{figure}

\begin{figure}[ht]
\vskip 0.2in
\begin{center}
\centerline{\includegraphics[width=6in]{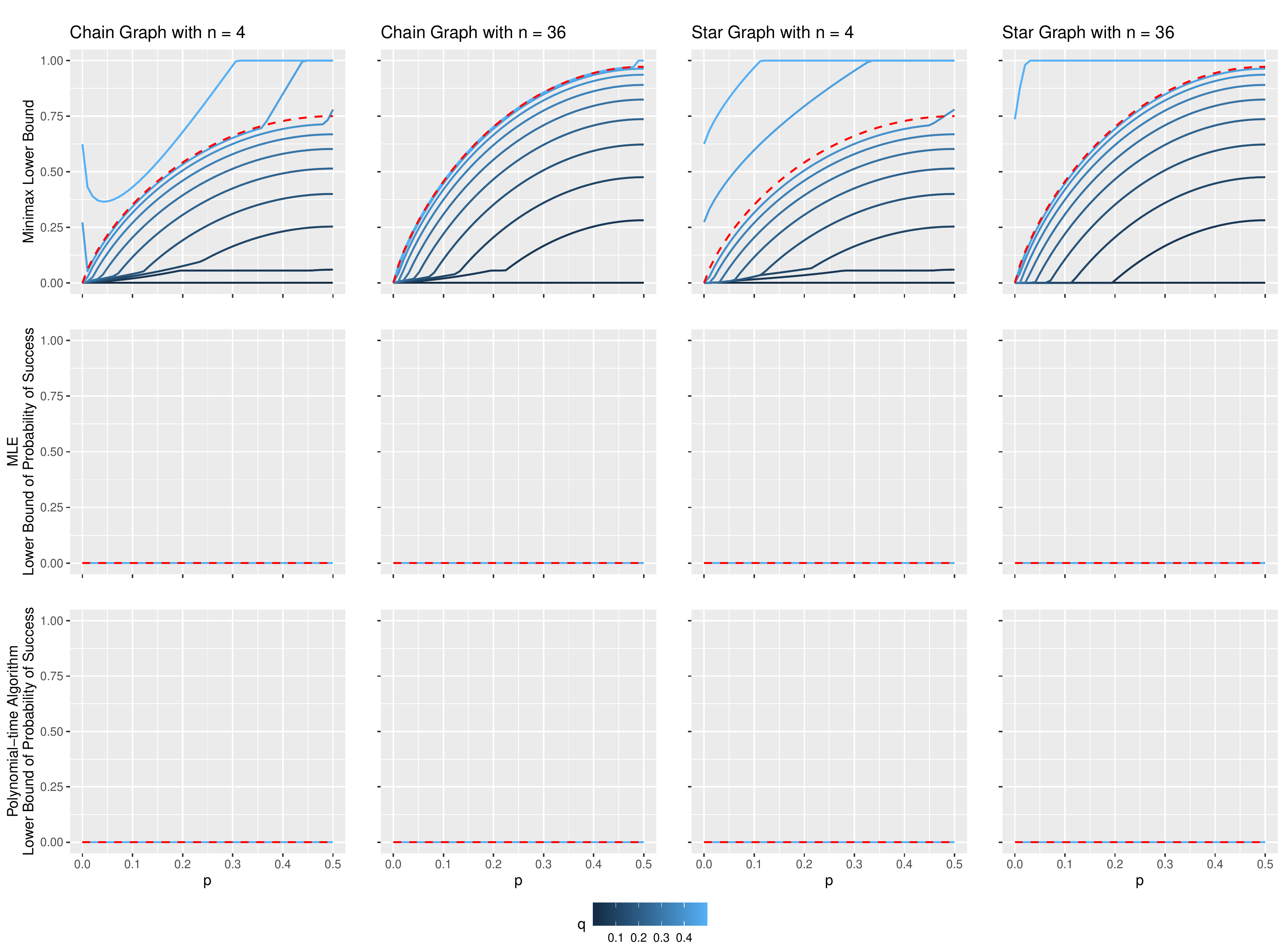}}
\caption{
Minimax lower bound (top), lower bound of the probability of success of the MLE algorithm (middle), and lower bound of the probability of success of the polynomial-time algorithm (bottom) for chain graphs and star graphs.
}
\label{fig:appendix_illustration4}
\end{center}
\vskip -0.2in
\end{figure}

\end{document}